\algnewcommand\algorithmicinput{\textbf{Input:}}
\algnewcommand\algorithmicoutput{\textbf{Output:}}
\algnewcommand\INPUT{\item[\algorithmicinput]}
\algnewcommand\OUTPUT{\item[\algorithmicoutput]}
\def\@normalsize{\@setsize\normalsize{11pt}\xpt\@xpt}
\newcommand*{\KeepStyleUnderBrace}[1]{%f
\mathop{%
\mathchoice
{\underbrace{\displaystyle#1}}%
{\underbrace{\textstyle#1}}%
{\underbrace{\scriptstyle#1}}%
{\underbrace{\scriptscriptstyle#1}}%
}\limits
}
 \renewcommand\footnotemark{}
\theoremstyle{plain} 
\newtheorem{thm}{Theorem}[section]
\newtheorem{lem}{Lemma}
\theoremstyle{definition}
\newtheorem{prop}{Proposition}
\newtheorem{corollary}{Corollary}
\newtheorem{assumption}{Assumption}
\newtheorem{defn}{Definition}
\newtheorem{example}{Example}
\newtheorem{rmk}{Remark}
\def\caliF{\tF_{\textup{sgn}}}
\def\caliM{\tM_{\textup{sgn}}}
\def\mTheta{\bm{\Theta}}
\def\ma{\bm{a}}
\def\mb{\bm{b}}
\def\me{\bm{e}}
\def\bmu{\bm{u}}
\def\mv{\bm{v}}
\def\mA{\bm{A}}
\def\mB{\bm{B}}
\def\mC{\bm{C}}
\def\mE{\bm{E}}
\def\mI{\bm{I}}
\def\mI{\bm{I}}
\def\mM{\bm{M}}
\def\mS{\bm{S}}
\def\mU{\bm{U}}
\def\mV{\bm{V}}
\def\mW{\bm{W}}
\def\mX{\bm{X}}
\def\mY{\bm{Y}}
\def\mZ{\bm{Z}}
\def\mLambda{\bm{\Lambda}}
\def\mZ{\bm Z}
\def\mA{\bm A}
\def\mB{\bm B}
\def\mS{\bm S}
\def\mC{\bm C}
\def\mI{\bm I}
\def\mX{\bm X}
\def\mY{\bm Y}
\def\mM{\bm M}
\def\tB{\mathcal{B}}
\def\tF{\mathcal{F}}
\def\tH{\mathcal{H}}
\def\tI{\mathcal{I}}
\def\tM{\mathcal{M}}
\def\tN{\mathcal{N}}
\def\tO{\mathcal{O}}
\def\tX{\mathcal{X}}
\def\tY{\mathcal{Y}}
\def\entry#1{\llbracket #1 \rrbracket}
\def\entry#1{\llbracket #1 \rrbracket}
\newcommand{\norm}[1]{\left\lVert#1\right\rVert}
\newcommand{\Fnorm}[1]{\left\lVert#1\right\rVert_F}
\newcommand{\newnormSize}[2]{#1\lVert#2#1\rVert}
\newcommand{\onenormSize}[2]{#1\lVert#2#1\rVert_1}
\newcommand{\vnormSize}[2]{#1\lVert#2#1\rVert_2}
\newcommand{\FnormSize}[2]{#1\lVert#2#1\rVert_F} 
\newcommand{\mnormSize}[2]{#1\lVert#2#1\rVert_\infty}
\DeclareMathOperator*{\argmin}{arg\,min}
\def\@normalsize{\@setsize\normalsize{11pt}\xpt\@xpt}
\def\spacingset#1{\renewcommand{\baselinestretch}%
{#1}\small\normalsize} \spacingset{1}
\def\sign{\textup{sgn}}
\def\bayesS{S_{\textup{bayes}}}
\def\bayespif{f_{\textup{bayes},\pi}}
\def\srank{\textup{srank}}
\def\rank{\textup{rank}}
\def\supp{\textup{supp}}
\def\risk{\textup{Risk}_\pi}
\def\erisk{\widehat{\textup{Risk}}_\pi}
 \def\caliF{\tF_{\textup{sgn}}}
\def\caliM{\tM_{\textup{sgn}}}
\def\shift{\bar Y_\pi}
\def\riskF{\textup{Risk}_{\pi,F}}
\def\eriskF{\widehat{\textup{Risk}}_{\pi,F}}
\def\CNN{\text{\bf \small CNN}}
\def\Lasso{\text{\bf \small LogisticV}}
\def\NonparaM{\text{\bf \footnotesize ASSIST}}
\def\LogisticM{\text{\bf \small LogisticM}}  
\def\HardImpute{\text{\bf \small HardImpute}}  
\def\SoftImpute{\text{\bf \small SoftImpute}}  
\def\ALT{\text{\bf \small ALT}}
\title{Nonparametric Trace Regression in High Dimensions via \\Sign Series Representation}
\author[1]{Chanwoo Lee}
\author[2]{Lexin Li}
\author[3]{Hao Helen Zhang}
\author[1]{Miaoyan Wang$^*$\footnote{$^*$corresponding author: miaoyan.wang@wisc.edu. }}
\affil[1]{Department of Statistics, University of Wisconsin--Madison}
\affil[2]{Division of Biostatistics, University of California--Berkley}
\affil[3]{Department of Mathematics, University of Arizona}
\date{}
\begin{document}
\maketitle

\begin{abstract}
Learning of matrix-valued data has recently surged in a range of scientific and business applications. Trace regression is a widely used method to model effects of matrix predictors and has shown great success in matrix learning. However, nearly all existing trace regression solutions rely on two assumptions: (i) a known functional form of the conditional mean, and (ii) a global low-rank structure in the entire range of the regression function, both of which may be violated in practice. In this article, we relax these assumptions by developing a general framework for nonparametric trace regression models via structured sign series representations of high dimensional functions. The new model embraces both linear and nonlinear trace effects, and enjoys rank invariance to order-preserving transformations of the response. In the context of matrix completion, our framework leads to a substantially richer model based on what we coin as the ``sign rank'' of a matrix. We show that the sign series can be statistically characterized by weighted classification tasks. Based on this connection, we propose a learning reduction approach to learn the regression model via a series of classifiers, and develop a parallelable computation algorithm to implement sign series aggregations. We establish the excess risk bounds, estimation error rates, and sample complexities. Our proposal provides a broad nonparametric paradigm to many important matrix learning problems, including matrix regression, matrix completion, multi-task learning, and compressed sensing. We demonstrate the advantages of our method through simulations and two applications, one on brain connectivity study and the other on high-rank image completion. 
\end{abstract}

\section{Introduction}
\label{sec:intro}

Matrix-valued data are rising ubiquitously in modern data science applications, for instance, brain neuroimaging analysis, integrative genomics, and sensor network localization. Trace regression is one of the most commonly used approaches for modeling matrix data \citep{fan2019generalized,hamidi2019low}. The model characterizes the relationship between a scalar response $Y$ and a high dimensional matrix predictor $\mX \in \tX \subset \mathbb{R}^{d_1\times d_2}$ as 
\begin{equation}\label{eq:linear}
Y=\langle \mX, \mB \rangle+ \varepsilon,\ \text{with } \mB \in \mathbb{R}^{d_1\times d_2} \text{ and rank}(\mB)\leq r,
\end{equation}
where $\varepsilon$ is a zero-mean sub-Gaussian noise, and $r\in\mathbb{N}_{+}$ is the matrix rank typically assumed fixed and much smaller than $\min(d_1,d_2)$. The function $\mX\mapsto \langle \mX,\mB\rangle=\text{tr}(\mX\mB^T)$ is called the trace effect, where $\text{tr}(\cdot)$ denotes the matrix trace. Over the last decade, the low-rank trace regression \eqref{eq:linear} has been studied intensively in numerous contexts, including matrix predictor regression, matrix completion, multi-task learning, and compressed sensing.
\begin{itemize}
\item{\bf Matrix predictor regression.} Linear trace regression~\eqref{eq:linear} was first proposed to model a matrix-valued predictor \citep{zhou2014regularized, wang2014network}, and was later generalized to model an exponential family response with a known link function \citep{wang2017generalized, fan2019generalized}. 
\smallskip

\item {\bf Matrix completion.} In addition to the usual regression setting, another application of trace regression \eqref{eq:linear} is matrix completion, where the goal is to fill in the missing entries of a partially observed matrix \citep{Cai2016}. Suppose the predictor space $\tX$ consists of basis matrices $\ma_i\mb^T_j$ in $\mathbb{R}^{d_1\times d_2}$, with $\ma_i\in\mathbb{R}^{d_1}$ (respectively, $\mb_j\in\mathbb{R}^{d_2}$) being the basis vector with 1 at the $i$-th (respectively, $j$th) position and 0 elsewhere. Let $\mathbb{P}_{\mX}$ be a uniform distribution over $\tX$. Then model \eqref{eq:linear} reduces to a matrix completion problem, $Y_{ij}=\langle \ma_i \mb^T_j,\mB \rangle +\varepsilon_{ij}= B_{ij}+\varepsilon_{ij}$, where $Y_{ij}, B_{ij}\in\mathbb{R}$ denotes the $(i,j)$-th entry of the data matrix $\mY$ and the signal matrix $\mB$, respectively, for $(i,j) \in \Omega\subset \{1,\ldots,d_1\}\times\{1,\ldots,d_2\}$ in the observed index set. Moreover, the model becomes a matrix denosing problem~\citep{Ma2016} when the observation set is complete, i.e, $\Omega=\{1,\ldots,d_1\}\times\{1,\ldots,d_2\}$. 
\smallskip

\item {\bf Multi-task learning.} Another application of trace regression is multi-task learning, where the goal is to predict one task response by leveraging the structural similarities among multiple tasks. Here the predictor space $\tX$ consists of only matrices that have a single non-zero row. The multi-task problem collects $n$ observations from $d_1$ different supervised learning tasks. Each task is modeled as a linear regression with an unknown $d_2$-dimensional parameter $\mb_i, i=1,\ldots,d_1$, and the collection of $\mb_i$ forms the rows of $\mB$. The model exploits similarities among multiple tasks to predict the response of the $i$-th task \citep{caruana1997multitask,fan2019generalized}. 
\smallskip

\item {\bf Compressed sensing.} Compressed sensing is also a special application of trace regression, where the goal is to recover the structured matrix $\mB$ from multiple linear combinations of the entry observations. The space $\tX$ is the family of measurement matrices given the sampling schemes. For example, Gaussian ensembles use random matrices $\mX$ with i.i.d.\ entries from a standard normal distribution \citep{candes2011tight}, while factorized ensembles use rank-1 matrices $\mX=\bmu\mv^T$ for two random vectors $\bmu\in\mathbb{R}^{d_1}, \mv\in\mathbb{R}^{d_2}$ \citep{recht2010guaranteed}.
\end{itemize}

\noindent
In this article, we propose and study a nonparametric extension of the trace regression model \eqref{eq:linear}, which encompasses all above matrix learning problems. Particularly, we illustrate our method with two common problems, i.e., matrix predictor regression and matrix completion.

%%%%%%%%%%%%%%%%%%%%%%%%%%%%%%%%%%%%%%%%%%%%%%
\subsection{Inadequacy of low-rank trace regression}
\label{sec:limit}

The existing trace regression model \eqref{eq:linear} and its variants rely on two key assumptions: the relationship between $\mathbb{E}(Y|\mX)$ and the trace effect is known a priori through some link function, and the matrix effect is encoded by a global low-rank matrix $\mB$ in the entire function range. However, despite the popularity of trace regressions, these assumptions are stringent and may often be violated in practice. Next, we use two examples to illustrate the limitations of the classical low-rank trace regression. We present the pitfall in the context of matrix completion, and similar phenomena also occur in general matrix predictor regression. 

In the first example, we show the sensitivity of low-rank matrix models to order-preserving transformations. Let $\mB=\mU^T\mV \in \mathbb{R}^{d\times d}$ be a rank-5 matrix, where $\mU, \mV \in\mathbb{R}^{d\times 5}$ consists of i.i.d. standard normal entries and $d=50$. Now suppose a monotonic transformation $g(b)=(1+\exp(-cb))^{-1}$ is applied to $\mB$ entry-wise, and we let $g(\mB)$ be the signal matrix prior to measurements. A small $c$ implies an approximate linearity $b\mapsto -cb$, whereas a large $c$ implies a high nonlinearity $b\mapsto \{0,1\}$. Fig~\ref{fig:limit}(a) shows that the numerical rank of $g(\mB)$ increases rapidly with $c$, rendering the classical low-rank model ineffective. In genomic signal processing and other applications, the matrix of interest often undergoes unknown transformation prior to measurements. The sensitivity makes low-rank models less desirable as the global low-rank structure fails to be preserved through monotonic transformations.

In the second example, we show the failure of the classical low-rank model in representing a structured but high-rank effect. We again consider the matrix completion for simplicity, but this time, from a full-rank signal matrix $\mB\in\mathbb{R}^{d\times d}$, where the $(i,j)$-th entry is $\log(1+\max(i,j)/d)$ and $d=10$. Fig~\ref{fig:limit}(b) shows that $\mB$ is clearly structured, but is of full-rank that $\text{rank}(\mB)=d$. The classical low-rank model is again ineffective in this case. 

\begin{figure}
\begin{center}
\includegraphics[width=.8\textwidth]{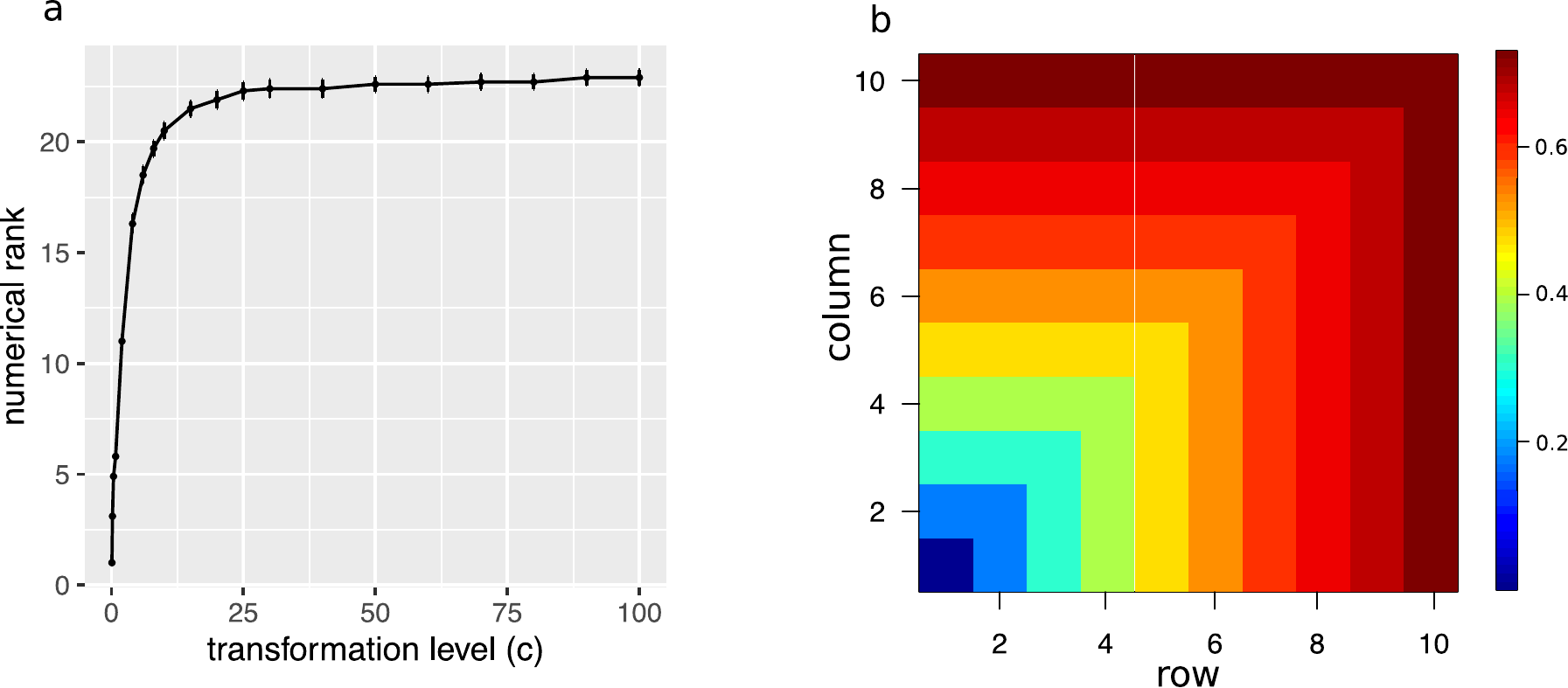}
\caption{Two examples of high-rank matrix trace models. (a) The numerical rank of the matrix $g(\mB)$ versus $c$ in the transformation, where the numerical rank is defined by $\rank(g(\mB))=\min\{\rank(\mC )\colon \FnormSize{}{\mC-g(\mB)} \leq 0.01 \FnormSize{}{g(\mB)} \}$. The error bar represents standard errors from 10 realizations of $\mB$. (b) Heatmap of a full-rank matrix $\mB\in\mathbb{R}^{d\times d}$ with the $(i,j)$-th entry equal to $\log(1+\max(i,j))$. In (a), $d=50$, and in (b), $d=10$.} 
\label{fig:limit}
\label{penG}
\end{center}
\end{figure}

These examples reveal the inadequacy of the conventional low-rank trace model~\eqref{eq:linear} in capturing important yet complex matrix effects. This has motivated us to develop a flexible class of nonparametric trace regression for modeling and estimating nonlinear, local, and possibly high-rank effects for high dimensional matrices. We later revisit these two examples in Section~\ref{sec:idea}, and show how those limitations can be overcome using a richer class of matrix models based on a new concept what we coin as the matrix ``sign rank''.

%%%%%%%%%%%%%%%%%%%%%%%%%%%%%%%%%%%%%%%%%%%%%%
\subsection{Our proposal and contributions}

In this article, we first propose a new notion of low-rank sign representable function, then develop a flexible class of nonparametric trace regression models based on this representation, as well as relevant theory and computational algorithms. Our proposal makes useful contributions on multiple fronts. 

First, the proposed work fills a crucial gap between a global parametric model and a local nonparametric model in the literature of matrix modeling. We develop a new nonparametric regression paradigm -- structured sign representations -- to address the challenges previously difficult or infeasible in trace regressions, especially in the high dimensional regime where $d_1d_2\gg n$. Existing literature on matrix regressions almost exclusively focuses on low-rank trace effects in the global scale. However, such a premise often fails, where the rank of global effects may grow with the matrix dimension. By contrast, our proposed model enjoys rank invariance under monotonic transformations, and permits both low-rank and high-rank effects through aggregations of sign representation functions. We show that the low-rank sign functions not only preserve all information for conventional low-rank models, but also provide powerful tools for extracting nonlinear, high-rank trace effects and estimating them accurately. Our framework is flexible and applicable to high-rank matrix learning problems, and it greatly expands the horizon of conventional low-rank matrix models.

Second, we show that the sign function series can be statistically characterized by classification tasks with carefully specified weights. This characterization converts a complex and hard regression problem, \emph{``what is the value of the nonparametric regression function?''} to a series of simpler and easier classification problems, \emph{``does the regression function fall below a threshold?''} Correspondingly, we develop a learning reduction approach to estimate the regression function via a series of classifiers, by leveraging classification solutions from existing state-of-art computational algorithms. Theoretically, we establish the excess risk bounds, estimation error rates, and sample complexities. Particularly, our error bound reveals the well-controlled complexity from sign estimation to regression, where 
\vspace{-0.01in}
\begin{align*}
\text{sign function error }& \lesssim \KeepStyleUnderBrace{t_n^{\alpha/( 2+\alpha)}}_{\text{classification error}},\\
\text{regression error } & \lesssim  \KeepStyleUnderBrace{t_n^{\alpha/(2+\alpha)}\log H}_{\text{estimation error inherited from classification}}+\KeepStyleUnderBrace{\textstyle{1\over H}}_{\text{reduction bias}}+\KeepStyleUnderBrace{t_nH\log H}_{\text{reduction variance}},
\end{align*}
in which $\alpha\geq 0$ quantifies the smoothness of the nonparametric regression function, $H\in\mathbb{N}_{+}$ is a resolution parameter that specifies the total number ($2H+1$) of sign functions to aggregate in our algorithm, $t_n=t_n(d,n)\to 0$ quantifies the convergence rate depending on the specific model, and $d=d_1=d_2$ for simplicity. In particular, we establish $t_n\asymp n^{-1}\log d$ under a two-way sparse non-parametric trace regression model (see Section~\ref{sec:sparse}), and $t_n \asymp n^{-1}d$ under a low sign rank non-parametric matrix completion model (see Section~\ref{sec:matrixcompletion}). These results imply that a low sample complexity with respect to the matrix dimension. Note that the sign function estimation reaches a faster $\tO(n^{-1})$ rate compared to the $\tO(n^{-1/2})$ regression rate when $\alpha= \infty$, which confirms our premise that sign estimation is easier than regression. To our knowledge, these statistical guarantees are among the first for the learning reduction approach in the context of nonparametric matrix regression. 

Lastly, we develop an alternating direction method of multipliers (ADMM) algorithm for optimization with a family of large-margin loss functions. From the computational and learning perspectives, the proposed method can be characterized as the {\bf \small  A}ggregation of {\bf \small  S}tructured {\bf \small  SI}gn {\bf \small  S}eries for {\bf \small T}race regression ({\bf \footnotesize ASSIST}). We show that the \NonparaM\ algorithm leverages recent advances in large-margin solvers as well as non-convex optimization for low-rank, two-way sparse matrix learning. As demonstrated in our simulations and real data applications, the \NonparaM\ method contributes a new matrix modeling tool of easy interpretability and accurate prediction.

%%%%%%%%%%%%%%%%%%%%%%%%%%%%%%%%%%%%%%%%%%%%%%
\subsection{Related work}

Nonparametric learning for matrix data is much more challenging than standard multivariate data. Naively turning a matrix into a vector followed by a classical vector based nonparametric method can destroy rich structural information encoded in the matrix data. Moreover, most nonparametric methods rely on some notion of smoothness in a local neighborhood of the predictors. In the context of matrix regressions, however, the predictor space is huge, rendering the ``local smoothness'' assumption less practical, which is partially why the topic is barely explored by data with a limited sample size.  

Our work is related to but also clearly distinctive from several lines of existing research. The first line is the classical trace regression \citep{fan2019generalized,hamidi2019low}. The key difference is that the existing solutions all adopt a parametric model with a global low-rank structure. By contrast, our method is nonparametric and embraces nonlinear, local, and possibly high-rank effects for high dimensional matrices. 
 
The second line is the recent development of nonparametric methods with matrix-valued or tensor-valued data. In imaging analysis,  convolution neural networks (CNNs) have been widely adopted as a nonparametric tool for prediction given matrix-valued images \citep{goodfellow2016deep}. In contrast, our proposal studies not only prediction, but also estimation and interpretability, with the theoretical guarantees. We also numerically compare our method with CNNs. \cite{hao2019sparse} proposed a sparse additive model with tensor predictors by extending the usual spline basis functions. \cite{zhou2020broadcasted} studied tensor predictors and proposed a broadcasting operation to introduce nonlinearity to individual tensor entries. Our nonparametric solution has broader implications than those approaches in estimating local low-rank effects. Our sign series representation of function bridges the gap between regression and classification in high dimensions, and naturally lends the problem to a learning reduction type solution. Moreover, although a matrix can be viewed as a two-dimensional tensor, the problem of nonparametric learning for matrix data itself is more parsimonious and deserves a full investigation. We leave the counterpart problem for nonparametric tensor regression as future research. 

The third line is function sign estimation, which is in turn related to classification, or more generally, the level set estimation. The latter problem has a long history in statistics \citep{tsybakov1997nonparametric} and computational mathematics \citep{gibou2018review}. Particularly, \cite{wang2008probability} proposed a conditional probability estimation method based on support vector machines (SVMs), but their results were restricted to a fixed number of features and vector predictors only. \cite{singh2009adaptive} proposed a tree based method for multiple sets extraction, but their goal was level set estimation instead of function estimation. None of these methods address the regression problem or high dimensional matrix predictors. By contrast, we bridge the problems of level set estimation and nonparametric regression using low-rank sign series representations. Instead of constructing a point-wise function in the domain space, the sign representation partitions the domain space based on the function range. The benefit bears the analogy of Lebesgue versus Riemann integrals in functional analysis, in the sense that the neighborhood is determined by the range space instead of the domain space. The former approach is especially appealing for matrix regressions, where the range space is determined by a simple scalar response, whereas the domain space is huge and high dimensional.

%%%%%%%%%%%%%%%%%%%%%%%%%%%%%%%%%%%%%%%%%%%%%%
\subsection{Notation and organization}

We adopt the following notation throughout this article. Let $\tX \subset \mathbb{R}^{d_1\times d_2}$ denote the feature space equipped by some measure $\mathbb{P}_{\mX}$. For a function $f\colon \tX \to \mathbb{R}$, let $\sign f$ denote its sign function, i.e., $\sign f(\mX)=1$ if $f(\mX)>0$ and $\sign f(\mX)=-1$ otherwise. Let $\onenormSize{}{f}$ denote its $L_1$ norm, where we define $\onenormSize{}{f}=\mathbb{E}|f(\mX)|$ with the expectation taken with respect to $\mX \sim \mathbb{P}_{\mX}$. For a set $A \subset \tX$, let $\sign (\mX\in A)$ denote the sign function induced by $A$, i.e., a function taking value $1$ on the event $\{\mX\in A\}$ and $-1$ otherwise. Let $[n] = \{1,\ldots,n\}$, and $|\cdot|$ denote the cardinality. Let $\newnormSize{}{\cdot}_p$ denote the vector $p$-norm for $p\geq 0$. For a matrix $\mB \in \mathbb{R}^{d_1\times d_2}$, let $\mB_i$ denote its $i$-th row and $B_{ij}$ its $(i,j)$-th entry. Let $\newnormSize{}{\mB}_{p,q}$ denote the matrix $(p,q)$-norm such that $\newnormSize{}{\mB}_{p,q}=\newnormSize{}{\mb}_q$, where $\mb=(\newnormSize{}{\mB_1}_p,\ldots,\newnormSize{}{\mB_{d_1}}_p)^T\in\mathbb{R}^{d_1}$ consists of the $p$-norms for each row of $\mB$. In particular, let $\newnormSize{}{\mB}_{1,0}=|\{i\in [d_1]\colon \mB_i\neq 0\}|$ denote the number of non-zero rows in $\mB$. Let $\FnormSize{}{\mB}=\sqrt{\langle \mB, \mB \rangle}$ denote the matrix Frobenius norm, and $\mnormSize{}{\mB}=\max_{(i,j)}|B_{ij}|$ the matrix maximum norm. Denote $a_n\asymp b_n$ if $c_1\leq \lim_{n\to \infty} a_n/b_n\leq c_2$ for some constants $c_1,c_2>0$, and denote $a_n\lesssim b_n$ if $\lim_{n\to\infty} a_n/b_n\leq c$ for some constant $c\geq 0$. Let $\tO(\cdot)$ denote the big-O notation, $\tilde \tO(\cdot)$ the variant that hides the logarithmic factors, and $\mathds{1}(\cdot)$ the indicator function. Whenever applicable, the basic arithmetic operators are applied to a matrix in an element-wise manner. 

The rest of the article is organized as follows. Section \ref{sec:idea} presents the low-rank sign representable functions and our nonparametric trace regression model. Section \ref{sec:bridge} develops the learning reduction approach through weighted classifications, and establishes the corresponding statistical guarantees. Section \ref{sec:examples} specializes the general theory to two concrete learning problems, the low-rank sparse matrix predictor regression and the high-rank matrix completion. Section \ref{sec:estimation} studies the large-margin based estimation and develops an optimization algorithm. Section \ref{sec:simulation} presents the simulations, and Section \ref{sec:realdata} two real data applications. Section \ref{sec:discussion} concludes with a discussion. All technical proofs and additional results are relegated to the Supplementary Appendix.

%%%%%%%%%%%%%%%%%%%%%%%%%%%%%%%%%%%%%%%%%%%%%%
\section{Nonparametric trace regression model}
\label{sec:idea}

In this section, we present our nonparametric trace regression model. Let $\mX\in\tX\subset \mathbb{R}^{d_1\times d_2}$ denote the matrix predictor, $Y\in\mathbb{R}$ the scalar response, and $\mathbb{P}_{\mX,Y}$ the joint probability distribution. We consider the model,
\begin{equation}\label{eq:model}
Y=f(\mX)+\varepsilon,
\end{equation}
where $f\colon\tX\mapsto \mathbb{R}$ is an unknown regression function of interest, and $\varepsilon$ is a mean-zero noise. For a cleaner exposition, we assume the noise is bounded and the range of $Y$ is in $[-1,1]$; the extension to a sub-Gaussian noise is provided in Section~\ref{sec:sub-Gaussian} of the Appendix. In addition, we allow a heterogeneous noise such that $\varepsilon$ may depend on $\mX$. Model \eqref{eq:model} therefore incorporates both continuous and binary-valued responses. For instance, we allow the binary regression problem where $Y$ is a $\{0,1\}$-label from a Bernoulli distribution, in which case, the noise variance depends on the mean, and $f$ represents the conditional probability, $f(\mX)=\mathbb{P}(Y=1|\mX)$. Our goal is to estimate the regression function $f(\mX)=\mathbb{E}(Y|\mX)$ based on $n$ i.i.d.\ training samples $(\mX_i,Y_i)_{i=1,\ldots,n}$. 

We next introduce the notion of low-rank sign representable function, which is essential to bridge the usual global low-rank trace models to nonparametric local low-rank trace models. 

\begin{defn}[Rank-$r$ sign representable function] \label{def:caliF}
A function $f\colon \tX\mapsto[-1,1]$ is called $(r,\pi)$-sign representable, for a given level $\pi\in[-1,1]$ and a rank $r \in \mathbb{N}_{+}$, if the function $(f-\pi)$ has the same sign as a rank-$r$ trace function; that is,
\begin{equation} \label{eq:sign}
\sign(f(\mX)-\pi) = \sign(\langle \mX, \mB \rangle+b),\quad \text{for all }\mX\in\tX,
\end{equation}
where $\mB=\mB(\pi)$ is a rank-$r$ matrix, and $b=b(\pi)$ is the intercept. A function $f$ is called globally rank-$r$ sign representable, if $f$ is $(r,\pi)$-sign representable for all $\pi\in[-1,1]$. Let $\caliF(r)$ denote the rank-$r$ sign representable function family, and let $\Phi(r)=\{\phi\colon \mX\mapsto \langle \mX, \mB \rangle+b\ \big|\ \text{rank}(\mB)\leq r, (\mB,b)\in\mathbb{R}^{d_1\times d_2}\times\mathbb{R}\}$ denote the rank-$r$ trace function family.
\end{defn}

Next, we show that \eqref{eq:model} and \eqref{eq:sign} together form a very general family of models that incorporate most existing matrix regression models, including the low-rank trace regression, single index models, and high-rank matrix completion model. 

\begin{example}[Generalized trace regression] The linear and generalized trace regression \citep{zhou2014regularized, wang2017generalized, fan2019generalized} imposes that $f(\mX)=g(\langle \mX, \mB \rangle)$ with a known link function $g$ and a rank-$r$ coefficient matrix $\mB$. By definition, $\sign(f(\mX)-\pi)=\sign(\langle \mX, \mB \rangle -g^{-1}(\pi))$ holds for every $\pi$ in the function range. Therefore, our model includes the generalized trace regression, i.e, $f \in \caliF(r)$. In particular, the usual trace model corresponds to the identity link $g$. More generally, any monotonic $g$ is allowed as the link function, e.g., the logistic function $g(z)=(1+\exp(-z))^{-1}$, the arctangent function $g(z)={1/\pi}\arctan(z)+{1/2}$, the rectified linear unit (ReLU) function $g(z)=\max(0,z)$, and any inverse cumulative distribution function. 
\end{example}

\begin{example}[Single index regression model] 
The monotonic matrix predictor single index model \citep{balabdaoui2019least,ganti2017learning} assumes a similar form of regression function $f(\mX)=g^{}(\langle \mX, \mB\rangle)$ with a low-rank $\mB$ and a monotonic $g$, but the form of $g$ is unknown. By definition, our model family $\caliF(r)$ incorporates the single index model and does not require to know $g$ a priori. 
\end{example}

\begin{example}[Multivariate normal mixture]
The prospective model from matrix linear discriminant analysis \citep{hu2020matrix} considers a binary response $Y=\{0,1\}$, and assumes the matrix $\mX|Y$ follows a Gaussian mixture distribution, $\mX|\{Y=i\} = \mB_0 + \mB\times i + \mE_i$, $i=0,1$, where $\mB_0$ is an arbitrary baseline matrix, $\mB$ is a rank-$r$ matrix, and $(\mE_i)_{i=0,1}$ are two mutually independent noise matrices with i.i.d.\ standard normal entries. Our model incorporates this model, by noting that $f(\mX)=\mathbb{E}(Y|\mX)=\text{logistic}(\langle \mB, \mX \rangle+b)$ for some $b\in\mathbb{R}$, and thus $f\in \caliF(r)$. 
\end{example}

Definition \ref{def:caliF} leads to another notion, the matrix sign rank, which is important for applying our proposed model for matrix completion as a special nonparametric trace regression. Specifically, for a given matrix $\mTheta\in\mathbb{R}^{d_1\times d_2}$, define its sign rank as: 
\begin{equation*}
\srank(\mTheta)=\min\big\{ \rank(\mTheta')\colon \sign(\mTheta')=\sign(\mTheta),\  \mTheta'\in\mathbb{R}^{d_1\times d_2} \big\}.
\end{equation*}
This concept is important in areas such as combinatorics \citep{cohn2013fast} and quantum mechanics \citep{de2003nondeterministic}, and, to our knowledge, we are the first to exploit this notion for nonparametric learning. To better understand its relation to the proposed nonparametric trace regression, we consider model~\eqref{eq:sign} with the predictor space $\tX = \{\ma_i\mb_j^T\colon (i,j)\in[d_1]\times[d_2]\}$, and $\ma_i\in\mathbb{R}^{d_1}, \mb_j\in\mathbb{R}^{d_2}$ are the basis vectors. For matrix completion, a function $f$ over $\tX$ is equivalently represented by a $d_1$-by-$d_2$ signal matrix $\mTheta=\entry{f(\me_i\me_j^T)}$. Our proposed function family $\caliF(r)$ essentially defines a new family of structured matrices with a low sign rank, as shown in the next proposition. 

\begin{prop}[Sign-representable function over basis matrices]\label{prop:signbasis} Consider the predictor space $\tX=\{\ma_i\mb_j^T \colon (i,j)\in[d_1]\times[d_2]\}$. We represent a bounded function $f\colon \tX\to [-1,1]$ by its function values organized as a matrix $\mTheta=\entry{f(\ma_i\mb_j^T)} \in [-1,1]^{d_1\times d_2}$, for basis vectors $\ma_i\in\mathbb{R}^{d_1}, \mb_j\in\mathbb{R}^{d_2}$. If $f$ is rank-$r$ sign representable, then $\max_{\pi\in[-1,1]}\srank(\mTheta-\pi)\leq r+1$ (the constant 1 is due to the intercept in~\eqref{eq:sign}). Conversely, if $\max_{\pi\in[-1,1]}\srank(\mTheta-\pi)\leq r$, then $\mTheta$ defines a rank-$r$ sign representable function $f$. 
\end{prop}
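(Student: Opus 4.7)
The plan is to unpack Definition~\ref{def:caliF} on the specific predictor space $\tX=\{\ma_i\mb_j^T\}$, exploit the identity $\langle \ma_i\mb_j^T,\mB\rangle=B_{ij}$, and convert the pointwise sign identity into a single matrix-level sign identity. Both directions then become short linear-algebra observations, with the only delicate point being where the ``$+1$'' appears in the forward direction.

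For the forward direction, fix $\pi\in[-1,1]$ and let $(\mB(\pi),b(\pi))$ be the rank-$r$ trace function witnessing~\eqref{eq:sign}. Evaluating on $\mX=\ma_i\mb_j^T$ and using $\langle \ma_i\mb_j^T,\mB(\pi)\rangle=\mB(\pi)_{ij}$, I would rewrite the defining identity as the entrywise relation
\begin{equation*}
\sign\big(\mTheta_{ij}-\pi\big)=\sign\big(\mB(\pi)_{ij}+b(\pi)\big),\qquad (i,j)\in[d_1]\times[d_2],
\end{equation*}
or equivalently, at the matrix level, $\sign(\mTheta-\pi)=\sign(\mB(\pi)+b(\pi)\mJ)$, where $\mJ$ is the all-ones matrix. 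Since $\rank(\mB(\pi)+b(\pi)\mJ)\le \rank(\mB(\pi))+\rank(b(\pi)\mJ)\le r+1$, the definition of sign rank yields $\srank(\mTheta-\pi)\le r+1$. Taking the maximum over $\pi$ gives the claimed bound. The ``+1'' is forced by the intercept $b(\pi)\mJ$; without it, the bound would be $r$.

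For the converse, suppose $\max_{\pi}\srank(\mTheta-\pi)\le r$. For each $\pi\in[-1,1]$, pick a rank-$r$ matrix $\mB(\pi)\in\mathbb{R}^{d_1\times d_2}$ with $\sign(\mB(\pi))=\sign(\mTheta-\pi)$, and set $b(\pi)=0$. Evaluating on $\mX=\ma_i\mb_j^T$ again gives
\begin{equation*}
\sign\big(\langle \mX,\mB(\pi)\rangle+b(\pi)\big)=\sign\big(\mB(\pi)_{ij}\big)=\sign\big(\mTheta_{ij}-\pi\big)=\sign\big(f(\mX)-\pi\big),
\end{equation*}
which is precisely condition~\eqref{eq:sign} with a rank-$r$ matrix. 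Since $\pi$ was arbitrary, $f\in\caliF(r)$.

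There is no genuine obstacle, only one subtlety worth flagging in the write-up: the asymmetry in the two bounds ($r{+}1$ vs.\ $r$) is not a looseness in the argument but a real feature of allowing an intercept in~\eqref{eq:sign}. I would make this explicit by noting that if the definition of sign-representability were restricted to $b=0$, the converse would go through with equality and the forward direction would give $\srank(\mTheta-\pi)\le r$; the presence of the intercept inflates the matrix rank of the witness by at most one via the rank-one term $b(\pi)\mJ$.
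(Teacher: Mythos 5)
Your proof is correct and takes essentially the only natural route, which is the one the paper intends: evaluate the trace function on the basis matrices so that $\langle \ma_i\mb_j^T,\mB(\pi)\rangle$ is the $(i,j)$ entry of $\mB(\pi)$, absorb the intercept into a rank-one all-ones perturbation to get the forward bound $r+1$ via rank subadditivity, and take $b(\pi)=0$ with a rank-$r$ sign witness of $\mTheta-\pi$ for the converse. Your closing remark that the $r{+}1$ versus $r$ asymmetry is exactly the intercept's rank-one contribution matches the parenthetical note in the statement itself, so nothing is missing.
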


Define the sign-$r$ representable family for the signal matrix in matrix completion.  
\begin{align*}
\caliM(r)=\{\mTheta\colon \max_{\pi\in[-1,1]}\srank(\mTheta-\pi)\leq r, \ \mnormSize{}{\mTheta}\leq 1\}.
\end{align*}
The family $\caliM(r)$ is a special case of the function family $\caliF(r)$ in Definition \ref{def:caliF} with $b=0$ and the predictor space $\tX=\{\ma_i\mb_j^T\colon (i,j)\in[d_1]\times[d_2]\}$. We next further compare the sign rank with the matrix rank in this setting. 

\begin{prop}[Sign-rank vs. matrix rank]\label{prop:signrank} Consider the setting in Proposition \ref{prop:signbasis}. Then,
\begin{enumerate}[label=(\alph*)]
\item $\max_{\pi\in[-1,1]}\srank(\mTheta-\pi)\leq \rank(\mTheta)+1$.
\item If $\mTheta \in \caliM(r)$, then $g(\mTheta)/\mnormSize{}{g(\mTheta)}\in\caliM(r+1)$ for any strictly monotonic function $g\colon \mathbb{R}\to\mathbb{R}$. Here $g(\mTheta)$ denotes the matrix by applying $g(\cdot)$ to $\mTheta$ entry-wise. 
\item For every dimension $d$, there exists a $d$-by-$d$ matrix $\mTheta\in\caliM(2)$ such that $\rank(\mTheta)= d$.  
\end{enumerate}
\end{prop}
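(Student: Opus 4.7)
For part (a), the approach is a one-line rank subadditivity argument: the matrix $\mTheta - \pi\mathbf{J}$ (with $\mathbf{J}$ the all-ones matrix) trivially realizes its own sign pattern, so $\srank(\mTheta - \pi) \leq \rank(\mTheta - \pi\mathbf{J}) \leq \rank(\mTheta) + \rank(\pi\mathbf{J}) = \rank(\mTheta) + 1$, and taking the supremum over $\pi \in [-1,1]$ finishes the claim.

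For part (b), I would fix $\pi \in [-1,1]$, set $c = \pi\mnormSize{}{g(\mTheta)}$, and note that $\sign(\mTheta' - \pi) = \sign(g(\mTheta) - c\mathbf{J})$ where $\mTheta' = g(\mTheta)/\mnormSize{}{g(\mTheta)}$. The natural case split is on whether $c$ lies in the image $g([-1,1])$. If not, every entry of $g(\mTheta) - c\mathbf{J}$ has the same sign and $\srank \leq 1$. Otherwise, let $t = g^{-1}(c) \in [-1,1]$. When $g$ is strictly increasing, one has the identity $\sign(g(\mTheta) - c\mathbf{J}) = \sign(\mTheta - t\mathbf{J})$ and the hypothesis $\mTheta \in \caliM(r)$ yields $\srank \leq r$ directly. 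The harder case is $g$ strictly decreasing, where the target pattern is $\sign(t\mathbf{J} - \mTheta)$; naive negation of a rank-$r$ realizer of $\sign(\mTheta - t\mathbf{J})$ fails on entries equal to zero because of the asymmetric convention $\sign(0) = -1$. I would repair this by picking $t^- = \max\{\mTheta_{ij} : \mTheta_{ij} < t\} \in [-1,t)$ (if no such entries exist, the pattern is constant and $\srank \leq 1$), taking a rank-$r$ realizer $\mTheta''$ of $\sign(\mTheta - t^-\mathbf{J})$, and forming $\mTheta''' = -\mTheta'' + \epsilon\mathbf{J}$ for $\epsilon > 0$ smaller than the smallest strictly positive entry of $\mTheta''$. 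A direct check shows $\sign(\mTheta''') = \sign(t\mathbf{J} - \mTheta)$, and $\rank(\mTheta''') \leq r+1$; the extra ``$+1$'' in $\caliM(r+1)$ is precisely the rank cost of this perturbation.

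For part (c), I would take the matrix $\mTheta_{ij} = \log(1 + \max(i,j)/d)$ from Figure~\ref{fig:limit}(b). The bound $\mnormSize{}{\mTheta} = \log 2 < 1$ is immediate. Rank $d$ follows from the staircase structure of consecutive row differences: for $k = 1,\ldots,d-1$, row $k+1$ minus row $k$ equals $\log((d+k+1)/(d+k))$ times the vector $(1,\ldots,1,0,\ldots,0)^T$ with $k$ leading ones; these $d-1$ vectors span the hyperplane $\{v \in \mathbb{R}^d : v_d = 0\}$, while row $1$ has last coordinate $\log 2 \neq 0$, so together they give $d$ linearly independent rows. To verify $\mTheta \in \caliM(2)$, observe that $\sign(\mTheta - \pi)$ is either constant (rank-$1$ realizable) or the L-shape $-1$ on $\{(i,j) : i \leq k\text{ and } j \leq k\}$ and $+1$ elsewhere, for some integer $k = k(\pi) \in \{1,\dots,d-1\}$. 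The L-shape is realized by the rank-$2$ matrix $\mM_{ij} = \beta_j - \alpha_i$ with $\alpha_i = 1$ for $i \leq k$, $\alpha_i = 1/10$ for $i > k$, $\beta_j = 1/2$ for $j \leq k$, $\beta_j = 2$ for $j > k$, as can be verified block-by-block.

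The main obstacle is the strictly-decreasing case of part (b): the asymmetric sign convention forces both the threshold shift $t \mapsto t^-$ and the additive $\epsilon\mathbf{J}$ perturbation, and bounding the resulting rank increase by exactly $1$ is the source of ``$+1$'' in $\caliM(r+1)$. Parts (a) and (c) are otherwise routine once the right witness matrices are identified.
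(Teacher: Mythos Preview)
Your proof is correct. For part (c), your argument coincides with the paper's (given as Example~6 in the appendix): the same matrix $\mTheta_{ij}=\log(1+\max(i,j)/d)$ is used, full rank is shown via consecutive row differences yielding a triangular system (the paper takes $\mTheta_d$ rather than $\mTheta_1$ as the transversal row, but the idea is identical), and $\srank\leq 2$ is shown by exhibiting the L-shaped sign pattern as a rank-$2$ block matrix---your explicit realization $\mM_{ij}=\beta_j-\alpha_i$ is a concrete version of the paper's observation that $\sign(\mTheta-\pi)$ is a two-block matrix. The paper does not supply separate proofs for (a) and (b); your subadditivity argument for (a) and your threshold-shift plus rank-one perturbation for (b) are exactly what is needed.

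One minor remark on (b): your case split assumes that if $c\notin g([-1,1])$ then the sign pattern is constant. If $g$ is allowed to be discontinuous, the image $g([-1,1])$ can have gaps and $c$ may lie in such a gap, in which case the pattern is not constant. The fix is trivial---since $\mTheta$ has finitely many entries, you can always take $t=\max\{\mTheta_{ij}:g(\mTheta_{ij})\leq c\}\in[-1,1]$ rather than $g^{-1}(c)$ and the rest of your argument goes through unchanged. Your handling of the strictly decreasing case (shifting $t\mapsto t^-$ to avoid zero entries, then negating and adding $\epsilon\mathbf{J}$) is precisely the right device to absorb the asymmetry of the convention $\sign(0)=-1$, and correctly accounts for the ``$+1$'' in $\caliM(r+1)$.
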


\noindent
Proposition \ref{prop:signrank} highlights the advantages of using the sign rank in the high dimensional matrix analysis. The first property implies that classical low-rank matrix model is a special case of our low sign rank model. The second property shows that, compared to the matrix rank, the sign rank remains nearly invariant under monotonic transformations, since $\srank(g(\mTheta)) \leq 1+\srank(\mTheta)$ for all monotonic functions $g$. The last property shows that the sign rank can be dramatically smaller than the conventional matrix rank. Therefore, our model $\caliM(r)$ is strictly richer than the usual low-rank model. 

A key advantage about the sign rank concept is that the low sign rank assumption is more relaxed and hence more realistic than the classical low matrix rank assumption. We next revisit the high-rank matrix model in Fig~\ref{fig:limit}(a) to show that $\mB$ is of a high matrix rank but a low sign rank. Meanwhile, we provide some additional examples of low sign rank matrices in Section~\ref{sec:signrank} of the Appendix, including matrices with repeating patterns \citep{chan2014consistent}, banded matrices, and the identity matrix.

\begin{example}[Single index model based matrix completion]
For the model in Fig~\ref{fig:limit}(a), $g(\mB)$ is a low sign rank matrix because $\srank(g(\mB)-\pi)\leq 1+\rank(\mB)=6$ for all $\pi$ in the function range. However, $g(\mB)$ itself is often high-rank as shown in Fig~\ref{fig:limit}(a).
\end{example}

\begin{example}[High-rank matrix completion model]\label{ex:high-rank}
For the model in Fig~\ref{fig:limit}(b), the matrix $\mB=\entry{\log(1+\max(i,j)/d)}$ is full-rank. Remarkably, this high-rank matrix belongs to our sign representable function with rank 2, i.e., $\mB\in \caliM(2)$. This is because $\srank(\mB-\pi)=\srank(\bar \mB)$, where $\bar \mB=\entry{\sign(\max(i,j)-e^\pi+1)}$ is a block matrix with rank at most 2. More generally, matrices of the type $\mB=\entry{g(\max(i,j)/d)}$ belong to $\caliM(2r)$, where $g(\cdot)$ is a polynomial of degree $r$. See Section~\ref{sec:signrank} of the Appendix.
\end{example}

Our proposed nonparametric matrix regression model $\caliF(r)$ therefore implies a new matrix completion model in $\caliM(r)$. In next sections, we establish the general theory for $\caliF(r)$ first, then specialize the results to the high-rank completion problems in Section~\ref{sec:matrixcompletion}.

%%%%%%%%%%%%%%%%%%%%%%%%%%%%%%%%%%%%%%%%%%%%%%
\section{From classification to regression: a learning reduction approach}
\label{sec:bridge}

In this section, we present a learning reduction approach to estimate $f$ from the model as specified in \eqref{eq:model} and \eqref{eq:sign}. Our main crux is to provably convert
the regression estimation problem into a series of sign function estimation problems, which are in turn solved by weighted classifications. 

More specifically, we dichotomize the response $Y_i$ into a series of binary observations, $\sign(Y_i-\pi)$, for $\pi\in\tH=\{-1,\ldots,-{1/H}, 0, {1/H}, \ldots,1\}$, where $H\in\mathbb{N}_{+}$ is a resolution parameter that controls the total number of sign functions to estimate. Then, for each $\pi$, we estimate the sign function $\sign(f-\pi)$ by performing a classification task, 
\begin{equation}\label{eq:proposal}
\hat \phi_\pi =\argmin_{\phi\in\Phi(r)}{1\over 2n}\sum_{i=1}^n\text{weighted-classification}(\sign(Y_i-\pi),\ \sign \phi(\mX_i)),
\end{equation}
where $\Phi(r)$ is the collection of rank-$r$ trace functions, and the weighted classification$(\cdot,\cdot)$ denotes a classification objective function with a response-specific weight to each sample point. The weight in the objective function is crucial in our method, and we will detail the form in next section. Our final regression function estimate takes the form, 
\begin{equation}\label{eq:stepfunction}
\hat f= {1\over 2H+1}\sum_{\pi \in \tH} \sign \hat \phi_\pi.
\end{equation}

\begin{figure}[t]
\includegraphics[width=\textwidth]{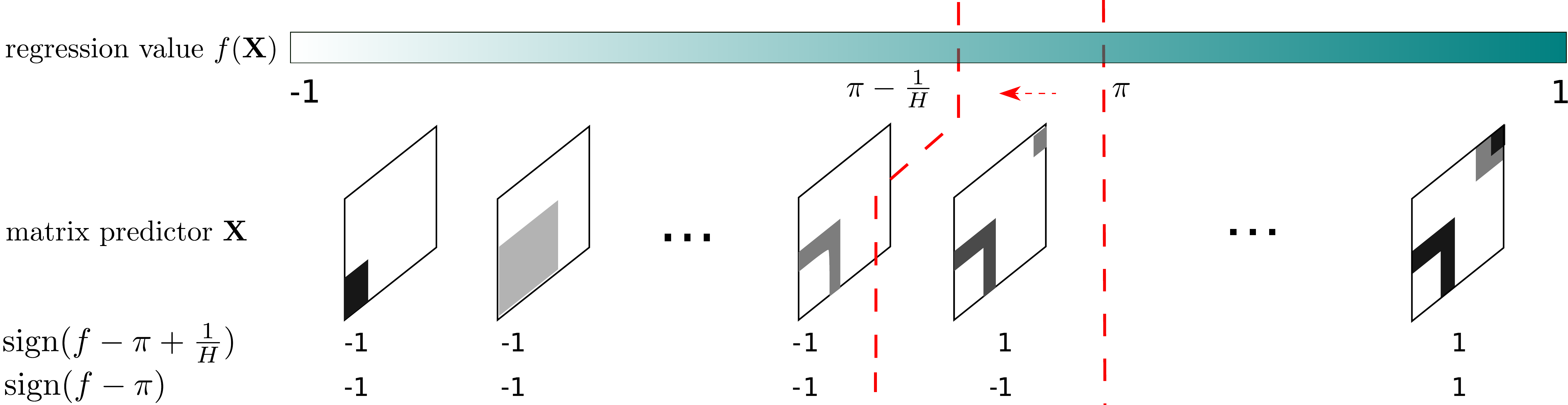}
\caption{Nonparametric matrix regression via sign function series estimation. We use a series of weighted classifications to estimate the sign functions, then obtain the regression function estimate via sign aggregations. Here, $\mX\in\tX$ denotes matrix-valued predictor, $f\colon \tX\to \mathbb{R}$ denotes regression function, and $\sign(f-\pi)\in\{-1,1\}$ is the sign function, where $\pi\in\{-1,\ldots,-1/H,0,1/H,\ldots, 1\}$ is the series of levels to aggregate in our algorithm.}
\label{fig:method}
\end{figure}

We comment that the $(2H+1)$ estimation tasks of the sign functions are fully separable, leading naturally to a parallel type computation. Moreover, the sign functions bridge the problems of level set estimation and Bayes classification, as we will detail in Section~\ref{sec:identifiability}. Fig~\ref{fig:method} illustrates our main idea graphically. We refer to our method as the {\bf \small A}ggegration of {\bf \small S}tructured {\bf \small SI}gn {\bf \small S}eries for {\bf \small T}race regression, and abbreviate it as \NonparaM.

Next, we describe the specific form of weighted classification, the uniqueness of the classification optimizer, as well as the accuracy guarantee of the estimator.

%%%%%%%%%%%%%%%%%%%%%%%%%%%%%%%%%%%%%%%%%%%%%%
\subsection{Statistical characterization of sign functions via weighted classification}

For a given level $\pi\in[-1,1]$, define the $\pi$-shifted response $\bar Y_{\pi,i} =Y_i-\pi$ for $i\in[n]$. We propose a weighted classification objective function in~\eqref{eq:proposal} using 
\begin{equation}\label{eq:loss}
L(\phi;(\mX_i,\bar Y_{\pi,i})_{i\in[n]})={1\over 2n}\sum_{i=1}^n\KeepStyleUnderBrace{|\bar Y_{\pi,i}|}_{\text{response-specific weight}}\times\KeepStyleUnderBrace{|\sign \bar Y_{\pi,i} - \sign \phi(\mX_i)|}_{\text{classification loss}},
\end{equation}
where $\phi\in \Phi(r)$ is the trace function to be optimized, and $|\bar Y_{\pi, i}|$ serves as the weight. Such a response-specific weight incorporates the magnitude information of the response into classification, in that the response values that are far away from the target level are penalized more heavily in the objective \eqref{eq:loss}. In the special case of a binary response $Y_i\in\{-1,1\}$ and target level $\pi=0$, the objective \eqref{eq:loss} reduces to the usual classification loss. 

Next, define the weighted classification risk, 
\begin{equation}\label{eq:constrained}
\risk(\phi)=\mathbb{E}L(\phi; (\mX_i,\bar Y_{\pi,i})_{i\in[n]}),
\end{equation}
where the expectation is taken with respect to the joint distribution of $(\mX_i,Y_i)$ i.i.d.\ from $\mathbb{P}_{\mX,Y}$. The next theorem quantifies the global optimum of \eqref{eq:constrained}. 

\begin{thm}[Global optimum of weighted classification risk]\label{thm:oracle} For any given level $\pi\in[-1,1]$, under the model specified in \eqref{eq:model} and \eqref{eq:sign}, for all functions $\bar f$ that have the same sign as $\sign(f-\pi)$, it holds that $\risk(\bar f) = \inf\{\risk(\phi)\colon \phi\in \Phi(r)\}$. 
\end{thm}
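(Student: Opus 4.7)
The plan is to reduce the minimization over $\Phi(r)$ to a pointwise sign-selection problem after conditioning on $\mX$, and then invoke the $(r,\pi)$-sign representability assumption \eqref{eq:sign} to realize the pointwise optimum inside $\Phi(r)$.

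First I would simplify $\risk$. Because $|\sign\bar Y_\pi - \sign\phi(\mX)|\in\{0,2\}$, the i.i.d.\ sampling collapses \eqref{eq:loss}--\eqref{eq:constrained} to $\risk(\phi)=\mathbb{E}\bigl[|\bar Y_\pi|\,\mathds{1}(\sign\bar Y_\pi\neq\sign\phi(\mX))\bigr]$, where $\bar Y_\pi=Y-\pi$. A case split on the sign of $\bar Y_\pi$ yields the pointwise identity $|\bar Y_\pi|\,\mathds{1}(\sign\bar Y_\pi\neq s)=\bar Y_\pi^{+}\mathds{1}(s=-1)+\bar Y_\pi^{-}\mathds{1}(s=+1)$ for any $s\in\{-1,+1\}$, where $\bar Y_\pi^{\pm}$ denote the positive and negative parts of $\bar Y_\pi$. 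Conditioning on $\mX$ then reduces the problem to minimizing, pointwise in $\mX$, the quantity $g(s,\mX):=\mathbb{E}[\bar Y_\pi^{+}\mid\mX]\,\mathds{1}(s=-1)+\mathbb{E}[\bar Y_\pi^{-}\mid\mX]\,\mathds{1}(s=+1)$ over $s=\sign\phi(\mX)\in\{-1,+1\}$.

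Next, since $\bar Y_\pi^{+}-\bar Y_\pi^{-}=\bar Y_\pi$ and $\mathbb{E}[\bar Y_\pi\mid\mX]=f(\mX)-\pi$ by \eqref{eq:model}, the gap between the two branches of $g(\cdot,\mX)$ equals $|f(\mX)-\pi|\geq 0$, so the pointwise minimum is attained at $s=\sign(f(\mX)-\pi)$; on the event $\{f(\mX)=\pi\}$ the two branches coincide, so either sign is admissible there. Integrating over $\mX$ then shows that every measurable $\bar f$ with $\sign\bar f=\sign(f-\pi)$ attains the pointwise-optimal value and therefore minimizes $\risk$ over \emph{all} measurable predictors, giving $\risk(\bar f)\leq \inf_{\phi\in\Phi(r)}\risk(\phi)$. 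To close the chain, the sign-representability hypothesis \eqref{eq:sign} furnishes a rank-$r$ matrix $\mB(\pi)$ and intercept $b(\pi)$ such that $\phi^{\ast}(\mX):=\langle \mX,\mB(\pi)\rangle+b(\pi)\in\Phi(r)$ satisfies $\sign\phi^{\ast}=\sign(f-\pi)$; hence $\risk(\phi^{\ast})=\risk(\bar f)$, establishing the desired equality.

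The only subtleties I expect are bookkeeping on the $\mathbb{P}_{\mX}$-null event $\{f(\mX)=\pi\}$ and the $\{0,2\}$-valued nature of the sign-mismatch factor (as opposed to the more standard $\{0,1\}$ indicator); both are routine. It is worth remarking that the rank-$r$ constraint enters only in the final step, so the weighted loss with weight $|\bar Y_\pi|$ is Fisher-consistent for estimating $\sign(f-\pi)$ regardless of whether $f$ admits a low-rank sign representation; the role of \eqref{eq:sign} is merely to ensure that the unconstrained Bayes sign lives inside the hypothesis class $\Phi(r)$.
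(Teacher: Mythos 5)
Your proposal is correct and follows essentially the same route as the paper's proof: both condition on $\mX$, use $\mathbb{E}[Y-\pi\mid\mX]=f(\mX)-\pi$ to show that $\sign(f-\pi)$ minimizes the weighted risk pointwise (the paper writes this as the excess risk identity $\risk(\sign\phi)-\risk(\sign(f-\pi))=\tfrac12\mathbb{E}_{\mX}\lvert\sign(f-\pi)-\sign\phi\rvert\,\lvert f-\pi\rvert\geq 0$, which is your branch-gap computation in different notation), and then invoke $(r,\pi)$-sign representability so that a minimizer with the Bayes sign lies in $\Phi(r)$. The positive/negative-part decomposition and the closing Fisher-consistency remark are only presentational differences, not a different argument.
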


\noindent
Theorem~\ref{thm:oracle} suggests a practical procedure to estimate $\sign(f-\pi)$ through weighted classifications. The result ensures that the sign function $\sign(f-\pi)$ minimizes the weighted classification risk. The inverse, however, may not hold true, due to possible multiple global optimizers of $\risk(\cdot)$. A simple example is a constant regression $f(\mX)=\mathbb{E}(Y|\mX) = c$, in which case, every function $\phi\in \Phi(r)$ minimizes $\risk(\cdot)$ at the level $\pi=c$. The next section resolves this issue by characterizing the uniqueness of the risk optimizer.

%%%%%%%%%%%%%%%%%%%%%%%%%%%%%%%%%%%%%%%%%%%%%%
\subsection{Identifiability}\label{sec:identifiability}

To establish the statistical guarantee of the minimizer of $\risk(\cdot)$, we first address its uniqueness, up to some sign equivalence. It turns out the local behavior of the regression function $f$ around $\pi$ plays a key role to establish the identifiability of sign function series from weighted classifications.

We introduce some additional notation. We call $\bayesS(\pi)=\{\mX\in\tX\colon f(\mX)\geq \pi\}$ the Bayes set at level $\pi$, and $\partial \bayesS(\pi)=\{\mX\in \tX\colon f(\mX)=\pi\}$ the level set boundary. Note that there is a one-to-one correspondence between the sign function $\sign(f-\pi)$ and the Bayes set $\bayesS(\pi)$. We choose to present the results in terms of $\bayesS(\pi)$ for easier comparison with the existing classification literature \citep{tsybakov2004optimal,singh2009adaptive}. We call a level $\pi\in[0,1]$ a mass point if the level set boundary $\partial \bayesS(\pi)$ has a non-zero measure under $\mathbb{P}_{\mX}$. Let $\tN=\{\pi\in[-1,1] \colon \mathbb{P}_{\mX}\left[f(\mX)=\pi\right]\neq 0\}$ denote the collection of all mass points in $f$. Assume there exists a constant $c>0$, independent of the feature space dimension, such that $|\tN|\leq c<\infty$. We introduce a notion of smoothness for the cumulative distribution function (CDF) of $f(\mX)$ under measure $ \mathbb{P}_{\mX}$. 

\begin{defn} [$\alpha$-smoothness] \label{ass:decboundary} 
Suppose $\mathbb{P}_{\mX}$ is a continuous distribution, and denote the CDF $G(\pi)=\mathbb{P}_{\mX}[f(\mX)\leq \pi]$. A function $f$ is called $(\alpha,\pi)$-locally smooth, for a given $\pi \notin \tN$, if there exist constants $C=C(\pi)>0$ and $\alpha=\alpha(\pi)\geq 0$, such that
\begin{equation}\label{eq:mass}
\sup_{0\leq t<\rho(\pi, \tN)}{G(\pi+t)-G(\pi-t)\over t^{\alpha}}\leq C,
\end{equation}
where $\rho(\pi,\tN) = \min_{\pi'\in \tN} |\pi-\pi'|$ denotes the distance from $\pi$ to the nearest point in $\tN$. We make the convention that $\rho(\pi,\tN)=2$ (which equals the range of $\pi\in[-1,1]$) when $\tN$ is empty, and $\alpha=\infty$ when the numerator in \eqref{eq:mass} is zero. The largest possible $\alpha=\alpha(\pi)$ in \eqref{eq:mass} is called the smoothness index at level $\pi$. The function $f$ is called $\alpha$-globally smooth, if \eqref{eq:mass} holds with a global constant $C$ for all $\pi\in[-1,1]$ except for a finite number of levels.
\end{defn}

\noindent
Fig~\ref{fig:CDF} shows three examples of the CDF with various levels of smoothness. A small value of $\alpha<1$ indicates the infinite density at level $\pi$, or equivalently, when $G(\pi)$ jumps at $\pi$. A large value of $\alpha>1$ corresponds to the case of no point mass around $\pi$, or equivalently, when $G(\pi)$ remains flat. An intermediate case is $\alpha=1$ when $G(\pi)$ has a finite non-zero sub-derivative in the vicinity of $\pi$. The global smoothness index is the minimal $\alpha$ over all $\pi$'s; meanwhile, we allow exceptions for a finite number of levels. 

\begin{figure}[t]
\includegraphics[width=.95\textwidth]{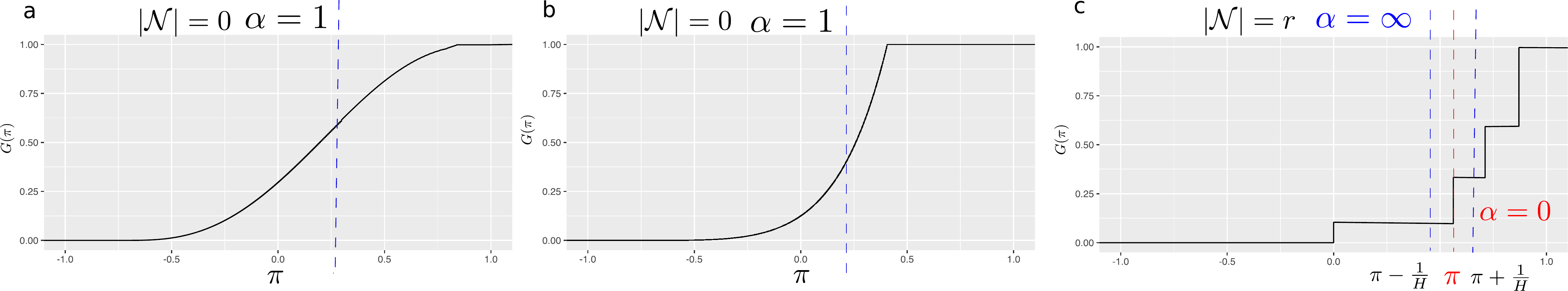}
\caption{Three examples of CDF, $G(\pi)=\mathbb{P}_{\mX}(f(\mX)\leq \pi)$, with local smoothness index $\alpha$ at $\pi$ depicted in dashed line. (a) and (b). Function $G(\pi)$ $\alpha=1$ because the $G(\pi)$ has finite sub-derivatives in the range of $\pi$; (c). Function $G(\pi)$ with $\alpha=\infty$ at most $\pi$ (in blue), except for a total number of $|\tN|=r$ jump points (in red). Here $|\tN|$ denotes the number of jump points.}
\label{fig:CDF}
\end{figure}

Next, we show that the $\alpha$-smoothness with $\alpha\neq 0$ implies the uniqueness of $\bayesS(\pi)$ for the optimizer of $\risk(\cdot)$. For two sets $S_1, S_2\in \tX$, define the probabilistic set difference, 
\begin{align*} 
d_{\Delta}(S_1,S_2) = \mathbb{P}_{\mX}(S_1\Delta S_2)=\mathbb{P}_{\mX}\{\mX\colon \mX\in S_1\setminus S_2 \text{ or }S_2\setminus S_1\},
\end{align*}
and the risk difference,
\begin{align*} 
d_\pi(S_1,S_2) = \risk(\sign(S_1))-\risk(\sign(S_2)).
\end{align*}

\begin{thm}[Identifiability]~\label{thm:identifiability} Suppose $f$ is $\alpha$-globally smooth over $\tX$. Then,
\begin{align}\label{eq:identity}
d_{\Delta}(S,\bayesS(\pi)) \lesssim \left[d_\pi(S,\bayesS(\pi))\right]^{\alpha\over 1+\alpha}+{1\over\rho(\pi, \tN)} d_\pi(S,\bayesS(\pi)),
\end{align}
for all sets $S\in\tX$ and all levels $\pi\in[-1,1]$ except for a finite number of levels.
\end{thm}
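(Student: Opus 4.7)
The plan is to first translate the excess-risk gap $d_\pi(S,\bayesS(\pi))$ into an $L_1$-weighted integral of $|f-\pi|$ over the symmetric difference $S\Delta \bayesS(\pi)$, and then use the $\alpha$-smoothness condition to convert that integral into the set-probability bound $d_\Delta$, via a standard peeling argument in the spirit of Tsybakov's noise condition.

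For the first step I would expand $\risk(\phi)$ for a sign rule $s(\mX)=\sign\phi(\mX)$ by conditioning on $\mX$. Writing $a(\mX)=\mathbb{E}[(Y-\pi)_+\mid\mX]$ and $b(\mX)=\mathbb{E}[(\pi-Y)_+\mid\mX]$, the conditional risk equals $a(\mX)\mathds{1}(s(\mX)=-1)+b(\mX)\mathds{1}(s(\mX)=1)$. Since $(Y-\pi)_+ - (\pi-Y)_+ = Y-\pi$, conditional expectation gives $a(\mX)-b(\mX)=f(\mX)-\pi$, so the pointwise Bayes sign is exactly $s^{*}(\mX)=\sign(f(\mX)-\pi)$ (recovering Theorem~\ref{thm:oracle}) and the excess risk for any $s$ equals $\mathbb{E}\!\left[|f(\mX)-\pi|\,\mathds{1}(s(\mX)\ne s^{*}(\mX))\right]$. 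For a set-based rule $s(\mX)=\sign(\mX\in S)$ this yields the key identity $d_\pi(S,\bayesS(\pi))=\mathbb{E}[|f(\mX)-\pi|\,\mathds{1}(\mX\in S\Delta \bayesS(\pi))]$.

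For the second step, for a threshold $t\in(0,\rho(\pi,\tN))$ I would split $d_\Delta(S,\bayesS(\pi))\le \mathbb{P}_\mX(|f-\pi|\le t)+\mathbb{P}_\mX(\mX\in S\Delta \bayesS(\pi),\ |f-\pi|>t)$. The first summand equals $G(\pi+t)-G(\pi-t)\le C t^{\alpha}$ by $\alpha$-smoothness, and the second is at most $d_\pi(S,\bayesS(\pi))/t$ by a one-line Markov step using the identity above. Hence $d_\Delta\le Ct^{\alpha}+d_\pi/t$ for every admissible $t$. The unconstrained minimizer $t^{\star}\asymp d_\pi^{1/(1+\alpha)}$ gives $d_\Delta\lesssim d_\pi^{\alpha/(1+\alpha)}$ whenever $t^{\star}<\rho(\pi,\tN)$. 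When $t^{\star}\ge\rho(\pi,\tN)$, I would instead take $t=\rho(\pi,\tN)/2$: the hypothesis $t^{\star}\ge\rho/2$ forces $d_\pi\gtrsim \rho^{\,1+\alpha}$, from which $\rho^{\alpha}\lesssim d_\pi^{\alpha/(1+\alpha)}$, so the two contributions combine to $d_\pi^{\alpha/(1+\alpha)}+d_\pi/\rho(\pi,\tN)$, exactly the claimed bound.

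The main technical obstacle I expect is the interaction between the \emph{local} validity of the smoothness bound (requiring $t<\rho(\pi,\tN)$) and the \emph{uniform} character of the conclusion in $\pi$: the global constant from Definition~\ref{ass:decboundary} has to be absorbed into $\lesssim$ uniformly outside a finite exceptional set, and the case $t^{\star}\ge\rho(\pi,\tN)$ is precisely what forces the second additive term $d_\pi/\rho(\pi,\tN)$ to appear. The finite collection of levels excluded from the statement is exactly the union of the mass-point set $\tN$ (where $\rho(\pi,\tN)=0$ and the CDF jumps) with the finite exceptional set permitted by $\alpha$-global smoothness, and one must verify that on its complement $\rho(\pi,\tN)$ is bounded away from zero so that the $1/\rho$ factor is meaningful.
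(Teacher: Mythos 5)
Your proposal is correct and follows essentially the same route as the paper's proof: the same key identity $d_\pi(S,\bayesS(\pi))=\int_{S\Delta \bayesS(\pi)}|f-\pi|\,d\mathbb{P}_{\mX}$, the same thresholding at a level $t<\rho(\pi,\tN)$ combined with the $\alpha$-smoothness bound, and the same optimization over $t$ with the case $t^{\star}\geq\rho(\pi,\tN)$ producing the extra $d_\pi/\rho(\pi,\tN)$ term. The only cosmetic difference is that you state the inequality as an upper bound on $d_\Delta$ via Markov, while the paper writes the equivalent lower bound on $d_\pi$ and treats $\alpha=\infty$ as a separate (trivial) case.
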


\noindent
We make two remarks. First, the bound~\eqref{eq:identity} controls the worst-case perturbation of the classifiers under the measure $\mathbb{P}_{\mX}$ with respect to the weighted classification risks. When $\alpha \neq 0$, the inequality \eqref{eq:identity} immediately implies the uniqueness, up to a measure-zero set in $\mathbb{P}_{\mX}$, of $\bayesS(\pi)$ in minimizing $\risk(\cdot)$. Second, our identifiability improves the earlier results for a single level set estimation to multiple level set estimations. Existing work \citep{singh2009adaptive,xu2020class} considered only a finite number of $\pi$'s, and provided only the first term in the bound \eqref{eq:identity}. In contrast, our bound quantifies the full dependence on the level $\pi$, and establishes the recovery condition of $\bayesS(\pi)$ uniformly over all possible $\pi$'s. It turns out both terms in the bound \eqref{eq:identity} are crucial for our regression function estimation. The first term contributes to the classification error, and the second term contributes to the variance in sign series aggregations.

%%%%%%%%%%%%%%%%%%%%%%%%%%%%%%%%%%%%%%%%%%%%%%
\subsection{Regression risk bound}

In this section, we provide the statistical accuracy guarantee for the learning reduction based estimators~\eqref{eq:proposal} and~\eqref{eq:stepfunction}. Our theory consists of three main ingredients. We first leverage the $\alpha$-smoothness to provide a sharp rate for $\hat \phi_\pi$'s classification risk faster than the usual root-$n$ convergence. The improvement stems from the fact that, under the given assumptions, the variance of the excess classification loss is bounded in terms of its expectation. Because the variance decreases as we approach the optimal $\sign(f-\pi)$, the risk of $\hat \phi_\pi$ converges more quickly to the optimal risk than the simple uniform converge results would suggest.  The second step is to convert the risk error into the probability set error by Theorem~\ref{thm:identifiability}. The last step is to aggregate the set error into the final nonparametric function estimation. A careful error analysis reveals the joint contribution from both sign aggregations and variance-bias trade-off. 

The next result establishes the estimation accuracy for sign function estimator \eqref{eq:proposal}. 

\begin{thm}[Sign function estimation]\label{thm:main} Suppose the regression function $f\in\caliF(r)$ is $\alpha$-globally smooth over $\tX$, and let $d_{\max}=\max(d_1,d_2)$. Then, for all $\pi\in[-1,1]$ except for a finite number of levels, with high probability at least $1-\exp(-rd_{\max})$ over the training data $(\mX_i,Y_i)_{i\in[n]}$, we have, 
\begin{equation}\label{eq:riskbound}
\onenormSize{}{\sign \hat \phi_\pi- \sign(f-\pi)} \lesssim \left({rd_{\max} \over n}\right)^{\alpha\over 2+\alpha}+{1\over \rho^2(\pi, \tN)}\left({rd_{\max}\over n}\right),
\end{equation}
where the $L_1$ norm is taken with respect to the measure $\mX\sim\mathbb{P}_{\mX}$. 
\end{thm}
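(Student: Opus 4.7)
The plan is to combine an excess-risk bound for the weighted classifier $\hat\phi_\pi$ with the identifiability inequality of Theorem~\ref{thm:identifiability}, which translates excess classification risk into the probability of sign disagreement, and hence into the $L_1$ error of interest: writing $\hat S_\pi=\{\mX\colon \hat\phi_\pi(\mX)\geq 0\}$ and $S_\phi=\{\mX\colon\phi(\mX)\geq 0\}$, one has $\onenormSize{}{\sign\hat\phi_\pi-\sign(f-\pi)}=2\,d_{\Delta}(\hat S_\pi,\bayesS(\pi))$.

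First I would bound the complexity of $\Phi(r)$. Since a rank-$r$ matrix in $\mathbb{R}^{d_1\times d_2}$ is parametrized by $\tO(rd_{\max})$ numbers and the intercept adds one more, the sign class $\{\sign\phi\colon \phi\in\Phi(r)\}$ admits covering-number / VC-type entropy of that order. Empirical-process concentration (Talagrand-type) then gives uniform control of $|\erisk(\phi)-\risk(\phi)|$ over $\Phi(r)$, which contributes the scale $rd_{\max}/n$ and the $\exp(-rd_{\max})$ tail probability in the final statement. Combined with Theorem~\ref{thm:oracle}, this step alone would only yield the slow rate $(rd_{\max}/n)^{1/2}$.

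To sharpen to a Tsybakov-type fast rate, I would use the $\alpha$-smoothness to derive a variance-versus-expectation inequality
\begin{align*}
\textup{Var}\bigl(L(\phi;\mX,\bar Y_\pi)-L(\sign(f-\pi);\mX,\bar Y_\pi)\bigr)\lesssim d_\pi(S_\phi,\bayesS(\pi))^{\alpha/(1+\alpha)}+\rho(\pi,\tN)^{-1}\,d_\pi(S_\phi,\bayesS(\pi)),
\end{align*}
by splitting $\tX$ into the band $\{|f-\pi|<\rho(\pi,\tN)\}$, where $\alpha$-smoothness controls the mass near the decision boundary, and its complement, where the weight $|\bar Y_\pi|$ is bounded away from zero and a $\rho^{-1}$ factor emerges when bounding the integrated loss difference by the excess risk. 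Feeding this into a peeling / localized-concentration argument yields
\begin{align*}
d_\pi(\hat S_\pi,\bayesS(\pi))\lesssim \left(\frac{rd_{\max}}{n}\right)^{(1+\alpha)/(2+\alpha)}+\frac{1}{\rho(\pi,\tN)}\cdot\frac{rd_{\max}}{n},
\end{align*}
with probability at least $1-\exp(-rd_{\max})$.

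Finally, applying Theorem~\ref{thm:identifiability}, $d_{\Delta}\lesssim d_{\pi}^{\alpha/(1+\alpha)}+\rho^{-1}d_{\pi}$, and substituting the above bound, the first term $(rd_{\max}/n)^{\alpha/(2+\alpha)}$ in the target inequality arises by raising the fast rate to the power $\alpha/(1+\alpha)$, while the $\rho(\pi,\tN)^{-2}(rd_{\max}/n)$ term arises because the $\rho^{-1}$ from identifiability compounds with the $\rho^{-1}$ already sitting in the excess-risk bound. The main obstacle is the variance-expectation inequality: one must carry the correct $\rho(\pi,\tN)$ dependence through a careful band decomposition, handle the excluded finite mass-point set $\tN$ uniformly across $\pi$, and set up peeling/localization so that the empirical-process bound tightens to the Tsybakov rate without losing the $1-\exp(-rd_{\max})$ confidence level.
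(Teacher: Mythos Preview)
Your proposal is correct and follows essentially the same route as the paper: a localized/peeling empirical-process argument over $\Phi(r)$ (entropy of order $rd_{\max}$) combined with an $\alpha$-smoothness variance-to-mean inequality yields the excess-risk rate $t_n^{(\alpha+1)/(\alpha+2)}+\rho^{-1}t_n$ with $t_n=rd_{\max}/n$, and Theorem~\ref{thm:identifiability} then converts this to the stated $L_1$ sign error. One minor simplification in the paper you may not have anticipated: rather than a separate band decomposition, the variance-to-mean bound is obtained by first using boundedness of the 0-1 loss and of $|\bar Y_\pi|$ to get $\textup{Var}\big[\ell_\pi(\phi)-\ell_\pi(\bayespif)\big]\lesssim \mathbb{E}|\sign\phi-\bayespif|=2\,d_\Delta(S_\phi,\bayesS(\pi))$, and then applying Theorem~\ref{thm:identifiability} \emph{again}---so the identifiability inequality does double duty, once for the variance control inside the localization and once at the end.
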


\noindent
Theorem~\ref{thm:main} quantifies the statistical convergence of the sign function estimation. For a fixed $\pi$, the second term in \eqref{eq:riskbound} is absorbed into the first term, leading to the rate $O(n^{-\alpha/(2+\alpha)})$. We find that the sign estimation reaches a fast rate $1/n$ when $\alpha =\infty$, and reaches a slow rate $1/\sqrt{n}$ when the point mass concentrates with $\alpha=0$. This is consistent with our intuition, because best rate $\alpha = \infty$ corresponds to a clear separation with no point mass at the Bayes set boundary $\partial \bayesS(\pi)$, whereas the worst rate $\alpha = 0$ corresponds to a heavy mass around $\partial \bayesS(\pi)$. Furthermore, the sign function estimation achieves consistency in the high dimensional region as long as $n \gg d_{\max}\to \infty$ and $\alpha\neq 0$. Combining the sign representability of the regression function and the uniform sign estimation accuracy, we obtain our main theoretical result on the nonparametric trace regression. 

\begin{thm}[Regression function estimation]\label{thm:regression} 
Suppose the same conditions in Theorem~\ref{thm:main} hold. With high probability at least $1-\exp(-rd_{\max})$ over the training data $(\mX_i,Y_i)_{i\in[n]}$, we have 
\begin{align}\label{eq:bound}
\onenormSize{}{\hat f-f} \lesssim \KeepStyleUnderBrace{\left({rd_{\max}\log H \over n}\right)^{\alpha \over 2+\alpha}}_{\text{estimation error from sign functions}}+\KeepStyleUnderBrace{1\over H}_{\text{reduction bias}}+\KeepStyleUnderBrace{\left({rd_{\max}\over n}\right)H \log H}_{\text{reduction variance}},
\end{align}
for any resolution parameter $H \in \mathbb{N}_{+}$. In particular, setting $H\asymp \left( {n\over rd_{\max}} \right)^{1/2}$ gives 
\begin{equation}\label{eq:final}
\onenormSize{}{\hat f-f} \lesssim \left({rd_{\max} \log n\over n}\right)^{\min\left({\alpha\over 2+\alpha}, {1\over 2}\right)},
\end{equation}
where the $L_1$ norm is taken with respect to the measure $\mX\sim\mathbb{P}_{\mX}$
\end{thm}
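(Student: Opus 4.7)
The plan is to exploit the level-set identity
$2f(\mX)=\int_{-1}^{1}\sign(f(\mX)-\pi)\,d\pi$, valid for every function with range in $[-1,1]$. Discretizing this integral on the uniform grid $\tH$ of $2H+1$ equally spaced levels yields the pointwise Riemann approximation $f(\mX)=\frac{1}{2H+1}\sum_{\pi\in\tH}\sign(f(\mX)-\pi)+\Delta_H(\mX)$, where $\sup_{\mX}|\Delta_H(\mX)|\lesssim 1/H$: the step-function integrand $\pi\mapsto \sign(f(\mX)-\pi)$ has a single jump at $\pi=f(\mX)$, so the quadrature error is concentrated in at most one cell of width $1/H$. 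Substituting the definition $\hat f=\frac{1}{2H+1}\sum_{\pi\in\tH}\sign\hat\phi_\pi$ and applying the triangle inequality gives
\begin{equation*}
\onenormSize{}{\hat f-f}\;\le\;\frac{1}{2H+1}\sum_{\pi\in\tH}\onenormSize{}{\sign\hat\phi_\pi-\sign(f-\pi)}\;+\;\onenormSize{}{\Delta_H},
\end{equation*}
where the second summand is the deterministic reduction bias of order $1/H$.

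Next I would apply Theorem~\ref{thm:main} to each $\pi\in\tH\setminus\tN$ uniformly. Since the per-level bound holds with failure probability $\exp(-rd_{\max})$, a union bound over the $2H+1$ grid points together with an inflation of the confidence parameter from $rd_{\max}$ to $rd_{\max}+\log(2H+1)\lesssim rd_{\max}\log H$ delivers, with probability at least $1-\exp(-rd_{\max})$,
\begin{equation*}
\onenormSize{}{\sign\hat\phi_\pi-\sign(f-\pi)}\;\lesssim\;\Bigl(\tfrac{rd_{\max}\log H}{n}\Bigr)^{\alpha/(2+\alpha)}\;+\;\tfrac{1}{\rho^{2}(\pi,\tN)}\Bigl(\tfrac{rd_{\max}\log H}{n}\Bigr)
\end{equation*}
simultaneously for every $\pi\in\tH\setminus\tN$. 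The finitely many grid points in $\tH\cap\tN$ contribute at most an $O(|\tN|/H)=O(1/H)$ correction that is absorbed into the bias.

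The main obstacle is the variance bookkeeping, namely controlling $\frac{1}{2H+1}\sum_{\pi\in\tH\setminus\tN}\rho^{-2}(\pi,\tN)$. Since $|\tN|\le c$ by assumption, I would partition the grid by nearest element of $\tN$. For each reference point $\pi^{*}\in\tN$ the remaining grid points sit at distances $\{k/H\colon 1\le k\le H\}$, so
\begin{equation*}
\sum_{\pi\in\tH,\,\rho(\pi,\tN)=|\pi-\pi^{*}|}\frac{1}{(\pi-\pi^{*})^{2}}\;\le\;2\sum_{k=1}^{H}\frac{H^{2}}{k^{2}}\;=\;O(H^{2}).
\end{equation*}
Summing over the $O(1)$ reference points and dividing by the normalization $2H+1$ yields $O(H)$; multiplying by the per-level factor $(rd_{\max}\log H)/n$ produces exactly the reduction-variance term $(rd_{\max}/n)H\log H$ in~\eqref{eq:bound}. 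The $\pi$-independent first summand passes through averaging unchanged. Collecting the estimation, bias, and variance contributions proves~\eqref{eq:bound}.

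Finally, for~\eqref{eq:final}, I would balance the reduction bias $1/H$ against the reduction variance $(rd_{\max}/n)H\log H$ (up to log factors) by choosing $H\asymp(n/(rd_{\max}))^{1/2}$, which makes both of order $(rd_{\max}\log n/n)^{1/2}$; comparison with the estimation-error exponent $\alpha/(2+\alpha)$ then produces the stated exponent $\min(\alpha/(2+\alpha),\,1/2)$.
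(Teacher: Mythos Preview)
Your proposal is correct and follows essentially the same route as the paper: the same triangle-inequality decomposition into sign-estimation error plus discretization bias, the same union bound over $\tH$ producing the $\log H$ inflation, and the same $\sum_{k\ge 1}H^2/k^2=O(H^2)$ calculation (the paper packages this as a separate Lemma) for the $\rho^{-2}$ variance term. One small imprecision: rather than excluding only grid points lying exactly in $\tN$, the paper excludes grid points within distance $1/H$ of $\tN$ (there are still only $O(|\tN|)$ of them, contributing $O(1/H)$ via the trivial bound), which is what guarantees that the distances in your sum actually start at $k/H$ with $k\ge 1$; your calculation already implicitly assumes this, so the fix is cosmetic.
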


\noindent
Theorem~\ref{thm:regression} establishes the convergence rate of the proposed learning reduction estimator for the nonparametric trace regression. We make three remarks. First, the bound~\eqref{eq:bound} reveals three sources of errors: the estimation error from sign functions, the bias due to sign series representations, and the variance thereof. Recall that $H$ determines the number of sign functions in sign series representations. It controls the bias-variance tradeoff here. Second, the regression is robust to a few off-target classifications, as long as the majorities are accurate. This can also be seen in Fig~\ref{fig:CDF}(a) where the classification is nonidentifiable at some mass point (red line). Nevertheless, the regression estimation is still possible because the nearby classifications provide the sign signal (blue lines). This fact shows the benefit of sign aggregations, and also explains the trade-off in choosing $H$. Intuitively, a larger value of $H$ increases the approximation accuracy, but meanwhile renders the classification harder near the mass points. Third, the final regression error is generally no better than the sign error, as we compare the bounds in \eqref{eq:final} with \eqref{eq:riskbound}. This confirms our premise that classification is easier than regression. On the other hand, our sign representation approach allows us to disentangle the complexity and achieve the theoretical guarantee from classification to regression.

%%%%%%%%%%%%%%%%%%%%%%%%%%%%%%%%%%%%%%%%%%%%%%
\section{Two applications of nonparametric matrix learning}
\label{sec:examples}

In this section, we apply the general theory in Theorem~\ref{thm:regression} to two specific nonparametric matrix learning problems, the low-rank sparse matrix predictor regression, and the high-rank matrix completion.

%%%%%%%%%%%%%%%%%%%%%%%%%%%%%%%%%%%%%%%%%%%%%%
\subsection{Low-rank sparse matrix predictor regression}
\label{sec:sparse}

The first problem we consider is matrix predictor regression. In addition to the low sign rank structure, we also introduce a two-way sparsity structure. That is, we impose that some rows and columns of $\mB$ are zeros, where $\mB$ is as defined in \eqref{eq:sign}. We comment that sparsity is a commonly used structure in matrix data modeling \citep{zhou2014regularized}, and it is scientifically plausible in numerous applications \citep{Zhang2015}.

Specifically, we extend the notation $\Phi(r)$ and $\caliF(r)$ introduced in Definition~\ref{def:caliF} to incorporate the sparsity. Let $\Phi(r,s_1,s_2)$ denote the collection of trace functions, 
\begin{align*} 
\Phi(r,s_1,s_2)=\{\phi\colon \mX\mapsto \langle \mX, \mB \rangle +b \ \big| \text{rank}(\mB)\leq r,  \text{supp}(\mB)\leq (s_1,s_2), (\mB,b)\in\mathbb{R}^{d_1\times d_2}\times \mathbb{R}\},
\end{align*}
where $\text{supp}(\mB)$ denotes the support of $\mB$, with the sparsity parameters, $s_1=\newnormSize{}{\mB}_{1,0}=|\{i\in[d_1]\colon \mB_i\neq \mathbf{0}\}|$, and $s_2=\newnormSize{}{\mB^T}_{1,0}=|\{j\in[d_2]\colon \mB^T_j\neq \mathbf{0}\}|$, denoting the number of non-zero rows and non-zero columns of $\mB$, respectively. Similarly, let $\caliF(r,s_1,s_2)$ denote a family of rank-$r$, support-$(s_1,s_2)$ sign representable functions based on \eqref{eq:sign}. We have the following result. 

\begin{thm}[Nonparametric low-rank two-way sparse regression]\label{thm:sparse}
Consider the same setup as in Theorem~\ref{thm:regression}, except that we replace $\caliF(r)$ and $\Phi(r)$ with $\caliF(r,s_1,s_2)$ and $\Phi(r,s_1,s_2)$, respectively. Set $H\asymp {\left(n\over r(s_1+s_2)\log d_{\max}\right)}^{1/2}$ in~\eqref{eq:stepfunction}. With high probability at least $1-d_{\max}^{-r(s_1+s_2)}$ over the training data $(\mX_i, Y_i)_{i\in[n]}$, the estimate \eqref{eq:stepfunction} is bounded by
\begin{equation}\label{eq:final2}
\onenormSize{}{\hat f- f} \lesssim \left({r(s_1+s_2) \log d_{\max}\log n \over n}\right)^{\min\left({\alpha\over 2+\alpha}, {1\over 2}\right)}.
\end{equation}
\end{thm}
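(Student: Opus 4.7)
The plan is to mirror the three-stage architecture behind Theorem~\ref{thm:regression}, but with the ambient complexity term $rd_{\max}$ of $\Phi(r)$ replaced throughout by the sparse complexity $r(s_1+s_2)\log d_{\max}$ of $\Phi(r,s_1,s_2)$. Since the aggregation machinery that lifts per-level sign estimation to the final regression rate is insensitive to the specific hypothesis class and goes through verbatim, the bulk of the work reduces to reproving the sign-estimation bound of Theorem~\ref{thm:main} for the sparse low-rank class.

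First I would establish a covering-number / Rademacher complexity estimate for $\Phi(r,s_1,s_2)$ via a two-level peeling argument. For each fixed pair of supports $(I,J)\subset[d_1]\times[d_2]$ with $|I|=s_1$ and $|J|=s_2$, the rank-$r$ matrices supported on $I\times J$ form a class whose metric entropy at scale $\varepsilon$ is of order $r(s_1+s_2)\log(1/\varepsilon)$, obtained via the standard $\varepsilon$-net argument on the singular value decomposition together with nets on the associated Stiefel factors. A union bound over the $\binom{d_1}{s_1}\binom{d_2}{s_2}\leq d_{\max}^{s_1+s_2}$ possible supports contributes an additive $(s_1+s_2)\log d_{\max}$ to the entropy. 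The effective dimension that enters the concentration inequality is therefore of order $r(s_1+s_2)\log d_{\max}$, in contrast to the $rd_{\max}$ appearing in Theorem~\ref{thm:main}.

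Second, I would feed this effective dimension into the variance-controlled Bernstein / Talagrand argument underlying Theorem~\ref{thm:main}. The local $\alpha$-smoothness of Definition~\ref{ass:decboundary} supplies the variance-to-expectation bound that upgrades the slow $n^{-1/2}$ rate to the fast $n^{-\alpha/(2+\alpha)}$ rate, and the identifiability bound~\eqref{eq:identity} of Theorem~\ref{thm:identifiability} then converts the excess weighted-classification risk into an $L_1$ sign error. This yields, with probability at least $1-d_{\max}^{-r(s_1+s_2)}$,
\begin{equation*}
\onenormSize{}{\sign\hat\phi_\pi-\sign(f-\pi)}\lesssim\left(\frac{r(s_1+s_2)\log d_{\max}}{n}\right)^{\!\alpha/(2+\alpha)}+\frac{1}{\rho^2(\pi,\tN)}\cdot\frac{r(s_1+s_2)\log d_{\max}}{n},
\end{equation*}
which is precisely the sparse analog of~\eqref{eq:riskbound}. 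Aggregating these per-level sign estimators over the grid $\tH$ as in Theorem~\ref{thm:regression}—summing the $L_1$ errors, absorbing the $1/\rho^2(\pi,\tN)$ contributions into an $H\log H$ factor, and balancing the bias $1/H$ against the accumulated variance by taking $H\asymp(n/(r(s_1+s_2)\log d_{\max}))^{1/2}$—yields the claimed rate~\eqref{eq:final2}.

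The main obstacle is the first step. One must combine the combinatorial union bound over supports with the rank-$r$ matrix covering estimate at a geometric scale that simultaneously keeps the tail-probability exponent of order $r(s_1+s_2)\log d_{\max}$ (rather than $rd_{\max}$) and preserves the variance-moment control needed for the fast-rate localization. Any slack here would either inflate the logarithmic factor beyond $\log d_{\max}$ or weaken the high-probability statement to $e^{-rd_{\max}}$. Once this sparse complexity control is in hand, the identifiability and aggregation stages are unchanged, and Theorem~\ref{thm:sparse} follows by specialization of Theorem~\ref{thm:regression}.
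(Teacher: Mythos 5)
Your proposal is correct and follows essentially the same route as the paper: the paper proves Theorem~\ref{thm:sparse} by specializing its unified sign-estimation result (Theorem~\ref{thm:unified}, case (b)) with $t_n = r(s_1+s_2)\log d_{\max}/n$, where the sparse complexity comes from exactly your two-level argument --- a fixed-support covering of rank-$r$ matrices plus a $\binom{d_{\max}}{s_1}\binom{d_{\max}}{s_2}$ union bound over supports (Lemmas~\ref{lem:entropy} and~\ref{lem:metric}) --- fed into a localized, variance-controlled empirical-process bound and then converted to $L_1$ sign error via Theorem~\ref{thm:identifiability}, with the aggregation over $\pi\in\tH$ carried out verbatim as in the proof of Theorem~\ref{thm:regression}. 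The only cosmetic difference is that the paper works with bracketing entropy and the Shen--Wong tail inequality rather than a Rademacher/Talagrand formulation, which does not change the substance of the argument.
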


\noindent
We make two remarks. First, the bound \eqref{eq:final2} suggests that the estimator remains consistent in the high dimensional regime as $d_{\max}$ and $n\to \infty$, as long as $d_{\max}$ grows sub-exponentially in the sample size $n$. Such a sample complexity shows the pronounced advantage of the low-rank two-way sparse structural model, by comparing \eqref{eq:final2} and \eqref{eq:final}. Second, the two-way sparsity structure facilitates the interpretability, which we further demonstrate through numerical examples in Section~\ref{sec:comparison}.

%%%%%%%%%%%%%%%%%%%%%%%%%%%%%%%%%%%%%%%%%%%%%%
\subsection{High-rank matrix completion}\label{sec:matrixcompletion}

The second problem we consider is matrix completion. Let $\mY\in\mathbb{R}^{d_1\times d_2}$ be a data matrix generated from the model,
\begin{equation}\label{eq:modelcompletion}
\mY=\mTheta+\mE,
\end{equation}
where $\mTheta\in\caliM(r)$ denotes an unknown signal matrix, and $\mE$ is an error matrix consisting of zero-mean, independent but not necessarily identically distributed entries. For simplicity, we assume $d_1=d_2=d$. Model \eqref{eq:modelcompletion} can be viewed as a special case of model \eqref{sec:idea}, where the predictor space consists of the basis matrices in $\mathbb{R}^{d\times d}$, and the data matrix $\mY=\entry{Y_{ij}}$ collects the scalar response $Y_{ij} \in \mathbb{R}$. In this case, the problem of regression estimation becomes the estimation of $\mTheta$. What is observed is an incomplete data matrix $\mY_\Omega$ from \eqref{eq:modelcompletion}, where $\Omega \subset [d]^2$ represents the index set of the observed entries. We allow both uniform and non-uniform sampling schemes for $\Omega$. Let $\Pi=\{p_\omega\}$ be an arbitrarily predefined probability distribution over the full index set with $\sum_{\omega\in[d]^2}p_\omega=1$. Assume the entries $\omega$ in $\Omega$ are i.i.d.\ draws with replacement from the full index set following the distribution $\Pi$. Denote the sampling rule as $\omega \sim \Pi$, and $\mY(\omega)$ the matrix entry indexed by $\omega$.  

Now applying our learning reduction approach to the matrix completion problem~\eqref{eq:modelcompletion} yields the signal matrix estimate
\begin{equation}\label{eq:est}
\hat \mTheta = {1\over 2H+1}\sum_{\pi \in \tH}\sign(\hat \mZ_\pi),
\end{equation}
where, for every $\pi\in\{-1,\ldots,-1/H,0,1/H,\ldots,1\}$, the matrix $\hat \mZ_\pi$ is the solution to the weighted classification
\begin{equation*}
\hat \mZ_\pi = \argmin_{\mZ\colon \rank(\mZ)\leq r}\sum_{\omega\in\Omega} \KeepStyleUnderBrace{|\mY(\omega)-\pi|}_{\text{weight}}\KeepStyleUnderBrace{|\sign(\mY(\omega)-\pi)-\sign(\mZ(\omega))|}_{\text{classification loss}}.
\end{equation*}

\noindent
To assess the accuracy of the estimate $\hat \mTheta=\hat \mTheta_{d\times d}$ in the high dimensional regime $d\to \infty$, we need to put the model in the nonparametric context of Definition~\ref{ass:decboundary}. We next extend the notion of $\alpha$-smoothness to a discrete feature space as follows. Let $\Delta s = 1/d^2$ denote a small tolerance, where $d^2$ represents the number of elements in the feature space. We quantify the distribution of the entries in matrix $\mTheta$ using a pseudo density, i.e., histogram with bin width $2\Delta s$. Specifically, let $G(\pi)=\mathbb{P}_{\omega\sim \Pi}[\mTheta(\omega)\leq \pi]$ denote the CDF of $\mTheta(\omega)$ under $\omega\sim \Pi$. We partition $[-1,1]=\tN \cup \tN^c$, where $\tN$ consists of levels whose pseudo density based on $2\Delta s$-bin is asymptotically unbounded; i.e,
\[
\tN=\left\{\pi\in[-1,1] \colon {G(\pi+{\Delta s})-G(\pi-{\Delta s})\over \Delta s} \geq c_1 \right\},\ \text{for some universal constant }c_1>0,
\]
and $\tN^c$ otherwise. Let $|\tN|_{\text{cover}}$ be the covering number of $\tN$ with $2\Delta s$-bin's; i.e, $|\tN|_{\text{cover}} =\text{Leb}(\tN)/2\Delta s$, where $\text{Leb}(\cdot)$ denotes the Lebesgue measure. The following assumption is a discrete analogy of Definition~\ref{ass:decboundary}.

\begin{defn}[$\alpha$-smoothness for discrete distribution] Let $\Pi$ be the sampling distribution over $[d^2]$. We say the signal matrix $\mTheta(\omega)$ is $\alpha$-globally smooth under $\omega\sim \Pi$, if there exist constants $c_2,c_3>0$, such that $|\tN|_{\text{cover}}\leq c_2$, and for all $\pi \in\tN^c$, 
\begin{equation*} \label{eq:smooth}
\sup_{\Delta s \leq t<\rho(\pi, \tN)}{G(\pi+{t})-G(\pi-{t})\over t^\alpha} \leq c_3, \;\; \text{ with } \rho(\pi,\tN)=\min_{\pi'\in \tN}|\pi-\pi'|+\Delta s 
\end{equation*}
and $\rho(\pi,\tN)$ denotes the adjusted distance from $\pi$ to the nearest point in $\tN$. 
\end{defn}
We assess the estimation error of~\eqref{eq:est} using the mean absolute error (MAE), $\text{MAE}(\hat \mTheta, \mTheta) = \mathbb{E}|\hat \mTheta(\omega)-\mTheta(\omega)|$, where the expectation is with respect to a future observation $\mTheta(\omega)$ from the distribution $G$.  We have the following result. 

\begin{thm}[Nonparametric matrix completion]\label{thm:estimation} 
Consider the matrix model~\eqref{eq:modelcompletion} with $\alpha$-smooth signal matrix $\mTheta\in\caliM(r)$. Set $H \asymp \left( |\Omega|\over dr\right)^{1/2}$. With high probability at least $1-\exp(-dr)$ over $\mY_\Omega$, the estimate \eqref{eq:est} satisfies that
\begin{equation}\label{eq:real}
\textup{MAE}(\hat \mTheta, \mTheta)\lesssim \left(dr \log|\Omega| \over |\Omega|\right)^{\min({\alpha \over 2+\alpha}, {1\over 2})}.
\end{equation}
\end{thm}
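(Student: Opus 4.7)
The plan is to obtain this result as a specialization of Theorem~\ref{thm:regression} to the basis-matrix predictor space, together with a discrete adaptation of the $\alpha$-smoothness hypothesis. First, I would recast the matrix completion model~\eqref{eq:modelcompletion} as an instance of the general model~\eqref{eq:model}-\eqref{eq:sign}. Setting $\mX_k = \ma_{i_k}\mb_{j_k}^T$ for $\omega_k = (i_k,j_k)\sim\Pi$ and $Y_k = \mY(\omega_k) = \mTheta(\omega_k) + \mE(\omega_k)$ gives $|\Omega|$ i.i.d.\ samples from a regression model with regression function $f(\ma_i\mb_j^T) = \Theta_{ij}$ and feature distribution $\mathbb{P}_{\mX}$ induced by $\Pi$. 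By Proposition~\ref{prop:signbasis}, membership $\mTheta\in\caliM(r)$ is equivalent to $f\in\caliF(r)$ on this predictor space (with the intercept handled by an $O(1)$ inflation of $r$). The proposed estimator~\eqref{eq:est} is exactly the learning-reduction estimator~\eqref{eq:stepfunction} specialized to $\Phi(r)$: each weighted classification defining $\hat{\mZ}_\pi$ coincides with~\eqref{eq:proposal} under the basis predictors and weight $|\mY(\omega)-\pi|$.

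Next, I would verify that the discrete $\alpha$-smoothness plays the same role in the proof chain as Definition~\ref{ass:decboundary}. Both the identifiability bound (Theorem~\ref{thm:identifiability}) and the sign function accuracy bound (Theorem~\ref{thm:main}) only use the CDF smoothness through inequalities of the form $G(\pi+t)-G(\pi-t)\leq C t^{\alpha}$ for $t$ above a resolution scale. In the discrete setting this resolution scale is $\Delta s = 1/d^2$, and the discrete definition is precisely tailored to give the same control for $t\geq \Delta s$. Restricting the integration/supremum steps of the continuous proofs to $t\geq \Delta s$ costs only an additive $O(\Delta s)$, which is negligible compared to the target rate $(dr/|\Omega|)^{\alpha/(2+\alpha)}$ whenever $|\Omega|\ll d^{4}$; the exclusion of the $|\tN|_{\text{cover}}\leq c_2$ levels mirrors the ``finite number of levels'' exception already allowed in Definition~\ref{ass:decboundary}.

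With those two reductions in place, I would invoke Theorem~\ref{thm:regression} verbatim with $n\leftarrow |\Omega|$ and $d_{\max}\leftarrow d$, and with the prescribed $H\asymp (|\Omega|/(dr))^{1/2}$, which is exactly the optimal resolution from~\eqref{eq:final}. This yields, on an event of probability at least $1-\exp(-dr)$,
\[
\onenormSize{}{\hat f - f} \;\lesssim\; \left(\frac{dr\log|\Omega|}{|\Omega|}\right)^{\min(\alpha/(2+\alpha),\, 1/2)}.
\]
Because the $L_1$ norm is taken under $\mX\sim\mathbb{P}_{\mX}$ with $\mathbb{P}_{\mX}$ induced by $\omega\sim\Pi$, the left-hand side is exactly $\mathbb{E}_{\omega\sim\Pi}|\hat{\mTheta}(\omega)-\mTheta(\omega)| = \textup{MAE}(\hat\mTheta,\mTheta)$, which gives~\eqref{eq:real}.

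The main obstacle is the discretization step: I need to confirm that the sharp Tsybakov-type variance bound behind Theorem~\ref{thm:main} still yields the exponent $\alpha/(1+\alpha)$ when the density is replaced by a $2\Delta s$-bin pseudo-density, and that the covering-number / metric-entropy estimate for $\Phi(r)$ restricted to the finite basis predictor space $\{\ma_i\mb_j^T\}$ still scales as $O(dr)$ (so the $rd_{\max}$ factor in~\eqref{eq:riskbound} turns into $dr$ rather than something larger such as $d^2$). Once these two technical points are in place, the remainder is a direct translation of Theorem~\ref{thm:regression} into the matrix-completion notation.
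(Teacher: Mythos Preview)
Your proposal is correct and follows essentially the same route as the paper: recast matrix completion as the general regression model over the basis-matrix predictor space with $n=|\Omega|$ and $d_{\max}=d$, observe that the discrete tolerance $\Delta s=1/d^2$ is dominated by the target rate so that the discrete $\alpha$-smoothness plays the same role as Definition~\ref{ass:decboundary}, and then invoke Theorem~\ref{thm:regression} directly (the $L_1$ norm under $\mathbb{P}_{\mX}$ becoming the MAE). Your flagged concerns about the metric entropy and the pseudo-density are handled exactly as you suggest; in particular the $O(rd_{\max})$ entropy bound of Lemma~\ref{lem:entropy} for $\Phi(r)$ already gives the $dr$ factor without modification.
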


\noindent
We remark that our estimation accuracy \eqref{eq:real} applies to both low-rank and high-rank signal matrices. Moreover, the estimation rate depends on the sign complexity $\mTheta\in\caliM(r)$, where $r$ can be much smaller than the usual matrix rank as shown in Proposition \ref{prop:signrank}. In fact, our theorem can also be relaxed for a growing $|\tN|_{\text{cover}}$ as a function of $d$, with a slight modification on the setup; see Appendix~\ref{sec:unbounded} for such an extension. We next illustrate Theorem \ref{thm:estimation} with two matrix completion examples and compare with the existing literature.  

\begin{example}[Stochastic block model based matrix completion]
The stochastic block model \citep{chi2020provable} assumes a checkerboard structure under marginal row and column permutations. The signal matrix belongs to our sign representable family $\mTheta \in \caliM(r)$, where $r$ is the number of blocks. Besides, the block matrix is $\infty$-globally smooth, because $\tN$ consists of finitely many $2\Delta s$-bin's covering the block means. Our signal estimate achieves the rate $\tilde \tO(d^{-1/2})$ when $\alpha=\infty$ with no missingness. This rate agrees with the minimax root-mean-square error (RMSE) rate for stochastic block models with a fixed number of blocks \citep{gao2016optimal}. 
\end{example}

\begin{example}[Single index model based matrix completion]
The single index model based completion \citep{ganti2015matrix} admits a signal matrix $\mTheta=g(\mB)$, where $g$ is an unknown monotonic function, and $\mB$ is an unknown low-rank matrix. Note that $\mTheta$ itself is often of a high matrix rank as shown in Fig~\ref{fig:limit}(a). Suppose the CDF of $\mTheta(\omega)$ has a bounded pseudo density with $\alpha=1$. Applying Theorem~\ref{thm:estimation} yields the estimation error rate $\tilde \tO(d^{-1/3})$, which is faster compared to the RMSE rate $\tilde \tO(d^{-1/4})$ obtained earlier \citep{ganti2015matrix}. 
\end{example}

Finally, we obtain the sample complexity of the nonparametric matrix completion, summarized in the next corollary.

\begin{corollary}[Sample complexity for nonparametric completion] \label{thm:sample-complexity}
Suppose the same conditions of Theorem~\ref{thm:estimation} hold. When $\alpha\neq 0$, with high probability at least $1-\exp(-dr)$ over $\tY_\Omega$, 
\begin{equation*}
\textup{MAE}(\hat \mTheta, \mTheta)\to 0, \quad \text{as}\quad {|\Omega|\over {d} r \log|\Omega|}\to \infty.
\end{equation*}
\end{corollary}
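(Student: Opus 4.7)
The plan is to deduce this corollary directly from Theorem \ref{thm:estimation} by checking that a strictly positive exponent in the bound, combined with the stated growth condition, forces the right-hand side to vanish. There is essentially nothing beyond bookkeeping to do; I will organize the argument into three short steps.

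First, I would invoke Theorem \ref{thm:estimation} verbatim under the same hypotheses, which gives
\[
\textup{MAE}(\hat \mTheta, \mTheta)\lesssim \left({dr \log|\Omega| \over |\Omega|}\right)^{\min\left({\alpha \over 2+\alpha},\ {1\over 2}\right)}
\]
with probability at least $1-\exp(-dr)$ over $\mY_\Omega$. Note the high-probability event and the choice of $H\asymp (|\Omega|/(dr))^{1/2}$ are inherited from that theorem, so no new probabilistic work is needed.

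Second, I would observe that the assumption $\alpha\neq 0$ (combined with $\alpha\geq 0$ from Definition \ref{ass:decboundary}) implies $\alpha/(2+\alpha) > 0$, so the exponent $\kappa := \min\bigl(\alpha/(2+\alpha),\ 1/2\bigr)$ is a strictly positive constant depending only on $\alpha$. This is the key qualitative consequence of $\alpha\neq 0$ that drives the corollary.

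Third, I would translate the growth hypothesis $|\Omega|/(dr\log|\Omega|)\to \infty$ into the statement that $dr\log|\Omega|/|\Omega|\to 0$, and conclude that
\[
\left({dr \log|\Omega| \over |\Omega|}\right)^{\kappa} \longrightarrow 0.
\]
Combining with the bound from Step 1 gives $\textup{MAE}(\hat\mTheta,\mTheta)\to 0$ on the same high-probability event, which is the claim. There is no real obstacle here: the only thing to be slightly careful about is that $\kappa$ does not depend on $d$, $r$, or $|\Omega|$, so it is legitimate to pass to the limit with $\kappa$ held fixed; this is immediate from Step 2. Thus the corollary is a one-line consequence of Theorem \ref{thm:estimation} once the positivity of the exponent is noted.
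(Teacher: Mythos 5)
Your proposal is correct and matches the paper's (implicit) treatment: the corollary is stated as an immediate consequence of Theorem~\ref{thm:estimation}, obtained exactly as you do by noting that $\alpha\neq 0$ makes the exponent $\min(\alpha/(2+\alpha),1/2)$ strictly positive while the hypothesis forces $dr\log|\Omega|/|\Omega|\to 0$ on the same high-probability event. No gaps.
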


\noindent
Corollary \ref{thm:sample-complexity} improves the earlier work~\citep{yuan2016tensor, pmlr-v119-lee20i} by allowing both low-rank and high-rank signals. Moreover, the sample size requirement depends only on the sign complexity $\tilde \tO(dr)$, but not the nonparametric complexity $\alpha$. We also note that $\tilde \tO(dr)$ roughly matches the degree of freedom of the signals, suggesting the optimality of our sample requirements.

%%%%%%%%%%%%%%%%%%%%%%%%%%%%%%%%%%%%%%%%%%%%%%
\section{Large-margin implementation and ADMM algorithm}
\label{sec:estimation}

In Section \ref{sec:bridge}, we have established the methodology and theory for the nonparametric matrix trace regression under the 0-1 loss, since this is the canonical loss for classification. However, this loss may be difficult to optimize in practice. In this section, we extend it with a continuous large-margin loss, and present the corresponding optimization algorithm. We consider two loss functions: the hinge loss $F(z) = (1-z)_+$ for support vector machines, and the psi-loss $F(z)=2\min(1,(1-z)_+)$ with $z_{+}=\max(z,0)$~\citep{shen2003psi}. These two losses are most commonly used in classification, and both satisfy the linear excess risk bound; see Section~\ref{sec:large-margin}. We focus on the nonparametric low-rank sparse matrix regression problem. With some straightforward modification, the solution applies to matrix completion and other matrix learning problems as well.

%%%%%%%%%%%%%%%%%%%%%%%%%%%%%%%%%%%%%%%%%%%%%%
\subsection{Large-margin learning} 

Specifically, we generalize the 0-1 loss minimization \eqref{eq:loss} to the following continuous large-margin loss minimization problem, 
\begin{align}\label{eq:large-margin}
\hat \phi_{\pi, F} = \argmin_{\phi \in\Phi(r,s_1,s_2)}\left\{ {1\over n}\sum_{i=1}^n |Y_i-\pi|F(\phi(\mX_i)\sign(Y_i-\pi))+ \lambda \FnormSize{}{\phi}^2\right\},
\end{align}
where $F(z)\colon \mathbb{R}\mapsto \mathbb{R}_{\geq 0}$ is a continuous function of the margin $z=y\phi(\mX)$, $\lambda>0$ is the penalty parameter, and $\FnormSize{}{\phi}$ is the penalty function. We set $\FnormSize{}{\phi}=\FnormSize{}{\mB}$, with $\mB$ being the coefficient matrix associated with $\phi\in\Phi(r,s_1,s_2)$. The use of large-margin loss in \eqref{eq:large-margin} allows us to leverage efficient large-margin optimization algorithms, while maintaining desirable statistical properties under mild conditions. The benefit of ridge penalization has been studied \citep{shen2003psi}. We obtain the corresponding regression function estimate as
\begin{equation}\label{eq:stepfunction-large-margin}
\hat f_{F} = {1\over 2H+1}\sum_{\pi \in \tH} \sign \hat \phi_{\pi, F}.
\end{equation}

%%%%%%%%%%%%%%%%%%%%%%%%%%%%%%%%%%%%%%%%%%%%%%
\subsection{ADMM optimization}

We next present an algorithm to solve \eqref{eq:large-margin} for a given $\pi\in\tH$. We first note that the estimation problem \eqref{eq:large-margin} is equivalent to the optimization,
\begin{equation}\label{eq:sampleoptim}
\min_{\substack{(\mB,b)\colon  \rank(\mB)\leq r, \text{supp}(\mB)\leq (s_1,s_2)}}{1\over n}\sum_{i=1}^n|\bar Y_{\pi, i}| F\big( [\langle \mX_i,\mB \rangle+b] \sign \bar Y_{\pi, i}\big) + \lambda\FnormSize{}{\mB}^2,
\end{equation}
where we recall $\bar Y_{\pi, i}=Y_i-\pi$ is the $\pi$-shifted response. The loss function $F$ can be convex, e.g., hinge loss, or non-convex, e.g., psi-loss. Meanwhile, the optimization \eqref{eq:sampleoptim} has a non-convex feasible region because of the low-rank and sparsity constraints. 

We propose an alternating direction method of multipliers (ADMM) algorithm to solve \eqref{eq:sampleoptim}. We introduce a dual variable and an additional feasibility constraint to perform coordinate descent in the augmented Lagrangian function. The augmented objective of \eqref{eq:sampleoptim} is
\begin{equation*} \label{eq:ADMM}
L(\mB,b, \mS,\mLambda,\rho) = {1\over n}\sum_{i=1}^n|\bar Y_{\pi, i}|F\big([\langle \mX_i,\mB \rangle+b]\sign \bar Y_{\pi, i}\big)  + \lambda\FnormSize{}{\mB}^2+\rho\FnormSize{}{\mB-\mS}^2+\langle \mLambda, \mB-\mS\rangle,
\end{equation*}
where $\mB\in \mathbb{R}^{d_1\times d_2}$ is the unconstrained primal variable, $\mS\in\mathbb{R}^{d_1\times d_2}$ is the constrained dual variable satisfying $\rank(\mS)\leq r$ and $\text{supp}(\mS)\leq (s_1,s_2)$, $\mLambda\in\mathbb{R}^{d_1\times d_2}$ is the Lagrangian multiplier, and $\rho>0$ is the step size parameter. Note that in $L(\mB,b, \mS,\mLambda,\rho)$, the non-convexity has moved from the first two terms in $\mB$ to the last two simpler terms in $\mS$. This separability simplifies the optimization for a wide range of loss functions and constraints. 

We next minimize $L(\mB,b, \mS,\mLambda,\rho)$ via coordinate descent, by iteratively updating one variable at a time while holding others fixed. Each update reduces to a simpler problem and can be efficiently solved by standard algorithms. 

Specifically, given variables $(\mS,\mLambda,\rho)$ and $\bar \mS = (2\rho\mS-\mLambda) / [2(\rho+\lambda)]$, the objective with respect to $(\mB,b)$ is 
\begin{equation*} \label{eq:primal}
L(\mB,b|\mS,\mLambda,\rho)={1\over n}\sum_{i=1}^n |\bar Y_{\pi, i}| F\big ([\langle \mX_i,\mB \rangle+b]\sign \bar Y_{\pi, i}\big) +(\lambda+\rho)\FnormSize{}{\mB-\bar \mS}^2.
\end{equation*}
Optimization with~\eqref{eq:primal} is a standard vector based classification problem with a ridge penalty and an offset $\bar \mS$. There are a number of state-of-art algorithms for weighted SVM~\citep{wang2008probability} and psi-learning~\citep{shen2003psi}, which are readily available to solve this problem.  

Next, given $(\mB,b,\mLambda,\rho)$, and $\bar \mB=(2\rho\mB+\mLambda) / (2\rho)$, the objective with respect to $\mS$ is
\begin{equation}\label{eq:dual}
L(\mS|\mB,b,\mLambda,\rho)=\FnormSize{}{\mS-\bar \mB}^2,\quad \text{subject to} ~~ \rank(\mS)\leq r \text{ and }\text{supp}(\mS)\leq (s_1,s_2).
\end{equation}
This is equivalent to the best sparse low-rank approximation, in the least-square sense, to the matrix $\mB$. Compared to the original objective~\eqref{eq:sampleoptim}, the least-square objective is easier to handle. A number of learning algorithms have been designed to solve this problem, e.g., sparse PCA, sparse SVD, and projection pursuit \citep{Ma2013}. We adopt the recently developed double projection method, which has a competitive performance in the high dimensional regime \citep{Ma2016}. 

Finally, the Lagrangian multiplier $\mLambda$ is updated by $\mLambda\leftarrow\mLambda+2\rho(\mB-\mS)$. Following some common practice in matrix non-convex optimization \citep{Ma2016}, we run the optimization from multiple initializations to locate a final estimate with the lowest objective value. We summarize the above optimization procedure in Algorithm \ref{alg:weighted}.

%%%%%%%%%%%%%%%%%%%%%%%%%%%%%%%%%%%%%%%%%%%%%%
\subsection{Hyperparameter tuning}

We briefly describe the hyperparameters in Algorithm~\ref{alg:weighted} and discuss their choices in practice. There are two sets of hyperparameters, one set for model specification, and the other for algorithmic stability. The model hyperparameters are $(r,s_1,s_2)$, which determine the complexity of sign functions. We choose $(r,s_1,s_2)$ via a grid search based on cross-validation regression error. The resolution in grid search depends on the problem size; for instance, in our brain connectivity data example with $d_1=d_2=68$ in Section~\ref{sec:brain}, we search for the optimal values of $r, s_1,s_2$ over $[d]$, with an increment of 5, under the natural constraint $r\leq s_1=s_2$. The algorithm hyperparameters are $(H, \lambda, \rho)$. For $H$ and $\lambda$, their optimal choices are given in Theorems \ref{thm:regression} and \ref{thm:extension}, respectively. In practice, we default $H=\min(20, \sqrt{n})$, and $\lambda=\min(0.1,n^{-1})$, which perform well in our numerical experiments. For the step size $\rho$ that controls the closeness between the dual and primal variables, we initialize from $1$, and increase its value geometrically by 1.1 during the iterations until the relative change in the primal residual $\FnormSize{}{\mB-\bar \mS}$ falls below a threshold~\citep{parikh2014proximal}. In our numerical analyses, we observe this scheme provides a stable optimization trajectory. 

\begin{algorithm}[t!]
\caption{{\bf Nonparametric low-rank two-way sparse matrix regression via ADMM} } \label{alg:weighted}
\begin{algorithmic}[1] 
\INPUT data $(\mX_i,Y_{\pi, i})_{i\in[n]}$, rank $r$, support $(s_1,s_2)$, ridge parameter $\lambda$, resolution parameter $H$.
\For {$\pi \in \tH=\{ -1, \ldots, -{1\over H}, 0, {1\over H},\ldots, 1\}$}
\State initialize dual variable $\mS$ randomly, Lagrangian multiplier $\mLambda=\mathbf{0}$, step size $\rho=1$, and $\bar Y_{\pi, i}$.
\Repeat
\State update $(\mB,b) \leftarrow \argmin L(\mB, b|\mS,\mLambda,\rho)$.
\State update $\mS \leftarrow  \argmin \FnormSize{}{\mS-{1\over 2\rho}(2\rho\mB+\mLambda)}^2 \ \text{subject to }\rank(\mS)\leq r$ and $\text{supp}(\mS)\leq (s_1,s_2)$.
\State update $\mLambda \leftarrow \mLambda+2\rho(\mB-\mS)$.
\State update $\rho\leftarrow1.1\rho$.
\Until convergence
\State return trace function estimate, $\hat \phi_\pi\colon \mX\mapsto \langle \hat \mB, \mX \rangle+\hat b$.
\EndFor
\OUTPUT nonparametric regression function estimate, $\hat f= {1\over 2H+1}\sum_{\pi \in \tH}\sign \hat \phi_\pi$.
\end{algorithmic}
\end{algorithm}

%%%%%%%%%%%%%%%%%%%%%%%%%%%%%%%%%%%%%%%%%%%%%%
\subsection{Large-margin statistical guarantees}\label{sec:large-margin}

We next establish the statistical accuracy for the large-margin estimators under some additional technical assumptions. Let $\bayespif=\sign(f-\pi)$ denote the ground truth sign function at $\pi\in[-1,1]$, and let
\begin{align} \label{eq:riskdef}
\begin{split}
\risk(\phi) & =  {1\over 2}\mathbb{E}|Y-\pi| \big|\sign(Y-\pi)-\sign \phi(\mX)\big|, \\
\riskF(\phi) & =  \mathbb{E}|Y-\pi|F\big(\phi(\mX)\sign(Y-\pi)\big), 
\end{split}
\end{align}
denote the 0-1 risk and F-risk, respectively, where $F$ is the surrogate continuous loss, and the expectation is taken with respect to $(\mX,Y)\sim \mathbb{P}_{\mX,Y}$. For simplicity, we assume $d_1 = d_2 = d$ and $\FnormSize{}{\mX}\leq 1$ with probability 1. We consider the high dimensional regime where both $n$ and $d$ grow, while $(r,s_1,s_2)$ remain fixed. We need the following assumptions. 
 
\begin{assumption}[Assumptions on surrogate loss]\label{ass:main} \hfill
\begin{enumerate}
\item[(a)] (Approximation error) For any given $\pi\in[-1,1]$, assume there exist a sequence of functions $\phi^{(n)}_\pi\in\Phi(r,s_1,s_2)$, such that $\riskF(\phi^{(n)}_\pi)-\riskF(\bayespif)\leq a_n$, for some sequence $a_n\to 0$ as $n\to\infty$. Furthermore, assume $\FnormSize{}{\phi_{\pi}^{(n)}} \leq J$ for some constant $J>0$. 
\item[(b)] (Common loss) $F(z)=(1-z)_{+}$ is hinge loss, or $F(z)=2\min(1,(1-z)_{+})$ is psi-loss. 
\end{enumerate}
\end{assumption}

\noindent
Assumption~\ref{ass:main}(a) quantifies the representation capability of $F$ and $\Phi(r, s_1, s_2)$. We note that, although the Bayes rule $\bayespif$ also depends on $n$ implicitly through $d=d(n)$, we drop the dependence on $n$ for simpler notation. Assumption~\ref{ass:main}(b) implies the Fisher consistency bound for the weighted risk \citep{scott2011surrogate},
\begin{equation*} \label{eq:fisher}
\risk(\phi)-\risk(\bayespif)\leq C[\riskF(\phi)-\riskF(\bayespif)], \text{ for all $\pi\in[-1,1]$ and all $\phi$}.
\end{equation*}
where $C=1$ for the 0-1 or the hinge loss, and $C=1/2$ for the psi-loss; see Lemma~\ref{lem:prepare} in Appendix. Therefore, it suffices to bound the excess $F$-risk in order to bound the usual 0-1 risk. Under Assumption \ref{ass:main}, we establish the estimation accuracy guarantee for the large-margin estimators \eqref{eq:large-margin} and \eqref{eq:stepfunction-large-margin}. 

\begin{thm}[Large-margin estimation]~\label{thm:extension} 
Consider the same setup as in Theorem~\ref{thm:sparse}, and denote $t_n = {r(s_1+s_2)\log d \over n}$. Suppose the surrogate loss $F$ satisfies Assumption~\ref{ass:main} with $a_n \lesssim t_n^{(\alpha+1)/(\alpha+2)}$. Set $H\asymp t_n^{-1/2}$ in~\eqref{eq:stepfunction-large-margin} and $\lambda\asymp t_n^{(\alpha+1)/(\alpha+2)}+t_n/\rho(\pi,\tN)$ in \eqref{eq:large-margin}. Then, with high probability at least $1-\exp(-nt_n)$ over the training data $(\mX_i,Y_i)_{i\in[n]}$, we have:
\begin{enumerate}[label=(\alph*)]
\item (Sign function estimation). For all $\pi\in[-1,1]$ except for a finite number of levels,
\begin{equation*}
\onenormSize{}{\sign\hat \phi_{\pi,F}-\sign(f-\pi)}\lesssim t_n ^{\alpha\over 2+\alpha}+{1\over \rho^2(\pi,\tN)}t_n.
\end{equation*}

\item (Regression function estimation). 
\begin{equation*}
\onenormSize{}{\hat f_{F}- f} \lesssim  \left(t_n\log n\right)^{\min\left( {1\over 2},{\alpha \over 2+\alpha} \right)}.
\end{equation*}
\end{enumerate}
\end{thm}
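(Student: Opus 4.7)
The strategy is to mirror the arguments for Theorems~\ref{thm:main} and~\ref{thm:regression}, substituting the surrogate $F$-loss for the 0-1 loss. By Assumption~\ref{ass:main}(b), the hinge and psi losses are Fisher-consistent for the weighted classification objective, so there is an absolute constant $C$ with $\risk(\phi)-\risk(\bayespif)\le C\bigl[\riskF(\phi)-\riskF(\bayespif)\bigr]$ uniformly in $\phi$. Once the excess $F$-risk of $\hat\phi_{\pi,F}$ is controlled at the target rate, every 0-1 based step from Sections~\ref{sec:bridge}--\ref{sec:examples} carries over, including the conversion from risk excess to $L_1$ set difference via Theorem~\ref{thm:identifiability}.

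The first main step is a fast-rate oracle inequality for $\hat\phi_{\pi,F}$. Using optimality of $\hat\phi_{\pi,F}$ for the penalized empirical objective, the approximator $\phi^{(n)}_\pi$ supplied by Assumption~\ref{ass:main}(a) (with $\FnormSize{}{\phi^{(n)}_\pi}\le J$ and approximation error $a_n$), and a basic inequality, I would obtain
\begin{equation*}
\riskF(\hat\phi_{\pi,F})-\riskF(\bayespif)\lesssim a_n+\lambda J^2+\sup_{\phi\in\Phi_\lambda}\bigl|(\riskF-\eriskF)(\phi)-(\riskF-\eriskF)(\bayespif)\bigr|,
\end{equation*}
where $\Phi_\lambda\subset\Phi(r,s_1,s_2)$ is the effective class carved out by the ridge penalty. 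The supremum is then controlled by a peeling/localization argument powered by a Bernstein-type variance bound on the excess loss; this variance bound is supplied by the $\alpha$-smoothness of Definition~\ref{ass:decboundary}, which plays the role of a Mammen--Tsybakov noise condition and is the mechanism that turns the naive $\sqrt{t_n}$ rate into the fast rate $t_n^{(\alpha+1)/(\alpha+2)}$. The ambient factor $r(s_1+s_2)\log d$ in $t_n$ comes from Dudley/Maurey-type chaining against the covering numbers of the low-rank, two-way sparse trace class, exactly as in Theorem~\ref{thm:sparse}.

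Combining the excess $F$-risk bound with Fisher consistency and then with Theorem~\ref{thm:identifiability} (solving the implicit inequality~\eqref{eq:identity} with $d_\pi(\cdot,\bayesS(\pi))$ of order $t_n^{(\alpha+1)/(\alpha+2)}+\lambda$) yields part (a); the stated choice $\lambda\asymp t_n^{(\alpha+1)/(\alpha+2)}+t_n/\rho(\pi,\tN)$ is precisely the level that balances the two terms of~\eqref{eq:identity}. For part (b), I would reuse the sign-series aggregation from Theorem~\ref{thm:regression}: an $O(1/H)$ bias from finite resolution, a $\log H$ factor from a union bound of part (a) across the $2H+1$ levels in $\tH$, and a variance-type term aggregating the $t_n/\rho^2(\pi,\tN)$ contributions (well controlled because $\tN$ is finite and $f$ is $\alpha$-smooth away from it). Choosing $H\asymp t_n^{-1/2}$ balances bias against the two estimation terms and produces the claimed $(t_n\log n)^{\min(1/2,\alpha/(2+\alpha))}$ rate.

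The principal obstacle will be the fast-rate oracle inequality in the first step. Non-convexity of the hypothesis class $\Phi(r,s_1,s_2)$, together with a possibly non-convex psi-loss, makes the localization argument over the penalized class delicate; carefully tracking how the ridge penalty defines the effective class, how the $\alpha$-smoothness variance bound transfers from the 0-1 excess loss to the $F$-excess loss (a quantitative reading of Assumption~\ref{ass:main}(b)), and how the low-rank and two-way sparse structure enters the chaining, is the technical core of the proof.
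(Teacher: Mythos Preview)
Your plan is essentially the paper's approach: Fisher consistency to pass from $F$-risk to 0-1 risk (Lemma~\ref{lem:prepare}), a localized empirical-process bound driven by a variance-to-mean relationship coming from $\alpha$-smoothness, complexity control via covering numbers of the rank/sparsity constrained trace class (Lemma~\ref{lem:entropy}), and then the same aggregation as in Theorem~\ref{thm:regression} for part (b). The paper packages this as a unified result (Theorem~\ref{thm:unified}) with Lemma~\ref{lem:risk} doing the heavy lifting.

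Two concrete devices you will need that your sketch leaves implicit. First, the paper localizes by a \emph{double} peeling over both the excess-risk level and the Frobenius norm, partitioning into sets $A_{s,k}=\{\phi: (s{+}1)L_n\le \riskF(\phi)-\riskF(\bayespif)<(s{+}2)L_n,\ (k{-}1)J^2\le\FnormSize{}{\phi}^2<kJ^2\}$; the ridge penalty then enters the threshold $M(s,k)=sL_n+\lambda(k{-}2)J^2$ and is what makes the sum over $k$ converge. Second, for the unbounded hinge loss the variance-to-mean inequality in Lemma~\ref{lem:prepare}(c) is not available directly; the paper fixes this by evaluating $\hat\phi_{\pi,F}$'s excess risk under the $T$-truncated hinge loss $F'(z)=\min((1-z)_+,T)$ with $T=\max(2,J)$, using $\widehat{\textup{Risk}}_{\pi,F'}(\phi)\le\widehat{\textup{Risk}}_{\pi,F}(\phi)$ and $\widehat{\textup{Risk}}_{\pi,F'}(\phi_\pi^{(n)})=\widehat{\textup{Risk}}_{\pi,F}(\phi_\pi^{(n)})$ to transfer optimality, and then running the bounded-loss argument for $F'$. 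This is exactly the ``principal obstacle'' you flag; these two tricks are the resolution.
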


%%%%%%%%%%%%%%%%%%%%%%%%%%%%%%%%%%%%%%%%%%%%%%
\section{Simulations}
\label{sec:simulation}

In this section, we first evaluate the empirical performance of our method \NonparaM\ through four experiments, with varying sample size, response type, matrix dimension, and model complexity. We then compare \NonparaM\ with some alternative methods.

%%%%%%%%%%%%%%%%%%%%%%%%%%%%%%%%%%%%%%%%%%%%%%
\subsection{Impacts of sample size, matrix dimension, and model complexity}
\label{sec:validation}

We consider a random matrix predictor $\mX\in\mathbb{R}^{d\times d}$ with i.i.d.\ entries sampled from Uniform[0,1], and simulate two types of response, continuous and binary, through 
\begin{itemize}
\item Continuous regression: $Y=f(\mX)+\varepsilon$, where $\varepsilon \sim \text{Normal}(0,0.1^2)$;
\item Binary regression: $Y\in\{-1,1\}$, with $\mathbb{P}(Y=1|\mX)={1\over 2}(f(\mX)+1)$.
\end{itemize}
We set the regression function $f(\mX)=h(z)$, where $h\colon \mathbb{R}\to[-1,1]$ is a non-decreasing function, $z\in\mathbb{R}$ is a nonlinear predictor that $z= (G^{-1}\circ \bar G)(\langle \mX, \mB \rangle)$, $\circ$ denotes function composition, $\mB\in\mathbb{R}^{d\times d}$ is a fixed rank-$r$, supp-$(s,s)$ matrix, $\bar G\colon \mathbb{R}\to[0,1]$ is the CDF of $\langle \mX, \mB \rangle$ induced by $\mX\sim \mathbb{P}_{\mX}$ so that $\bar G(\langle \mX, \mB \rangle)\sim$ Uniform[0,1], and $G\colon \mathbb{R}\to[0,1]$ is the CDF of some reference distribution. This construction yields a highly nonlinear function $f$. We set the matrix dimension $d=20,30,\ldots,60$, the training sample size $n=150, 200, \ldots, 400$, and various combinations of $(r,s)$. In this study, we set $\lambda=10^{-2}$, $H=20$, and use the true $(r,s)$ in Algorithm~\ref{alg:weighted}, and study parameter tuning in Section \ref{sec:comparison}. 

The first experiment assesses the impact of the sample size $n$ for the continuous regression. We set $h(z)=[\exp(z)-1] / [\exp(z)+1]$, $G$ as the CDF of a standard normal distribution, the matrix dimension $d=20$, and the model complexity $(r,s)=(2,2),(2,3),(5,5)$. Fig~\ref{fig:logistic}(a) summarizes the main model configurations, including the density of $z=z(\mX)$, the function $h=h(z)$, and the resulting density of $f(\mX)$. Fig~\ref{fig:logistic}(b) reports the prediction error, $\onenormSize{}{\hat f - f}$, as the sample $n$ increases. We see that the error decays polynomially with $n$. We also see that a higher rank $r$ or a higher support $s$ leads to a larger error, as reflected by the upward shift of the curve as $(r,s)$ increases, since it implies a higher model  complexity.  

The second experiment considers a binary response. Fig~\ref{fig:logistic}(c) reports the prediction error $\onenormSize{}{\hat f - f}$ as the sample size $n$ increases. We see that the error decays polynomially with $n$. We also note that, in both cases, the matrix predictor has the dimension $20 \times 20=400$ whereas $n$ is on the order of hundreds. Nevertheless, our nonparametric method consistently learns the function $f$ well from limited data without specifying a priori the functional form.

The third experiment evaluates the impact of the matrix dimension $d$. We fix the sample size $n=200$ and increase $d$. Fig~\ref{fig:logistic}(d) reports the prediction error. We see that  the error increases slowly with $d$, and the growth appears well controlled by the log rate. Note that, in this example, as $d$ increases, the number of effective entries remains unchanged, but the combinatoric complexity increases in the model space. The increasing error is an unavoidable price to pay for not knowing the positions of the $s$ active entries. This example shows the ability of our method to effectively handle a massive number of noisy features. 

\begin{figure}[b!]
\centering
\includegraphics[width=\textwidth]{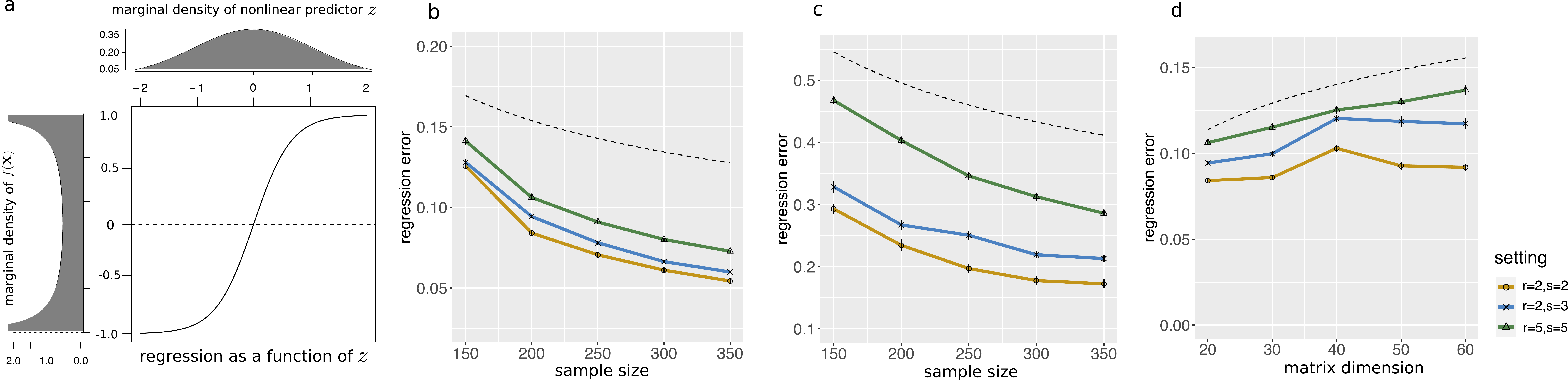}
\caption{Finite sample performance under a smooth function. (a) simulation setup; (b) prediction error with varying $n$ and $d=20$ for the continuous response; (c) for the binary response; (d) with varying $d$ and $n=200$. The dashed lines in panels (b)-(d) represent upper bounds $\tO(n^{-1/3})$, $\tO(n^{-1/3})$, and $\tO(\log d)$, respectively. The results are based on 30 data replications.} 
\label{fig:logistic}
\end{figure}

\begin{figure}[t!]
\centering
\includegraphics[width=\textwidth]{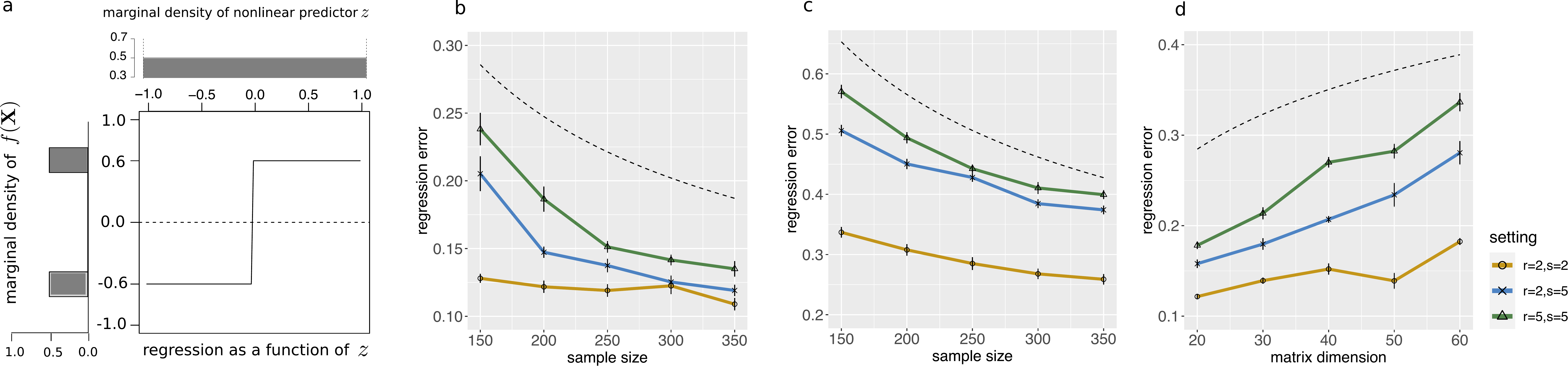}
\caption{Finite sample performance under a non-smooth function. The setup is similar as Fig~\ref{fig:logistic}. The dashed lines in panels (b)-(d) represent upper bounds $\tO(n^{-1/2})$, $\tO(n^{-1/2})$, and $\tO(\log d)$, respectively.}
\label{fig:step}
\end{figure}

The fourth experiment investigates the impact of smoothness in regression function. In Section~\ref{sec:idea}, we show that the probabilistic behavior of $f(\mX)$ plays a key role in our learning reduction approach. Here we assess the empirical performance by repeating all the above experiments using a model configuration with $z=z(\mX)\sim \text{Uniform}[-1,1]$, $h(z)=-0.6+1.2\mathds{1}(z>0)$, and $(r,s)=c(2,2),(2,5),(5,5)$. This case falls on the other end of the spectrum in contrast to the infinity smooth function in Fig~\ref{fig:logistic}(a). That is, $f(\mX)$ now concentrates at two mass points $\pi=\pm 0.6$. This makes the $\pi$-sign function estimation challenging around $\pi=\pm 0.6$ because of the non-identifiability. Fig~\ref{fig:step} reports the new model configurations and the corresponding results. Interestingly, we find that our method still maintains a good performance. Such a robustness may be explained by the fact that we aggregate in total $2H+1$ sign functions, each of which incurs at most $1/(2H+1)$ error to the regression function estimation. Therefore, our function estimate is robust against some off-target sign estimates, as long as the majority are accurate. This observation is consistent with the consistency result established in Section \ref{sec:bridge}.

%%%%%%%%%%%%%%%%%%%%%%%%%%%%%%%%%%%%%%%%%%%%%%
\subsection{Comparison with alternative methods}
\label{sec:comparison}

Next, we compare our method with several popular alternative solutions. In this comparison, we adopt the simulation setup as in \cite{relion2019network}, but add more challenging matrix effects. Particularly, in this setup, the response is binary, and the predictor is a symmetric matrix that encodes a network. In this article, we have been targeting a general matrix predictor, which is directly applicable to a symmetric matrix, though we do not focus on symmetry. Moreover, as we show in Section~\ref{sec:joint} of the Appendix, the data generating model falls into our general family of nonparametric trace regression when there is no noise, but no longer so when there is noise. Therefore, we also investigate the performance of our method under model misspecification when including the noise. 

More specifically, we simulate from a latent variable model $(\mX,Y)|\pi$, where we generate $\pi$ i.i.d.\ from Uniform[0,1], and conditional on $\pi$, we generate $Y \sim \text{Bernoulli}(\pi)$, and 
\begin{eqnarray}\label{eq:pattern}
\mX=\entry{\mX_{ij}},\  \mX_{ij}\stackrel{\text{indep.}}{\sim} \text{Normal}\left( g_{ij}(\pi)\mathds{1}(\text{edge $(i,j)$ is active}), \sigma^2 \right), 
\end{eqnarray}
where the edge connectivity strength, denoted by $g_{ij}(\pi)$, varies depending on the location of $(i,j)\in[d]^2$, and the mean response $\pi$. Fig~\ref{fig:region} shows the activation pattern we consider that specifies the locations of the active edges. The active region is further divided into several subregions, each of which has its own signal function $g_{ij}(\cdot)\colon [0,1]\to \mathbb{R}$. The function form of $g_{ij}(\cdot)$ is randomly drawn from a pre-specified library consisting of common polynomial, log, and trigonometric functions. We set $d=68$, the training sample size $n=160$, and the testing size $80$. In the noiseless case $\sigma=0$ in~\eqref{eq:pattern}, the cross and block patterns are low-rank with $r = 3$ and 5, respectively, whereas the star and circle patterns are nearly full-rank, with a numerical rank $r \approx 30$ on the supported submatrix. 

\begin{figure}[t!]
\centering
\includegraphics[width=3.8cm]{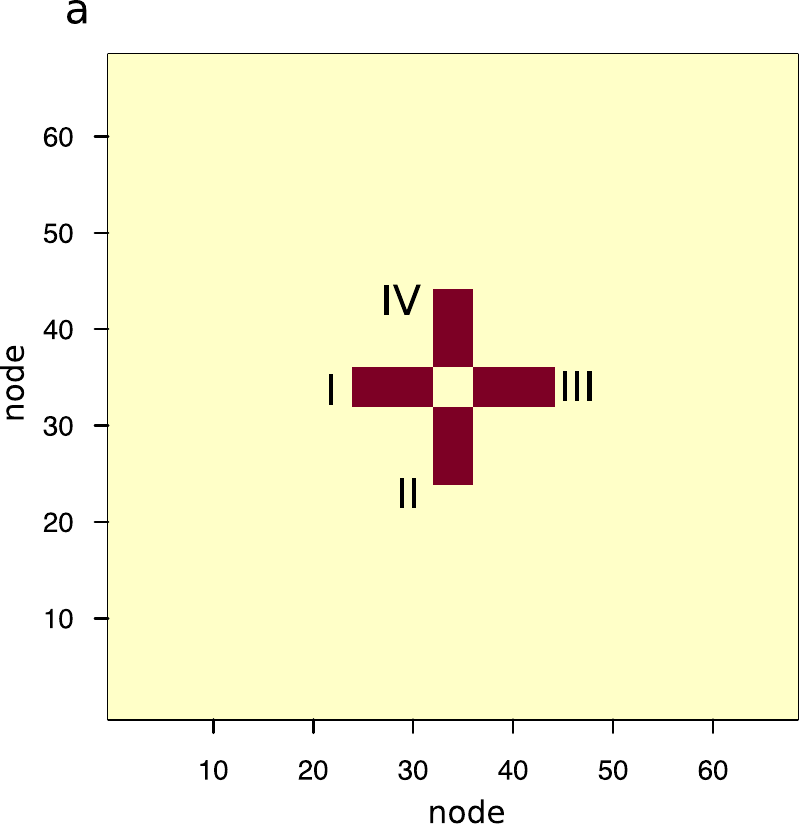}
\includegraphics[width=3.8cm]{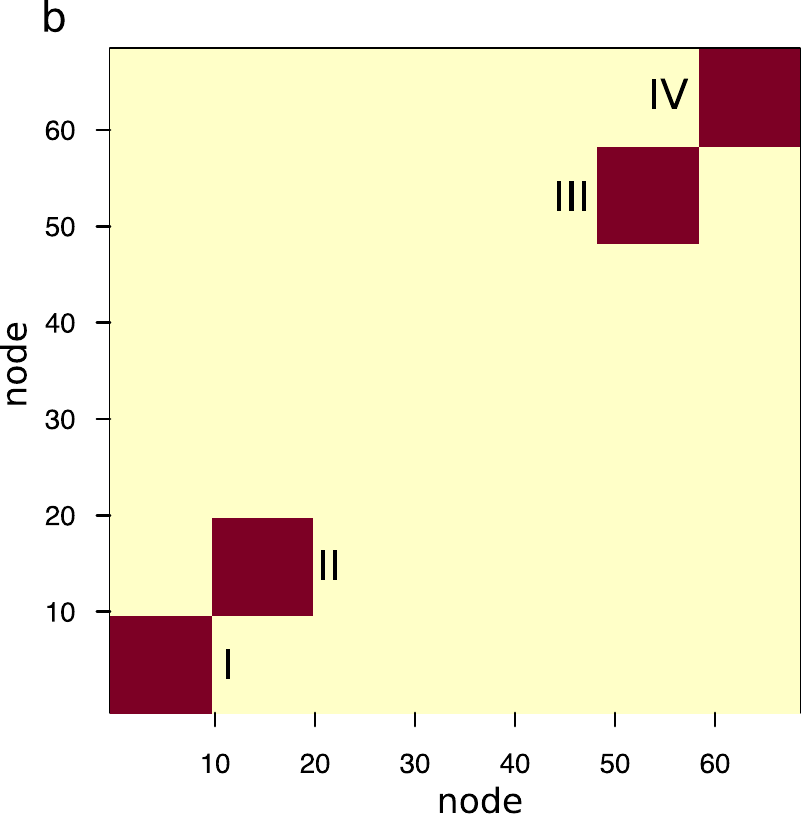}
\includegraphics[width=3.8cm]{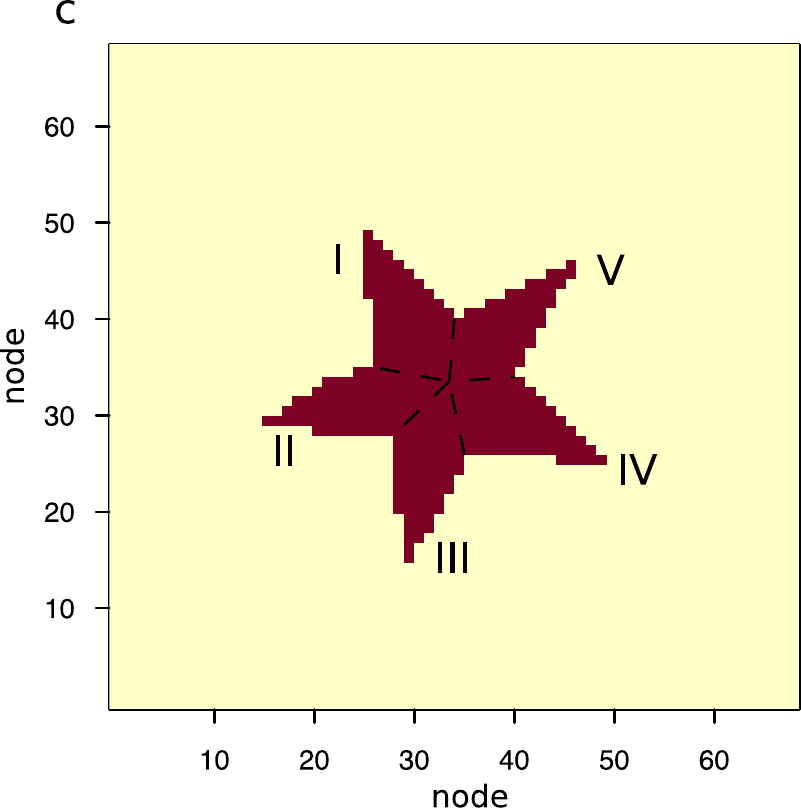}
\includegraphics[width=4.7cm]{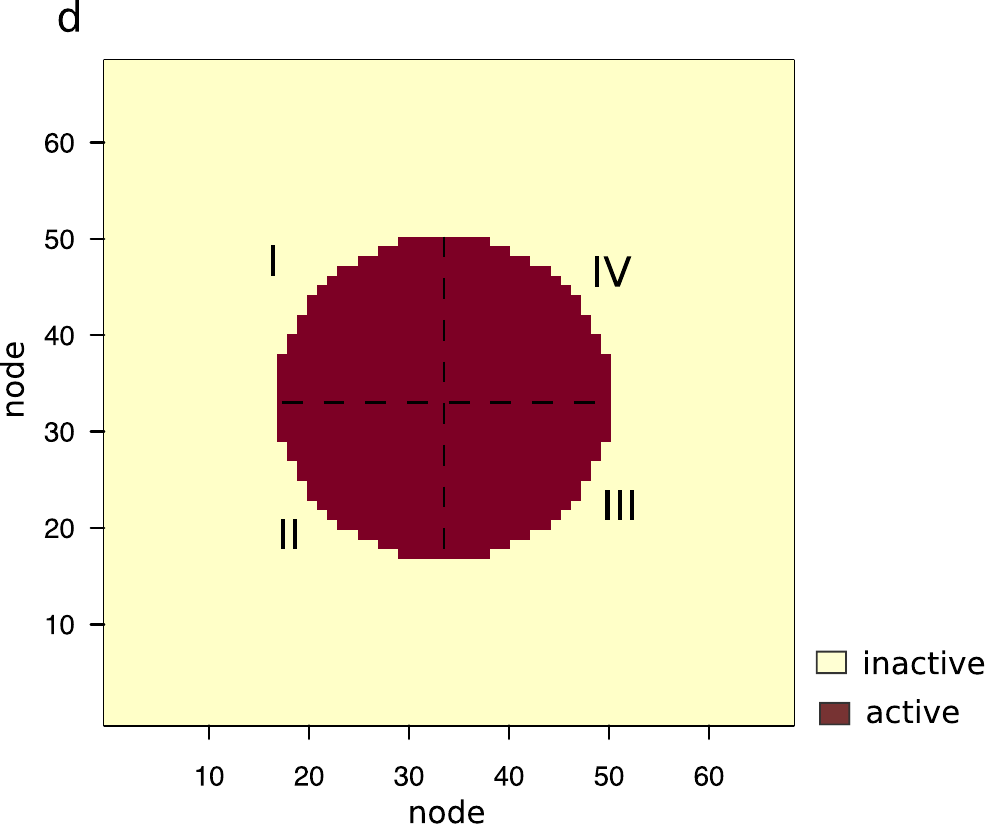}
\caption{Four activation patterns in simulations. The active region is divided into four or five subregions, denoted by I, II, ..., V, each of which has its own edge connectivity signal $g_{pq}(\pi)$.}
\label{fig:region}
\end{figure}

We compare the following four estimation methods. 

\begin{itemize}
\item Unstructured logistic regression for vector predictors (\Lasso, \citep{Zou2005}). This method vectorizes the matrix predictor into a high dimensional vector, then employs a logistic loss with an elastic net penalty. 

\item Generalized trace regression for matrix predictors (\LogisticM,~\citep{relion2019network}). This method fits a parametric trace regression model with a logistic link and a symmetric matrix predictor. It imposes a group lasso penalty to encourage two-way sparsity. 

\item Convolutional Neural Network (\CNN) with two hidden layers implemented in Keras \citep{chollet2018deep}. We apply 64 filters with $3\times 3$ convolutional kernels to the matrix-valued predictor, followed by a pooling layer with size $5\times 5$. The resulting features are fed to a fully connected layer of neural network with ReLU activation. 

\item {\bf A}ggegration of {\bf S}tructured {\bf SI}gn {\bf S}eries for {\bf T}race regression (\NonparaM), our method. 
\end{itemize}

\noindent 
Among these methods, \Lasso\ serves as a baseline to assess the gain of modeling a matrix predictor over a vector predictor, \LogisticM\ is a parametric model, whereas \CNN\ and \NonparaM\ are nonparametric solutions for matrix predictors. We feed each method with the binary response and the network adjacency matrix as the predictor after randomly permuting the node indices. Because \LogisticM\ only supports a symmetric matrix predictor, we provide it with $(\mX+\mX^T)/2$ as the input. We use the default parameters of \LogisticM, and select the tuning parameters of \Lasso, \CNN, and our method \NonparaM, including the rank $r$ and sparsity parameters $(r,s)$, by 5-fold cross validation. 

\begin{figure}[b!]
\centering
\includegraphics[width=\textwidth]{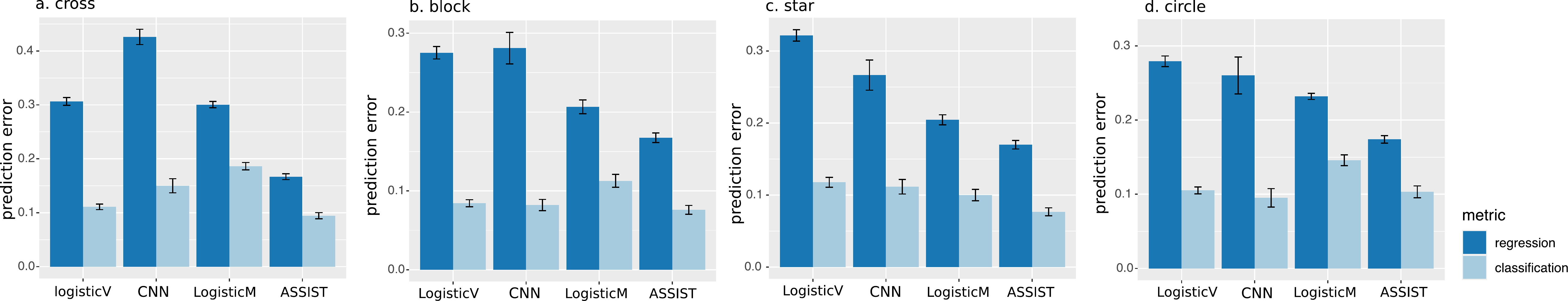}
\caption{Performance comparison of various methods under four different activation patterns. Reported are the prediction error $\onenormSize{}{\hat f - f}$, denoted by ``regression",  and the misclassification error at $\pi=1/2$, denoted by ``classification". The results are based on 30 data replications.}
\label{fig:compare}
\end{figure}

Fig \ref{fig:compare} reports both the prediction error $\onenormSize{}{\hat f - f}$ and the misclassification error at $\pi=1/2$ of the four methods evaluated on the testing data. For prediction, we see that \NonparaM\ consistently outperforms the alternatives, and the improvement is substantial. For example, the relative reduction using \NonparaM\ over the next best approach, \LogisticM, is over 20\% for patterns (a) and (d), and over 15\% for patterns (b) and (c). These results clearly demonstrate the benefit of our nonparametric approach. Moreover, we find that neither \Lasso\ nor \CNN\ has a satisfactory prediction. A possible explanation is that \Lasso\ takes the vectorized matrix as the input and therefore loses the two-way pairing information. Meanwhile, \CNN\ assumes spacial ordering within row and column indices. Although local similarity is important for the usual imaging analysis, the row and column indices take no particular order for a network. Actually, adjacency matrices after row or column permutation represent the same network, and thus the index-invariant methods, such as \LogisticM\ and \NonparaM, perform better. For classification, we also see that our method overall performs the best. The only exception is the circle pattern where \CNN\ has a slightly lower classification error. This is perhaps due to the fact that the circle is nearly full rank and thus favors a more complicated model.  Interestingly, we also find that the advantage of our method is more substantial in regression prediction than in classification, since classification is easier than regression. Moreover, with model noise included, our method still performs well even though the true model does not exactly follow our model specification. 

\begin{figure}[t!]
\centering
\includegraphics[width=8.1cm]{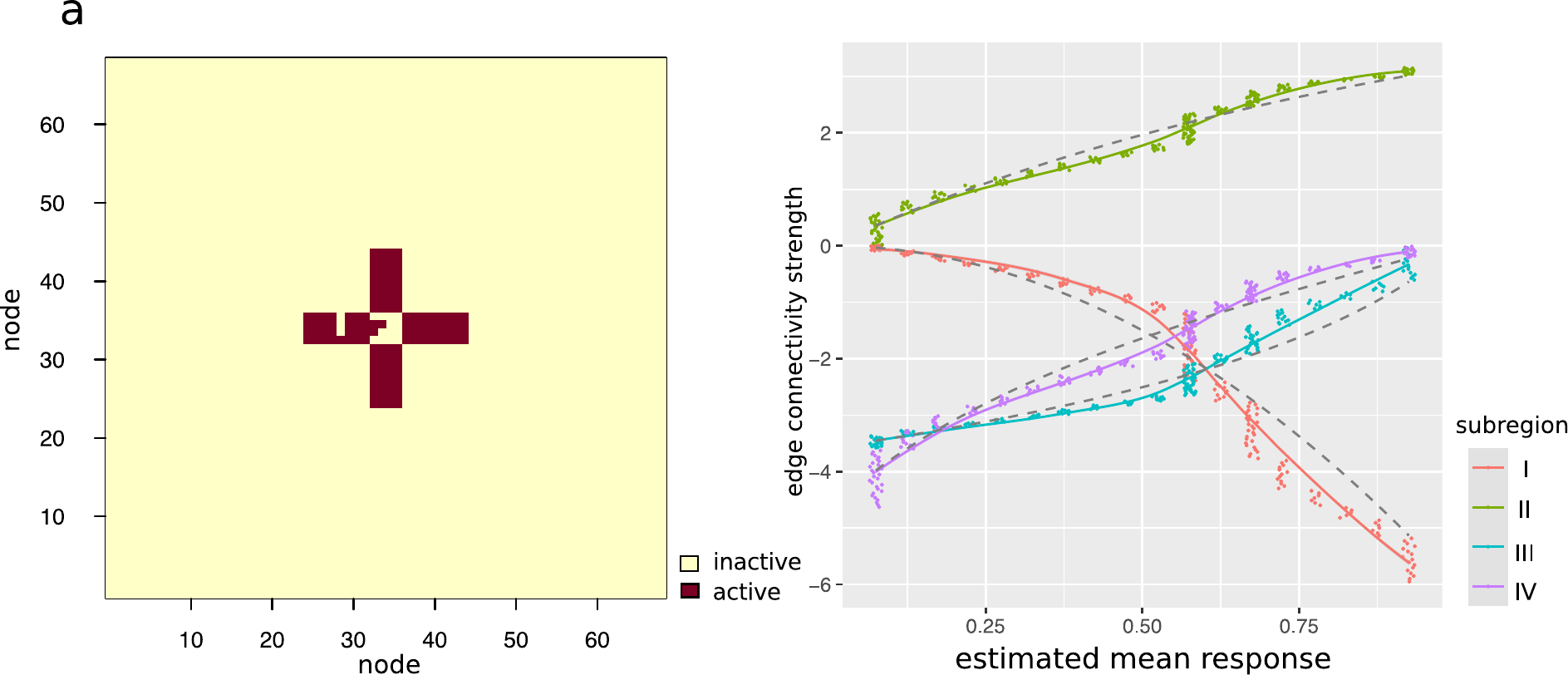} 
\includegraphics[width=8.1cm]{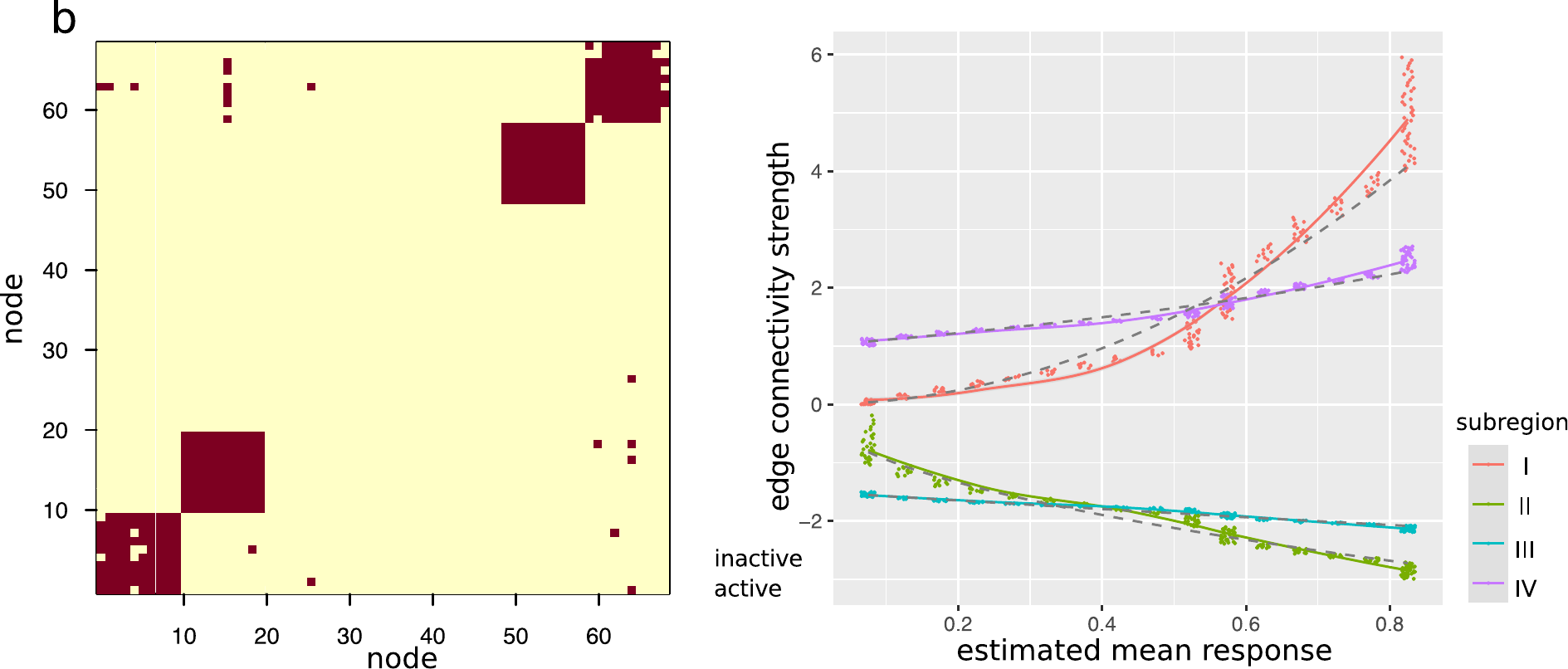}
\includegraphics[width=8.1cm]{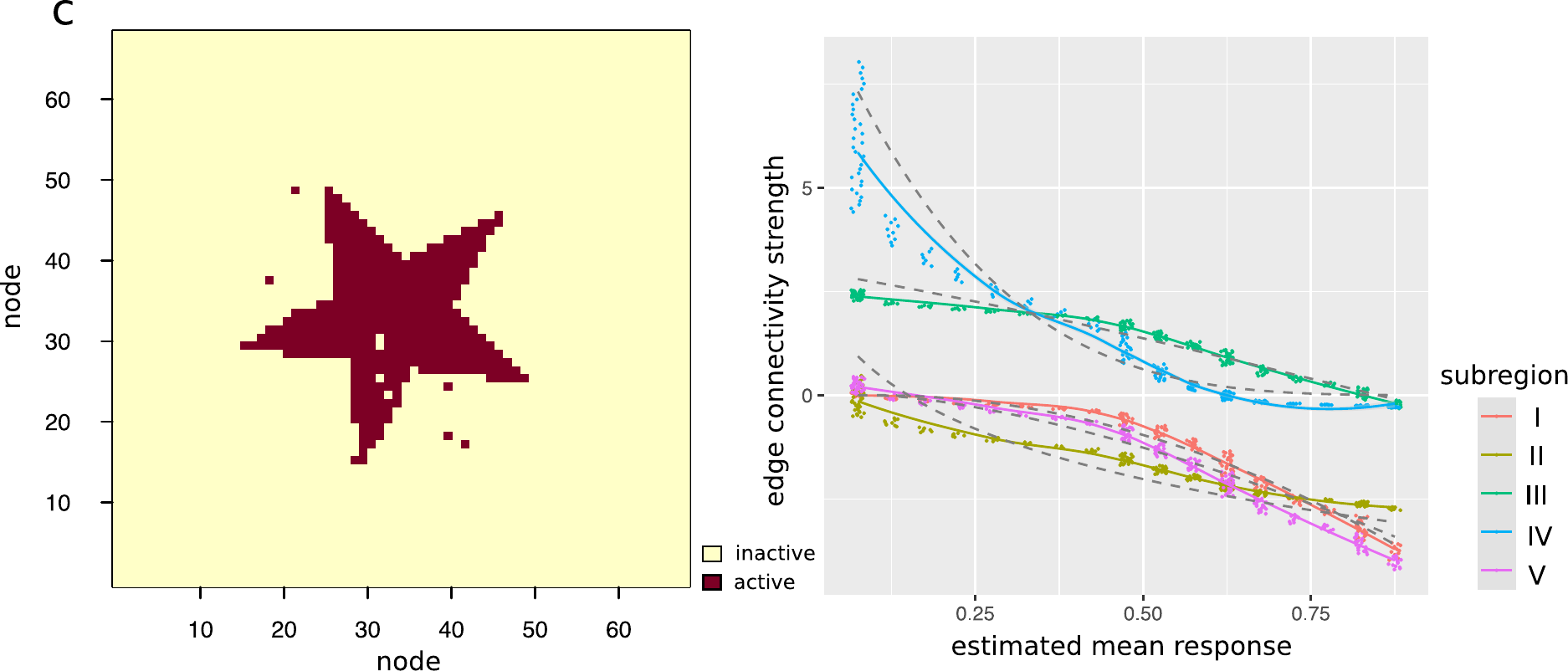}
\includegraphics[width=8.1cm]{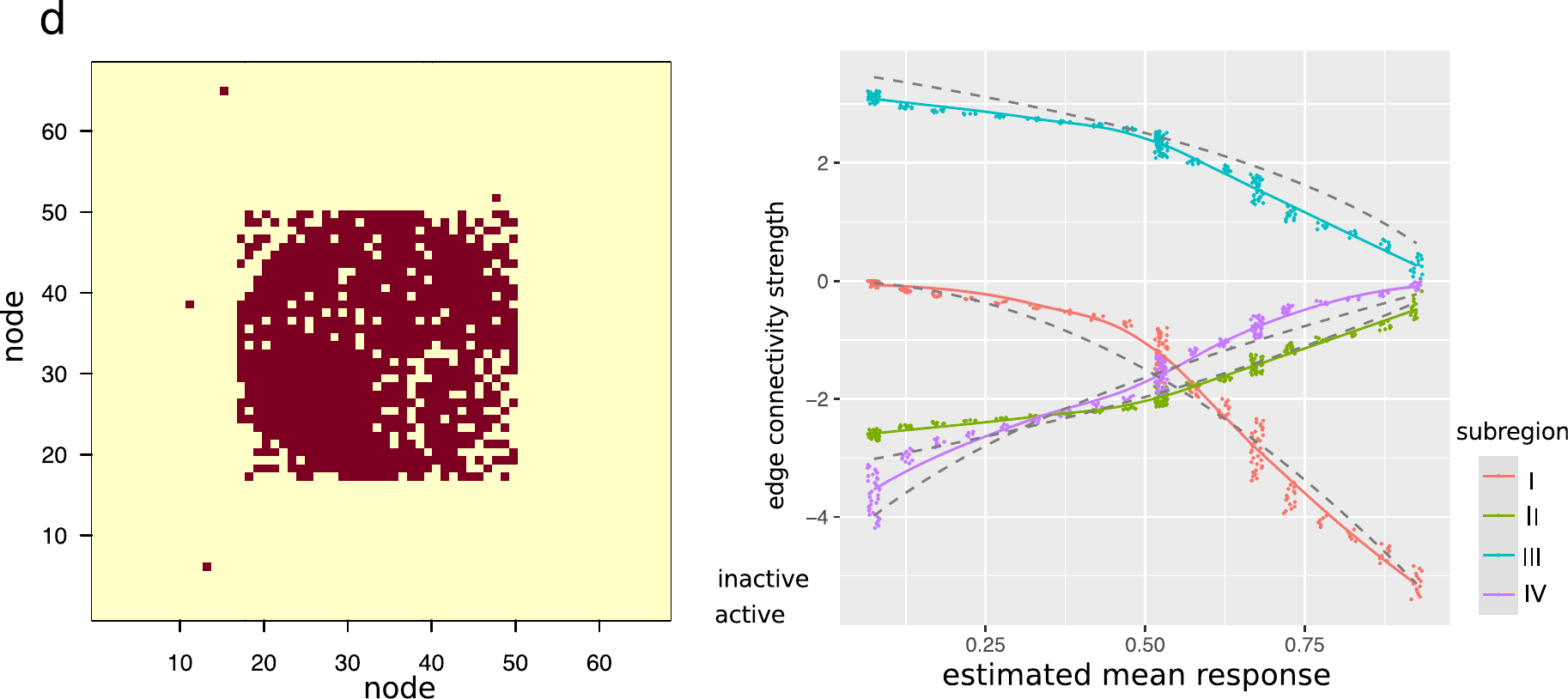}
\caption{Example output returned by {\bf \scriptsize ASSIST} based on the moving average of the feature weights, and the scatter plot of the edge connectivity strength, averaged by each subregion, versus the estimated mean response. The dashed curve shows the true function. }
\label{fig:compare2}
\end{figure}

Finally, to illustrate its capability of producing an estimate of high interpretability, Fig \ref{fig:compare2} reports the output of \NonparaM\ based on the moving average of the feature weights $(\hat \mB_\pi)_{\pi\in\Pi}$. It is observed that the identified activation region agrees well with the truth. We also investigate the relationship between the edge connectivity for individual $i$ and the estimated mean response $\hat \pi_i$ for $i=1,\ldots,n$. The trajectory accurately resembles the ground truth function in each subregion, demonstrating that our method is able to recover the pattern in the matrix predictors $\mX_i$ against $\hat \pi_i$ on a continuous spectrum.

%%%%%%%%%%%%%%%%%%%%%%%%%%%%%%%%%%%%%%%%%%%%%%
\section{Real data applications}
\label{sec:realdata}

We present two real data applications, in parallel to the two matrix learning tasks studied in Section \ref{sec:examples}. The first task is the binary-valued trait prediction based on brain connectivity matrix regression, and the second is the continuous-valued matrix completion for imaging analysis.

%%%%%%%%%%%%%%%%%%%%%%%%%%%%%%%%%%%%%%%%%%%%%%
\subsection{Brain connectivity analysis}
\label{sec:brain}

The first example is a brain connectivity data analysis, which aims to understand the relation between brain connectivity network and cognitive performance. The data is obtained from the Human Connectome Project (HCP) \citep{van2013wu}, and consists of $n=212$ healthy subjects. For each subject, a binary connectivity network is extracted, with nodes corresponding to $d=68$ brain regions-of-interest following the Desikan atlas \citep{desikan2006automated}, and links corresponding to the structural connectivity evaluated by diffusion tensor imaging \citep{zhang2018mapping}. The outcome is the dichotomized version of a visuospatial processing test score, corresponding to a high or low performance score \citep{wang2019common}. We adjust age and gender as additional covariates in our analysis. We note that, although our model focuses on a matrix predictor, it is straightforward to incorporate additional vector-valued covariates. We use a random 60-20-20 split of the data for training, validation, and testing. 

\begin{table}[t!]
\caption{Brain connectivity analysis. (a) Comparison of prediction accuracy measured by AUC, with standard errors over 5-fold cross validation in the parentheses. For {\scriptsize \bf CNN}, there is no report for node selection. (b) Top edges selected by the method {\scriptsize \bf ASSIST-p}. The letters ``r'' and ``l'' in node names indicate the right and left hemisphere, respectively. The $p$-value is calculated from the two-sample test of edge connection strength between two individual groups. }
\label{fig:real}
\resizebox{\columnwidth}{!}{
\begin{tabular}{ll}
a\hspace{6.5cm}b\\
\begin{tabular}{c|cc}
\hline
Method &  AUC  & \% of Active Nodes\\
\hline
{\bf \footnotesize ASSIST-p} &{\bf 0.73 (0.03)} &88.2   \\
\NonparaM& {\bf 0.77 (0.04)}  &97.3 \\
 LogisticM&0.72 (0.02)& 100.0\\
 LogisticV&0.68 (0.01)&89.7\\
CNN&0.67 (0.03)&-$^{}$\\
\hline
\end{tabular}
\begin{tabular}{c|ccc}
\hline
Rank &Node &  Node& $p$-value \\  
\hline
1&r-inferiortemporal&r-middletemporal&$0.01$\\
2&r-parstriangularis&r-supramarginal&3e-5\\
3&l-posteriorcingulate&r-precentral&0.01\\
4& l-caudalmiddlefronta& l-isthmuscingulate&2e-5\\
5 &l-lateralorbitofrontal&r-parstriangularis&1e-4\\
   \hline
\end{tabular}
\end{tabular}
}
\end{table}

\begin{figure}[b!]
\centering
\includegraphics[width=.8\textwidth]{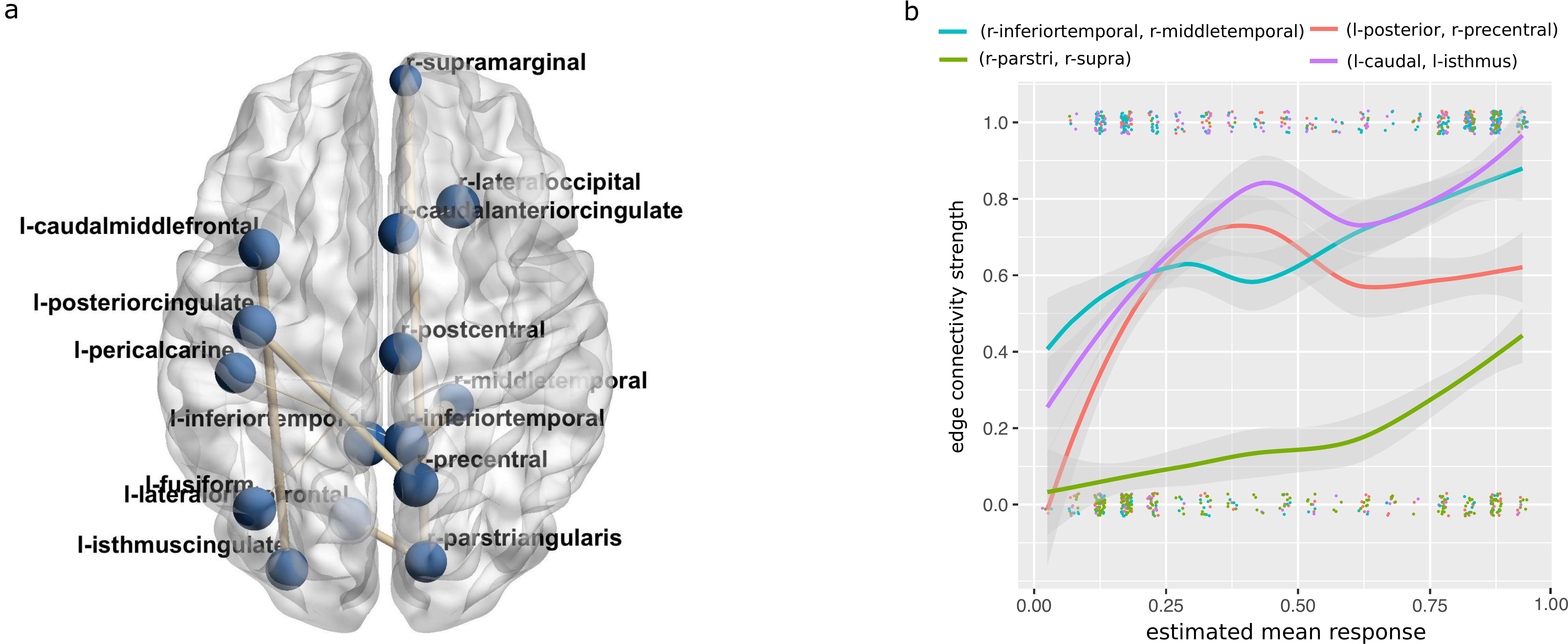}
\caption{Brain connectivity analysis.  (a) Top edges overlaid on a brain template. (b) Edge connectivity strength versus estimated mean response. Colored curves represent the moving averages of connectivity strengths, gray bands represent one standard error, and jitter points represent the raw connectivity values (0 or 1).}
\label{fig:real2}
\end{figure}

We compare our method with the same alternatives as in Section \ref{sec:comparison}. Table~\ref{fig:real}(a) shows that our method achieves the highest accuracy, measured by the area under receiver operating characteristic (AUC). Moreover, as common in the high dimensional setting, we see the model with a good cross-validation accuracy tends to include a large number of noise variables. A useful heuristic called the ``one-standard-error rule'', suggested by \cite{hastie2015statistical}, selects the most parsimonious model with cross-validation accuracy within one standard error of the best. We apply this rule and report the results as {\bf \footnotesize ASSIST-p}. It is remarkable to see that {\bf \footnotesize ASSIST-p} results in 12\% reduction of active nodes but still achieves a comparable accuracy to the best one. Table~\ref{fig:real}(b) lists the top brain links identified by our method. The edges are ranked by their maximal values in the feature weights $(\hat \mB_\pi)_{\pi \in \tH}$ via moving averaging. We find that the top edges involve connections between frontal and occipital regions in the right hemisphere. This is consistent with recent findings of dysfunction in right posterior regions for deficits in visuospatial processing \citep{wang2019common}. Fig~\ref{fig:real2}(a) shows the top selected edges overlaid on a brain template. Moreover, we find the relationship between the edge connection strength and the mean response to be nonlinear. Fig~\ref{fig:real2}(b) plots the edge connectivity strength versus the estimated mean response. We see that the connection between r-parstriangularis and r-supramarginal grows slowly when the mean response is small but fast when it is large. In contrary, the connection between r-posteriorcingulate and r-precentral  grows fast initially, then reaches a plateau as the mean response increases. Such patterns suggest heterogeneous changes in brain connectivity with respect to the visuospatial processing capability.

%%%%%%%%%%%%%%%%%%%%%%%%%%%%%%%%%%%%%%%%%%%%%%
\subsection{Imaging matrix completion}
\label{sec:completion}

The second application is an imaging matrix completion, where the goal is to recover and restore the partially observed gray-scaled hot air balloon image. This image is a standard benchmark in computer vision, and is organized as a 217-by-217 matrix, whose entries represent pixel values in $[0,1]$. We randomly mask a subset of entries and perform matrix completion based on the observed entries. 

We compare our method with three alternatives: a soft imputation method based on matrix nuclear norm regularization (\SoftImpute)~\citep{hastie2015matrix}, a hard imputation method with ridge regression (\HardImpute)~\citep{mazumder2010spectral}, and a hard imputation based on alternating SVD (\ALT)~\citep{rennie2005fast}. We evaluate the recovery accuracy by MAE on the unobserved entries, and we tune all the parameters based on 5-fold cross-validation. 

\begin{figure}[t!]
\includegraphics[width = \textwidth]{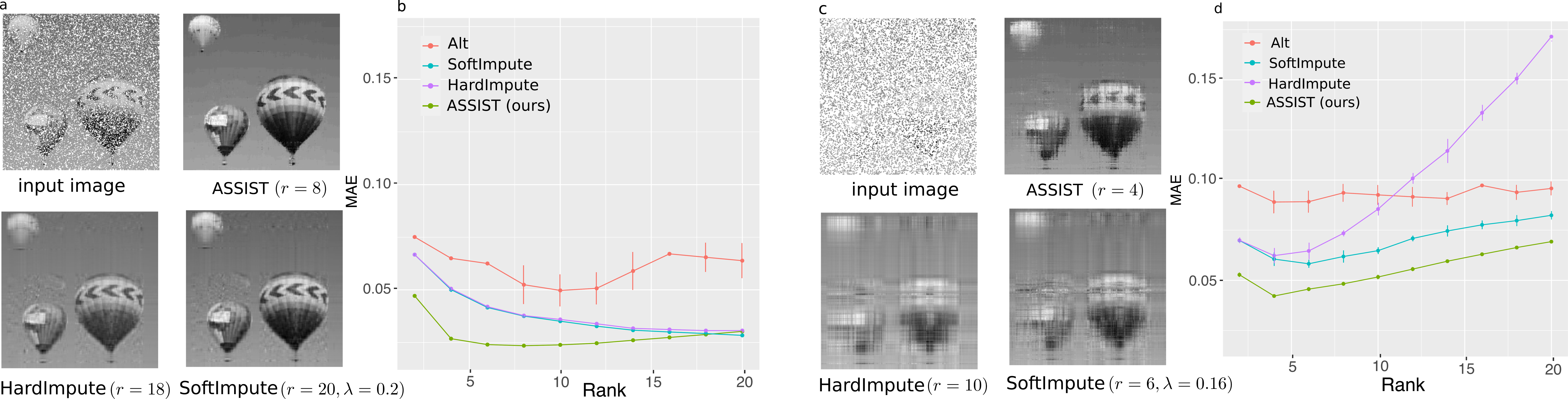}
\caption{Matrix completion analysis. (a)-(b) correspond to the 40\% missing rate, and (c)-(d) the 80\% missing rate. Error bars represent the standard error over 5-fold cross-validation. Numbers in the parentheses represent the selected tuning parameters for each method. In (a) and (c), we omit the worst method {\bf \scriptsize ALT} for space consideration.}
\label{fig:braincv}
\end{figure}

We investigate missing percentages at $40\%$ and $80\%$, and vary the rank $r=2,4,\ldots,20$. Fig \ref{fig:braincv} reports the performances of the four methods. We see clearly that our method achieves the best image recovery, with the smallest MAE. Besides, the advantage of our method compared to the alternative solutions is more clear when the missing percentage increases.

%%%%%%%%%%%%%%%%%%%%%%%%%%%%%%%%%%%%%%%%%%%%%%
\section{Discussion}
\label{sec:discussion}

We have developed a nonparametric trace regression model for studying the relationship between a scalar response and a high dimensional matrix predictor. We propose a learning reduction approach, \NonparaM, using the structured sign function series, which bridges between regression and classification. We establish the theoretical bounds, which concern the fundamental statistical errors, are independent of specific algorithms, and serve as a benchmark on how well any algorithmic procedure could perform. Our numerical results demonstrate the competitive performance of the proposed method.

Our work unlocks several possible future directions. One is nonparametric modeling of other nonconventional predictors, such as tensors, functions, and manifold data. Other directions include multi-task learning and compressed sensing. Moreover, our learning reduction approach can be coupled with more sophisticated classifiers, such as neural networks, decision trees, and boosting, for sign function estimation. Finally, the theoretical guarantees we obtain are for the global optimum. How to characterize the behavior of the actual minimizer, or relatedly, the computational error for non-convex matrix based regression remains challenging and open. All these questions are warranted for future research.

%%%%%%%%%%%%%%%%%%%%%%%%%%%%%%%%%%%%%%%%%%%%%%
\section*{Acknowledgements}
The research was supported in part by NSF DMS-1915978, NSF DMS-2023239, Wisconsin Alumni Research Foundation (to M.\ Wang), NIH R01 AG061303 (to L.\ Li), and NSF CCF-1740858 (to H.\ Zhang)

\bibliographystyle{plainnat} 
\bibliography{rArxiv_v1}    

\newpage
\appendix
\begin{center}
{\Large\bf Appendix for ``Nonparametric Trace Regression in High Dimensions via Sign Series Representation''}
\end{center}

\section{Additional theoretical results}\label{sec:additional}
\subsection{Sign rank and matrix rank}\label{sec:signrank}

In the main paper, we have provided several examples with high matrix rank but low sign rank. This section provides more examples and their proofs. 

\begin{example}[Max graphon]\label{example:max} Suppose the matrix $\mTheta\in\mathbb{R}^{d\times d}$ takes the form 
\[
\mTheta(i,j)=\log\left(1+{1\over d}\max(i,j)\right), \ \text{for all }(i,j)\in[d]^2.
\]
 Then 
 \[
 \rank(\mTheta)=d, \quad \text{and}\quad \srank(\mTheta-\pi)\leq 2\ \text{for all }\pi\in\mathbb{R}. 
 \]
\end{example}
\begin{proof}
The full-rankness of $\mTheta$ is verified from elementary row operations as follows
\begin{align}
\begin{pmatrix}
(\mTheta_2-\mTheta_1)/(\log(1+\frac{2}{d})-\log(1+\frac{1}{d}))\\(\mTheta_3-\mTheta_2)/(\log(1+\frac{3}{d})-\log(1+\frac{2}{d}))\\\vdots\\ (\mTheta_d-\mTheta_{d-1})/(\log(1+\frac{d}{d})-\log(1+\frac{d-1}{d}))\\\mTheta_d/\log(1+\frac{d}{d})
\end{pmatrix} = \begin{pmatrix}
 1&          0  &      \ddots  &        \ddots       &          0 \\
1& 1 & \ddots &            \ddots   &   \ddots          \\
      \vdots &     \vdots & \ddots &       \ddots &    \ddots         \\
 1 & 1 &1 & 1 &0\\
 1 & 1 &1 & 1 &1
\end{pmatrix},
\end{align}
where $\mTheta_i$ denotes the $i$-th row of $\mTheta$. 
Now it suffices to show $\srank(\mTheta-\pi)\leq 2$ for $\pi$ in the feasible range $(\log(1+{1\over d}),\ \log 2)$. In this case, there exists an index $i^*\in\{2,\ldots,d\}$, such that $\log(1+{i^*-1\over d})< \pi\leq \log(1+{i^*\over d})$. By definition, the sign matrix $\sign (\mTheta-\pi)$ takes the form
\begin{equation}\label{eq:matrix}
\sign (\mTheta(i,j)-\pi)=
\begin{cases}
-1, & \text{both $i$ and $j$ are smaller than $i^*$};\\
1, & \text{otherwise}.
\end{cases}
\end{equation}
Therefore, the matrix $\sign (\mTheta-\pi)$ is a rank-2 block matrix, which implies $\srank(\mTheta-\pi)=2$. 
\end{proof}

In fact, Example~\ref{example:max} is a special case of the following proposition. 

\begin{prop}[Min/Max graphon] Let $g\colon \mathbb{R}\to \mathbb{R}$ be a continuous function such that $g(z)=0$ has at most $r\geq 1$ distinct real roots. For given numbers $x_i, y_j\in[0,1]$ all $(i,j)\in[d]^2$, define a matrix $\mTheta \in\mathbb{R}^{d\times d}$ with entries
\begin{equation}\label{eq:max}
\mTheta(i,j)=g(\max(x_i,y_j)), \quad (i,j)\in[d]^2.
\end{equation}
Then, the sign rank of $\mTheta$ satisfies
\[
\srank(\mTheta)\leq 2r.
\]
The same conclusion holds if we use $\min$ in place of $\max$ in~\eqref{eq:max}. 
\end{prop}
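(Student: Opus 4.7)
The plan is to construct, by hand, a matrix $\mTheta'$ of rank at most $r+1$ with $\sign(\mTheta') = \sign(\mTheta)$ entrywise; the bound $r+1 \le 2r$ for $r\ge 1$ then yields $\srank(\mTheta) \le 2r$.

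First I would partition the real line by the roots of $g$. Let $r_1 < \cdots < r_k$ (with $k \le r$) be the distinct real roots of $g$, and adopt the conventions $r_0 := -\infty$, $r_{k+1} := +\infty$. Since $g$ is continuous and vanishes only at the $r_\ell$, it carries a constant sign $\epsilon_\ell \in \{-1,+1\}$ on each open interval $(r_\ell, r_{\ell+1})$, so that $\sign(g(t))$ agrees (off the roots) with the step function
$$s(t) \;=\; \epsilon_0 + \sum_{\ell=1}^{k} (\epsilon_\ell - \epsilon_{\ell-1})\,\mathds{1}(t > r_\ell).$$

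The heart of the argument is the rank-one factorization of thresholded max/min over a product grid:
$$\mathds{1}\!\left(\max(x_i,y_j) > r_\ell\right) \;=\; 1 - \mathds{1}(x_i \le r_\ell)\,\mathds{1}(y_j \le r_\ell),$$
$$\mathds{1}\!\left(\min(x_i,y_j) > r_\ell\right) \;=\; \mathds{1}(x_i > r_\ell)\,\mathds{1}(y_j > r_\ell).$$
Substituting the first identity into $s(\max(x_i,y_j))$ and collecting the constants telescopes to
$$\mTheta'(i,j) \;:=\; s(\max(x_i,y_j)) \;=\; \epsilon_k \;-\; \sum_{\ell=1}^{k} (\epsilon_\ell - \epsilon_{\ell-1})\,\mathds{1}(x_i \le r_\ell)\,\mathds{1}(y_j \le r_\ell),$$
which is a sum of at most $k+1 \le r+1$ rank-one outer products, hence $\rank(\mTheta') \le r+1 \le 2r$. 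The $\min$ case is entirely parallel, using the second identity (which is already rank one), giving the same bound.

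The remaining step is to check $\sign(\mTheta'(i,j)) = \sign(\mTheta(i,j))$ at every index. For any $(i,j)$ with $\max(x_i,y_j) \notin \{r_1,\dots,r_k\}$, this is immediate since $\mTheta'(i,j) = s(\max(x_i,y_j)) = \sign(g(\max(x_i,y_j)))$. The delicate case is a boundary hit $\max(x_i,y_j) = r_\ell$, for which $\mTheta(i,j) = 0$ and the paper's convention gives $\sign(\mTheta(i,j)) = -1$. The fix is a small perturbation: replace each root $r_\ell$ by a nearby $\tilde r_\ell$ chosen so that (a) no $\max(x_i,y_j)$ among the finitely many grid values coincides with any $\tilde r_\ell$, (b) the intervals $(\tilde r_\ell,\tilde r_{\ell+1})$ retain the signs $\epsilon_\ell$, and (c) the direction of perturbation is chosen towards a neighboring $\epsilon = -1$ interval so that each boundary index lands in a negative piece. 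Since the admissible $\tilde r_\ell$'s form a nonempty open set and only finitely many indices need repair, such perturbations exist. The rank bound is unchanged.

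The main obstacle is step~4: the $\sign(0) = -1$ convention makes matching the sign at indices with $g(\max(x_i,y_j)) = 0$ genuinely restrictive. The perturbation scheme above works cleanly when each root is sign-changing; for degenerate ``touching'' zeros where both adjacent intervals carry sign $+1$, one must either add an extra rank-one correction supported on the affected rows/columns, or preprocess $g$ by factoring out its non-sign-changing zeros before running the construction. The factor-of-two slack between my rank-$(r+1)$ matrix and the stated bound $2r$ is exactly what accommodates such corrections.
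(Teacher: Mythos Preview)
Your approach is essentially the same as the paper's: both exploit the factorization $\mathds{1}(\max(x_i,y_j)\le c)=\mathds{1}(x_i\le c)\mathds{1}(y_j\le c)$ to write the sign matrix as a short sum of rank-one outer products plus a constant. The paper splits into cases $r=1$ and $r\ge2$, and in the latter enumerates the negative intervals $(z_s,z_{s+1})$ of $g$ to obtain $\sign(\mTheta)=1-2\sum_{s\in\tI}(\ma_{s+1}\otimes\mb_{s+1}-\bar\ma_s\otimes\bar\mb_s)$, yielding rank at most $1+2|\tI|\le 2r-1$. Your telescoping step-function argument is a cleaner packaging of the same idea and in fact gives the tighter intermediate bound $r+1$; either way the final claim $\srank(\mTheta)\le 2r$ follows.

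One point worth flagging: you correctly identify the $\sign(0)=-1$ boundary issue and sketch a perturbation fix, which is more than the paper does (its displayed identity for $\{(i,j):\mTheta(i,j)<0\}$ silently ignores indices with $g(\max(x_i,y_j))=0$). Your perturbation handles sign-changing zeros cleanly. For a ``positive touching'' zero (both adjacent $\epsilon$'s equal to $+1$), however, the level set $\{\max(x_i,y_j)=r_\ell\}$ has indicator of rank up to $2$, not $1$, so your claimed ``rank-one correction'' and the slack-counting in the last paragraph are not quite right: in the extreme case of $r$ positive-touching zeros one would get $1+2r$, not $2r$. This is a genuine edge case that neither your sketch nor the paper's proof fully closes, but it does not affect the generic situation.
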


\begin{proof} 
Without loss of generality, assume $x_1\leq \cdots\leq x_d$ and $y_1\leq \cdots \leq y_d$. Based on the construction of $\mTheta$, the reordering does not change the rank of $\mTheta$. Let $z_1<\cdots<z_r$ be the $r$ distinct real roots for the equation $g(z)=0$. We separate the proof for two cases, $r=1$ and $r\geq 2$. 

\begin{itemize}
\item When $r=1$. The continuity of $g(\cdot)$ implies that the function $g(z)$ has at most one sign change point. Based on the similar argument as in Example~\ref{example:max}, the matrix $\sign(\mTheta)$ is a rank-2 block matrix; i.e., 
\begin{align}
\sign(\mTheta)=1-2\ma\otimes \mb \quad \text{ or } \quad \sign(\mTheta) = 2\ma\otimes\mb -1,
\end{align}
where $\ma, \mb$ are binary vectors defined by
\[
\ma=(\KeepStyleUnderBrace{1,\ldots,1,}_{\text{positions for which $x_{i}<z_1$}}0,\ldots,0)^T,\quad\mb=(\KeepStyleUnderBrace{1,\ldots,1,}_{\text{positions for which $y_j<z_1$}}0,\ldots,0)^T.
\]
Therefore, $\srank(\mTheta)\leq \rank(\sign(\mTheta)) = 2$. 

\item When $r\geq 2$. By continuity, the function $g(z)$ is non-zero and remains an unchanged sign in each of the intervals $(z_s, z_{s+1})$, for $1\leq s\leq r-1$. Define the index set 
\[
\tI=\{s\in\mathbb{N}_{+}\colon \text{the interval $(z_s, z_{s+1})$ in which $g(z)<0$}\}.
\] 
We now prove that the sign matrix $\sign(\mTheta)$ has rank bounded by $2r-1$. To see this, consider the matrix indices for which $\sign(\mTheta)=-1$,
\begin{align}\label{eq:support}
\{(i,j)\colon \mTheta(i,j) <0 \} & = \{(i,j) \colon g(\max(x_i,y_j))<0\} \notag \\
&=\cup_{s\in \tI} \{(i,j)\colon \max(x_i,y_j)\in(z_s,z_{s+1})\}\notag\\
&=\cup_{s\in \tI}\Big( \{(i,j)\colon x_{i}< z_{s+1}, y_j<z_{s+1}\} \cap \{(i,j)\colon x_{i}\leq z_{s}, y_j\leq z_{s+1}\}^c\Big).
\end{align}
The equation~\eqref{eq:support} is equivalent to 
\begin{align}\label{eq:indicator}
\mathds{1}(\mTheta(i,j)< 0)&
=\sum_{s\in \tI}\left(  \mathds{1}(x_{i}< z_{s+1}) \mathds{1}(y_{j}< z_{s+1})- \mathds{1}(x_{i}\leq z_{s})\mathds{1}(y_{j}\leq z_{s})\right),
\end{align}
for all $(i,j)\in[d]^2$, where $\mathds{1}(\cdot)\in\{0,1\}$ denotes the indicator function. The equation~\eqref{eq:indicator} implies the low-rank representation of $\sign(\mTheta)$,
\begin{equation}\label{eq:sum}
\sign(\mTheta)=1-2\sum_{s\in \tI } \left(\ma_{s+1}\otimes \mb_{s+1} - \bar \ma_s\otimes \bar \mb_s\right),
\end{equation}
where $\ma_{s+1}, \bar \ma_{s}$ are binary vectors defined by
\[
\ma_{s+1}=(\KeepStyleUnderBrace{1,\ldots,1,}_{\text{positions for which $x_{i}<z_{s+1}$}}0,\ldots,0)^T,\quad \text{and}\quad
\bar \ma_s=(\KeepStyleUnderBrace{1,\ldots,1,}_{\text{positions for which $x_{i}\leq z_{s}$}}0,\ldots,0)^T,
\]
and $\mb_{s+1}, \bar \mb_{s}$ are binary vectors defined similarly by using $y_j$ in place of $x_i$. 
Therefore, by~\eqref{eq:sum} and the assumption $|\tI|\leq r-1$, we conclude that 
\[
\srank(\mTheta)\leq 1+2(r-1)=2r-1.
\]
\end{itemize}
Combining two cases yields that $\srank(\mTheta)\leq 2r$ for any $r\geq 1$.
\end{proof}

\begin{example}[Banded matrices]\label{example:banded} Let $\ma=(1,2,\ldots,d)^T$ be a $d$-dimensional vector, and define a $d$-by-$d$ banded matrix $\mM=|\ma\otimes \mathbf{1}-\mathbf{1}\otimes \ma|$. Then
\[
\rank(\mM)=d,\quad \text{and}\quad \srank(\mM-\pi)\leq 3, \quad \text{for all }\pi\in \mathbb{R}.
\]
\end{example}
\begin{proof}
Note that $\mM$ is a banded matrix with entries
\[
\mM(i,j)={|i-j|}, \quad \text{for all }(i,j)\in[d]^2.
\]
Elementary row operation shows that $\mM$ is full rank as follows,
\begin{align}
\begin{pmatrix}
(\mM_1+\mM_d)/(d-1)\\
\mM_1-\mM_2\\
\mM_2-\mM_3\\
\vdots\\
\mM_{d-1}-\mM_{d}
\end{pmatrix} = 
\begin{pmatrix}
1&1&1&\cdots&1&1\\
-1&1&1&\cdots&1&1\\
-1&-1&1&\cdots&1&1\\
\vdots & \vdots & \vdots & \vdots & \vdots & \vdots\\
-1&-1&-1&\cdots&-1&1
\end{pmatrix}.
\end{align}

We now show $\srank(\mM-\pi)\leq 3$ by construction. Define two vectors $\mb=(2^{-1},2^{-2},\ldots,2^{-d})^T\in\mathbb{R}^d$ and $\text{rev}(\mb)=(2^{-d},\ldots,2^{-1})^T\in\mathbb{R}^d$. We construct the following matrix
\begin{equation}\label{eq:A}
\mA=\mb\otimes\text{rev}(\mb)+\text{rev}(\mb)\otimes\mb.
\end{equation}
The matrix $\mA\in\mathbb{R}^{d\times d}$ is banded with entries
\[
\mA(i,j)=\mA(j,i)=\mA(d-i,d-j)=\mA(d-j,d-i)=2^{-d-1}\left(2^{j-i}+2^{i-j}\right),\ \text{for all }(i,j)\in[d]^2.
\] 
Furthermore, the entry value $\mA(i,j)$ decreases with respect to $|i-j|$; i.e., 
\begin{equation}\label{eq:decrease}
\mA(i,j) \geq \mA(i',j'), \quad \text{for all }|i-j|\geq |i'-j'|.
\end{equation}
Notice that for a given $\pi\in\mathbb{R}$, there exists $\pi'\in\mathbb{R}$ such that $\sign(\mA-\pi')=\sign(\mM-\pi)$. This is because both $\mA$ and $\mM$ are banded matrices satisfying monotonicity~\eqref{eq:decrease}. By definition~\eqref{eq:A}, $\mA$ is a rank-2 matrix. Henceforce, $\srank(\mM-\pi)=\srank(\mA-\pi')\leq 3.$
\end{proof}

\begin{example}[Identity matrices]
Let $\mI$ be a $d$-by-$d$ identity matrix. Then
\[
\rank(\mI)=d,\quad\text{and}\quad  \srank(\mI-\pi)\leq 3 \ \text{for all }\pi\in\mathbb{R}.
\]
\end{example}
\begin{proof}
Depending on the value of $\pi$, the sign matrix $\sign(\mI-\pi)$ falls into one of the two cases: 
\begin{enumerate}
\item[(a)] $\sign(\mI-\pi)$ is a matrix of all $1$, or of all $-1$; 
\item[(b)] $\sign(\mI-\pi)=2\mI-\mathbf{1}_d\otimes \mathbf{1}_d$.
\end{enumerate}
The first case is trivial, so it suffices to show $\srank(\mI-\pi)\leq 3$ in the second case. Based on Example~\ref{example:banded}, the rank-2 matrix $\mA$ in~\eqref{eq:A} satisfies 
\[
\mA(i,j)
\begin{cases}
=2^{-d}, & i=j,\\
\geq 2^{-d}+2^{-d-2}, & i\neq j.
\end{cases}
\]
Therefore, $\sign\left(2^{-d}+2^{-d-3}-\mA\right)=2\mI-\mathbf{1}_d\otimes \mathbf{1}_d$. We conclude that $\srank(\mI-\pi)\leq \rank(2^{-d}+2^{-d-3}-\mA)=3$. 
\end{proof}

\subsection{Extension to sub-Gaussian noise}\label{sec:sub-Gaussian}
In the main paper, we have assumed the bounded noise (and thus bounded response) in the regression model. Here we extend the results to unbounded response with sub-Gaussian noise. For notational simplicity, we state the results for the matrix completion problem with $d_1=d_2=d$. The results extend similarly to general nonparamatrix matrix regression; we omit the elaboration but only state the difference in the remark. 

Consider the signal plus noise model on matrix $\mY\in\mathbb{R}^{d\times d}$,
\begin{align*}
\mY = \mTheta+\mE,
\end{align*}
where $\mE$ consists of zero-mean, independent noise entries, and $\mTheta\in\caliM(r)$ is an $\alpha$-smooth matrix.
Theoretical results in Section~\ref{sec:examples} of the main paper are based on bounded observation $\|\mY\|_\infty\leq 1$. Here, we extend the results to unbounded observation with the following assumption.

\begin{assumption}[Sub-Gaussian noise]\label{assm:subg}\hfill

\begin{enumerate}
\item There exists a constant $\beta>0$, independent of matrix dimension, such that $\|\mTheta\|_\infty\leq \beta$. Without loss of generality, we set $\beta = 1$.
\item The noise entries $\mE(\omega)$ are independent zero-mean sub-Gaussian random variables with variance proxy $\sigma^2>0$; i.e, $\mathbb{P}(|\mE(\omega)|\geq B)\leq 2e^{-B^2/2\sigma^2}$ for all $B>0$.  
\end{enumerate}
\end{assumption}

We say that an event $A$ occurs ``with high probability'' if $\mathbb{P}(A)$ tends to 1 as the dimension $d\to \infty$. The following result show that the sub-Gaussian noise incurs an additional $\log d$ factor compared to the bounded case. 

\begin{thm}[Extension of Theorem~\ref{thm:estimation} to sub-Gaussian noise]\label{thm:extension_gaussian} Consider the same conditions of Theorem~\ref{thm:estimation}. Under Assumption~\ref{assm:subg}, with high probability over training data $\mY_{\Omega}$, we have
\begin{enumerate}
\item [(a)](Sign matrix estimation). For all $\pi\in [-1,1]$ except for a finite number of levels,
\begin{equation}\label{eq:matrix_sign}
 \textup{MAE}(\sign \hat Z_\pi,\sign (\mTheta-\pi))\lesssim \left({r \sigma^2d\log d \over |\Omega|}\right)^{\alpha+1\over \alpha+2}+{1\over \rho(\pi,\tN)}\left({r \sigma^2d\log d \over |\Omega|}\right).
\end{equation}
\item [(b)](Signal matrix estimation) Set $H\asymp \left({ |\Omega|\over r \sigma^2d\log d}\right)^{1/2}$. We have
\[
 \textup{MAE}(\hat \mTheta,\mTheta)\lesssim \tO\left\{\left({r \sigma^2d\log d\log|\Omega|\over |\Omega|}\right)^{\min({\alpha\over\alpha+2},\ \frac{1}{2})}\right\}.
\]
\end{enumerate}
\end{thm}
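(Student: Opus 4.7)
My plan is to reduce the sub-Gaussian case to the bounded case (Theorem~\ref{thm:estimation}) through a truncation argument, and then retrace the proof tracking the effective bound on the response.

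\textbf{Step 1 (truncation).} Set the truncation level $\tilde B = 1 + C\sigma\sqrt{\log d}$ for a constant $C>0$ chosen large enough that $C^2/2 > r+1$. Since $\|\mTheta\|_\infty \le 1$ and each noise entry $\mE(\omega)$ is sub-Gaussian with proxy $\sigma^2$, the tail bound in Assumption~\ref{assm:subg}(ii) together with a union bound over the $|\Omega| \le d^2$ sampled indices yields
\begin{equation*}
\mathbb{P}\Bigl(\max_{\omega \in \Omega} |\mY(\omega)| \le \tilde B\Bigr) \ge 1 - 2|\Omega|\exp\!\bigl(-C^2 (\log d)/2\bigr) \ge 1 - \exp(-rd).
\end{equation*}
Condition on this event. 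Define the truncated response $\mY^{\mathrm{trunc}}(\omega) = \mathrm{sign}(\mY(\omega))\min(|\mY(\omega)|,\tilde B)$; on the event above, $\mY^{\mathrm{trunc}} = \mY$ on $\Omega$, so the estimator $\hat \mTheta$ is a function of the bounded data only.

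\textbf{Step 2 (rescaled application of Theorem~\ref{thm:estimation}).} I now repeat the proof of Theorem~\ref{thm:estimation} with the effective response bound $\tilde B$ replacing $1$. The key places where the bound enters are: (a) the weighted classification weight $|\bar Y_{\pi,\omega}| = |\mY(\omega)-\pi|$ satisfies $|\bar Y_{\pi,\omega}| \le \tilde B + 1 \lesssim \tilde B$; (b) Bernstein-type concentration over the rank-$r$ class of classifiers requires both a variance bound and an $L^\infty$ bound on the per-sample loss. The $L^\infty$ bound scales as $\tilde B$, while the variance satisfies $\mathbb{E}|\bar Y_{\pi,\omega}|^2 \lesssim 1 + \sigma^2$ (independent of $\log d$). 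Substituting these into the uniform-convergence argument for $\Phi(r)$ — whose metric entropy contributes the $rd$ factor exactly as in the bounded case — yields the sharp rate for the empirical weighted risk with an additional $\tilde B^2 \asymp \sigma^2 \log d$ multiplicative factor coming from the $L^\infty$ bound times the noise proxy. The $\alpha$-smoothness of $\mTheta$ and the identifiability bound from Theorem~\ref{thm:identifiability} then convert this risk error into the set-difference error in~\eqref{eq:matrix_sign}.

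\textbf{Step 3 (sign aggregation).} Part (b) then follows from the same bias-variance decomposition as Theorem~\ref{thm:regression}, aggregating $2H+1$ sign estimates with the optimal choice $H \asymp (|\Omega|/(r\sigma^2 d \log d))^{1/2}$ balancing reduction bias $1/H$ against the inherited sign-estimation error. Since the $\alpha$-smoothness and sign-representability assertions depend only on $\mTheta$, not on the noise distribution, no new argument is needed here beyond plugging in the upgraded sign-matrix bound from~\eqref{eq:matrix_sign}.

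\textbf{Main obstacle.} The delicate point is separating the roles of the variance proxy $\sigma^2$ and the truncation radius $\sqrt{\log d}$ in the Bernstein step: a naive bound would couple them and produce $\sigma^2 \log^2 d$ rather than the claimed $\sigma^2 \log d$. Careful handling requires showing that the variance of the loss is controlled by $\sigma^2$ alone (using the sub-Gaussian second moment), while only the uniform bound on the loss inherits the $\sqrt{\log d}$ factor from truncation; Bernstein's inequality then produces an $L^\infty$-bound term of order $\tilde B \cdot rd/|\Omega|$ and a variance term of order $\sigma\sqrt{rd/|\Omega|}$, whose sum yields the $\sigma^2 d \log d / |\Omega|$ scaling after balancing. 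Verifying that the success probability $1 - \exp(-rd)$ propagates correctly through the union bound over both the truncation event and the uniform convergence over $\Phi(r)$ is the second place that requires care.
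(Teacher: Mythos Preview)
Your overall strategy---condition on a high-probability event where $\|\mE\|_\infty\lesssim\sigma\sqrt{\log d}$, then retrace the bounded-case proof---is exactly what the paper does. The part that is off is your accounting of where the factor $\sigma^2\log d$ enters.

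The paper's mechanism is simple: on the event $\|\mE\|_\infty\le 2\sigma\sqrt{\log d}$, one has $|\bar Y_\pi(\omega)|^2\lesssim \sigma^2\log d$ \emph{pointwise}, and this constant multiplies the \emph{variance-to-mean relationship} (the analogue of Lemma~\ref{lem:prepare}(c)):
\[
\textup{Var}\bigl[\ell_\omega(\mZ,\bar\mY)-\ell_\omega(\bar\mTheta,\bar\mY)\bigr]\ \lesssim\ (\sigma^2\log d)\cdot\textup{MAE}(\sign\mZ,\sign\bar\mTheta).
\]
That extra factor then propagates through the local-complexity computation (Lemma~\ref{lem:metric}) to give $t_d=r\sigma^2 d\log d/|\Omega|$, and the rest is identical to the bounded case.

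Your ``Main obstacle'' paragraph tries instead to keep the variance at $\sigma^2$ and load $\sqrt{\log d}$ only onto the $L^\infty$ bound, then combine them via Bernstein. As written this does not produce the stated rate: the two Bernstein terms you quote, $\tilde B\cdot rd/|\Omega|$ and $\sigma\sqrt{rd/|\Omega|}$, do not sum or ``balance'' to $\sigma^2 rd\log d/|\Omega|$. Moreover, the localized empirical-process bound you need (Theorem~\ref{thm:refer}) carries a side condition $x_n^*\lesssim V/T$ that you have not checked and that becomes nontrivial when $T\asymp\sigma\sqrt{\log d}$ grows with $d$ while your $V$ does not. Your observation that the unconditional second moment $\mathbb{E}|\bar Y_\pi(\omega)|^2\lesssim 1+\sigma^2$ (no $\log d$) is correct and could in principle be exploited, but turning it into the claimed rate requires more care than you sketch; the paper's route via the cruder pointwise bound on the truncation event is the direct way to obtain exactly~\eqref{eq:matrix_sign}.
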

The proof is provided in Section~\ref{sec:sub-Gaussianproof}.  

\begin{rmk}[Extending to general non-parametric matrix regression] We have used matrix completion as an example to show the extension to unbounded noise; similar result applies to general matrix regression. 
For matrix nonparametrix regression (Theorem~\ref{thm:regression} of the main paper), the extension of bounded noise to sub-Gaussian noise incurs an additional $\log n$ factor, where $n$ is the sample size. The techniques of handling sub-Gaussian noise is identical to the above extension, and is thus omitted in the paper. 
\end{rmk}

\subsection{Extension to unbounded number of mass points}\label{sec:unbounded}
Theorem~\ref{thm:estimation} of our main paper assumes the bounded $|\tN|_{\text{cover}}<c<\infty$ for some constant $c>0$, where $|\tN|_{\text{cover}}$ is defined as the covering number of $\tN$ with $2\Delta s$-bin's. Recall that $\tN$ corresponds to regions of jumps greater than $\Delta s = {1/d^2}$ in the CDF $G(\pi)=\mathbb{P}_{\omega\sim \Pi}(\Theta(\omega)\leq \pi)$. This setup gives a cleaner exposition of our results but may be restricted in some cases. For example, the high-rank matrices in Example~\ref{ex:high-rank} and Figure~\ref{fig:limit}(b) are excluded, because $\alpha=\infty$ and $|\tN|_{\text{cover}}=d$ in this setup. Fortunately, our framework still applies to this family of matrices with a little amendment. 

We now extend the setup to allow for more general structured matrices including those in Example~\ref{ex:high-rank}. Redefine $\Delta s = {1/d}$. Correspondingly, redefine the smoothness index $\alpha$ and the set $\tN$ for the psudo density of $\mTheta(\omega)$ with new bin width $2\Delta s$. Let $|\tN|_{\text{cover}}$ be the covering number of $\tN$ with new $2\Delta s$-bin's. Under this new setup, the signal matrix in Example~\ref{ex:high-rank} has $|\tN|_{\text{cover}}=0$ and $\alpha<\infty$. Following the same line as in Theorem~\ref{thm:estimation} and use the fact that $\Delta s\lesssim t_d$, we obtain that
\[
\textup{MAE}(\hat \mTheta, \mTheta)\lesssim (t_d \log H)^{\alpha/(\alpha+2)}+{1\over H}+t_dH\log H,\quad \text{with }t_d={d r\over |\Omega|}.
\]
Therefore, setting $H\asymp t_d^{-1/2}$ yields the error bound
\begin{equation}\label{eq:inf}
\textup{MAE}(\hat \mTheta, \mTheta)\leq \tO\left\{\left( {dr\log|\Omega|\over |\Omega|}\right)^{\min({\alpha\over 2+\alpha},\ {1\over 2})}\right\}.
\end{equation}
The result~\eqref{eq:inf} applies to cases when the signal matrices belong to $\caliM(r)$ and have at most $d$ distinct entries with repetition patterns.

\subsection{Connection to structured matrix model with functional coefficients}\label{sec:joint}
In Section~\ref{sec:comparison} of the main paper, we simulate data $(\mX_i,Y_i)_{i=1}^n$ from latent variable model $(\mX,Y)|\pi$ based on the following scheme,
\[
\pi \stackrel{\text{i.i.d.}}{\sim} \text{Unif}[0,1] \stackrel{\text{conditional on $\pi$}}{\longrightarrow}
\begin{cases}
Y\sim \text{Ber}(\pi),\ Y\perp \mX|\pi, \\
\mX=\entry{\mX_{pq}}, \ \text{where\ } \mX_{pq}\stackrel{\text{indep.}}{\sim} \tN(g_{pq}(\pi)\mathds{1}(\text{edge $(p,q)$ is active}), \sigma^2).
\end{cases}
\]
Notice that, for any given $\pi$, $\mX$ is a rank-$r$, $(s_1,s_2)$ matrix as shown in Fig~\ref{fig:region} of the main paper. 

Here we provide justification to this simulation. We will show that the, in the absence of noise $\sigma=0$, the conditional expectation $\mathbb{E}(Y|\mX)=f(\mX)$ from the above simulation falls into the low-rank sign-representable function family of our interest.

Specifically, we consider a structured matrix model with functional coefficients
\begin{equation}\label{eq:scheme}
\mX_\pi\stackrel{\text{def}}{=}\mB_0+ \sum_{s=1}^rg_s(\pi)\mB_s+\sigma \mE,\quad Y_\pi\sim\text{Ber}(\pi),\quad \mX_\pi \perp Y_\pi |\pi,
\end{equation}
where $\pi \in[0,1]$ is drawn from $\text{Unif}[0,1]$; $\mE$ is a noise matrix consisting of i.i.d.\ entries in $N(0,1)$; $\sigma$ is the noise level; $\mB_0$ is an arbitrary baseline matrix; $(\mB_s)_{s=1}^r$ is a set of rank-1 matrices in $\{0,1\}^{d_1\times d_2}$ that satisfy three conditions:
\begin{enumerate}
\item non-overlapping supports, i.e., $\langle \mB_s, \mB_{s'}\rangle=0$ for all $s\neq s'$
\item bounded total support, i.e., $\sum_{s\in[r]}\supp(\mB_s)\leq (s_1,s_2)$;
\item At least one of the functions $(g_s)_{s=1}^r$ is strictly monotonic with respect to $\pi$ for all $s\in[r]$. \\
\end{enumerate}

\begin{prop}[Connection to structured matrix model with functional coefficients] Let $\mathbb{P}_{\mX,Y}$ denote the joint distribution induced by $(\mX_\pi,Y_\pi)_{\pi\in[0,1]}$ drawn from from~\eqref{eq:scheme}. In the noiseless case $\sigma = 0$, let $f(\mX)=\mathbb{E}(Y|\mX)$ denote the regression function based on $\mathbb{P}_{\mX,Y}$. Then $f\in\caliF(r,s_1,s_2)$.
\end{prop}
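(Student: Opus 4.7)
The plan is to argue that, in the noiseless regime $\sigma=0$, the matrix $\mX$ determines the latent $\pi$ exactly, so the regression function collapses to $f(\mX_\pi)=\pi$; then to exhibit for every threshold $\pi^*\in[-1,1]$ an explicit rank-$1$ trace function whose sign agrees with $\sign(f(\mX)-\pi^*)$. Since rank $1\le r$ and the support of a single $\mB_{s^*}$ is dominated by $(s_1,s_2)$, this membership certifies $f\in\caliF(r,s_1,s_2)$.

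First, I would verify invertibility of the map $\pi\mapsto \mX_\pi$. Because the matrices $\mB_1,\dots,\mB_r$ have pairwise disjoint support, one can read off each scalar $g_s(\pi)$ from $\mX_\pi-\mB_0$ by restricting to the support of $\mB_s$. Since at least one $g_{s^*}$ is strictly monotonic, $\pi$ is recovered from $g_{s^*}(\pi)$. Consequently, using $Y\perp \mX\mid \pi$ and $Y\mid\pi\sim \mathrm{Ber}(\pi)$,
\begin{equation*}
f(\mX_\pi)\ =\ \mathbb{E}(Y\mid \mX_\pi)\ =\ \mathbb{E}(Y\mid \pi)\ =\ \pi.
\end{equation*}

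Next, I would construct a sign representer for an arbitrary level $\pi^*$. For $\pi^*\in[0,1]$, set $\varepsilon=+1$ if $g_{s^*}$ is increasing and $\varepsilon=-1$ if decreasing, and define
\begin{equation*}
\mB\ =\ \frac{\varepsilon}{\FnormSize{}{\mB_{s^*}}^2}\mB_{s^*},\qquad b\ =\ -\langle \mB_0,\mB\rangle\ -\ \varepsilon\, g_{s^*}(\pi^*).
\end{equation*}
By the disjoint-support property, $\langle \mB_{s'},\mB\rangle=0$ for $s'\ne s^*$ and $\langle \mB_{s^*},\mB\rangle=\varepsilon$, so
\begin{equation*}
\langle \mX_\pi,\mB\rangle+b\ =\ \varepsilon\bigl(g_{s^*}(\pi)-g_{s^*}(\pi^*)\bigr),
\end{equation*}
whose sign equals $\sign(\pi-\pi^*)=\sign(f(\mX_\pi)-\pi^*)$ by monotonicity of $g_{s^*}$. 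For $\pi^*\notin[0,1]$, take $\mB=\mathbf{0}$ and $b=\pm 1$ so that $\sign(\langle \mX,\mB\rangle+b)$ is the constant sign of $f(\mX)-\pi^*$. In every case, $\rank(\mB)\le 1\le r$ and $\supp(\mB)\le \supp(\mB_{s^*})\le(s_1,s_2)$ by the bounded-total-support hypothesis; hence the constructed $\phi(\mX)=\langle \mX,\mB\rangle+b$ lies in $\Phi(r,s_1,s_2)$.

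Finally, I would assemble the pieces: the display above furnishes, for each $\pi^*\in[-1,1]$, a $(\mB(\pi^*),b(\pi^*))$ witnessing \eqref{eq:sign}, so $f$ is globally rank-$r$ sign representable with the prescribed sparsity, i.e., $f\in\caliF(r,s_1,s_2)$. The main subtlety worth flagging is the treatment of the boundary $\pi=\pi^*$ where both sides vanish; this is handled by noting that $\pi\sim\mathrm{Unif}[0,1]$ puts zero mass on any single level, so the sign identity \eqref{eq:sign} holds $\mathbb{P}_\mX$-almost surely, which is all that Definition \ref{def:caliF} requires. Beyond that, the argument is essentially constructive: the main work is realizing that the strictly monotonic component $\mB_{s^*}$ alone suffices as a rank-$1$ sign certificate, which exploits the non-overlapping support structure to eliminate cross-terms.
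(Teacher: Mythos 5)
Your proposal is correct and follows essentially the same route as the paper: identify $f$ with the inverse of the one-to-one map $\pi\mapsto\mX_\pi$ so that $f(\mX_\pi)=\pi$, then use the strictly monotone component's rank-one matrix (with cross-terms vanishing by the disjoint-support condition) as the sign certificate in \eqref{eq:sign}. Your extra care for decreasing $g_{s^*}$, out-of-range levels, and the boundary level is fine but not a departure — indeed at $\pi=\pi^*$ both sides are exactly zero, so the identity holds for all $\mX\in\tX$ and no almost-sure qualification is needed.
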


\begin{proof}
We restrict ourselves to the noiseless case with $\sigma=0$ in~\eqref{eq:scheme}. 
Let 
\[
\tX=\{\mX_\pi\colon \mX_\pi \text{ has structure specified in~\eqref{eq:scheme} for $\pi\in[0,1]$}\}
\]
denote the predictor space. The mapping between $\pi$ and $\mX\in \tX$ is one-to-one based on the construction of $\mX_\pi$. We use $\Pi\colon [0,1]\to \tX$ to denote the mapping and $\Pi^{-1}$ the inverse. Based on the property 3, without loss of generality, assume $g_1$ is a strictly increasing function.

For any given $\pi\in[0,1]$, we have
\[
\mathbb{E}_{Y|\pi}[Y|\pi]=\pi=\Pi^{-1}(\mX).
\]
This implies the regression function $f=\Pi^{-1}$. To show $f\in \caliF(r,s_1,s_2)$, it suffices to show $\Pi^{-1}\in\caliF(r,s_1,s_2)$. For any given $\pi'\in[0,1]$, write 
\begin{align}
\{\mX\in \tX \colon \sign(\Pi^{-1}-\pi')=1\}&=\{\mX\in \tX \colon \Pi^{-1}(\mX)\geq \pi'\}\\
&=\{\mX\in \tX \colon g_1(\Pi^{-1}(\mX))\geq g_1(\pi')\}\\
&=\left\{\mX\in \tX \colon \langle \mX, \mB_1\rangle \geq g_1(\pi')\langle \mB_1,\mB_1\rangle + \langle \mB_0, \mB_1 \rangle \right\},
\end{align}
where the second line uses the fact that $g_1$ is strictly increasing. 

Therefore, the sign function $\sign(\Pi^{-1}-\pi')$ can be expressed as the sign of trace function,
\[
\sign(\Pi^{-1}-\pi')=\sign(\KeepStyleUnderBrace{\langle \mX,\mB_1 \rangle}_{\text{trace}} - \KeepStyleUnderBrace{g_1(\pi')\langle \mB_1,\mB_1\rangle-\langle \mB_0,\mB_1\rangle}_{\text{intercept}}),\quad \text{for all }\mX\in\tX,
\]
where $\mB_1$ is a rank-1, supp-$(s_1,s_2)$ matrix coefficient. The proof is complete. 
\end{proof}

\begin{rmk}
The above result shows the connection of our method to joint matrix model~\eqref{eq:scheme} $(\mX_\pi, Y_\pi)_{\pi\in[0,1]}$. We should point out, despite of the seeming similarity, a fundamental challenge arises in our setting when the latent index $\pi$ is unobserved. Our sign aggregation approach essentially learns the right ordering of $\mX_\pi$ against the index $\pi\in[0,1]$ (see Figure~\ref{fig:method} of the main paper), thereby facilitating the estimation of regression function $f$. 
\end{rmk}

\subsection{Adjusting for intercept and additional covariates}\label{sec:intercept}
In the main paper, we estimate the trace function $\hat \phi_{\pi,F}\colon \mX\mapsto \langle \hat \mB, \mX \rangle + \hat b$ using optimization
\begin{align}\label{eq:phi}
(\hat \mB,\hat b)& = \argmin_{(\mB,b)} \left\{{1\over n}\sum_{i=1}^n|\bar Y_{\pi, i}| F\big( [\langle \mX_i,\mB \rangle+b] \sign \bar Y_{\pi, i}\big) + \lambda\FnormSize{}{\mB}^2\right\},\notag \\
\text{subject to }& \rank(\mB)\leq r,\ \supp(\mB)\leq (s_1,s_2).
\end{align}
The optimizer may not be unique; however, the following lemma shows that we can always choose an optimizer $(\hat \mB, \hat b)$ with bounded intercept without loss of generality. 

\begin{lem}[bounded intercept]\label{lem:intercept} Consider 0-1 loss, hinge loss, or phi-loss. Let $(\mX_i,Y_i)_{i\in[n]}$ be an arbitrary sample with $\FnormSize{}{\mX_i}\leq 1$. Then, there exists a global optimizer $( \mB_{\textup{opt}}, \mb_{\textup{opt}})$ of~\eqref{eq:phi} such that $|\mb_{\textup{opt}}|\leq \FnormSize{}{\mB_{\textup{opt}}}+1$. 
\end{lem}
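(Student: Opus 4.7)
The plan is a constructive replacement argument: given any global optimizer $(\mB_{\textup{opt}}, b_{\textup{opt}})$ of~\eqref{eq:phi}, I will exhibit a new global optimizer with the same $\mB_{\textup{opt}}$ but with intercept $b^\ast$ satisfying $|b^\ast|\leq \FnormSize{}{\mB_{\textup{opt}}}+1$. Because the ridge penalty $\lambda\FnormSize{}{\mB}^2$ is independent of $b$, it suffices to track how the data-fitting term changes when $b$ is truncated. By symmetry $b\leftrightarrow -b$ I only treat the case $b_{\textup{opt}}>\FnormSize{}{\mB_{\textup{opt}}}+1$ and define the truncated candidate
\[
b^\ast = \FnormSize{}{\mB_{\textup{opt}}}+1.
\]

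The key uniform estimate is Cauchy--Schwarz together with $\FnormSize{}{\mX_i}\leq 1$, which gives $|\langle\mX_i,\mB_{\textup{opt}}\rangle|\leq \FnormSize{}{\mB_{\textup{opt}}}$ for every $i$. Consequently, writing $\phi_b(\mX_i)=\langle\mX_i,\mB_{\textup{opt}}\rangle+b$,
\[
\phi_{b_{\textup{opt}}}(\mX_i)\;\geq\; b_{\textup{opt}}-\FnormSize{}{\mB_{\textup{opt}}}\;>\;1,\qquad \phi_{b^\ast}(\mX_i)\;\geq\; b^\ast - \FnormSize{}{\mB_{\textup{opt}}}=1,
\]
so in both cases every prediction is positive and at least $1$ in magnitude. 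I will then split each sample according to $\sign\bar Y_{\pi,i}$ and check the loss contribution before and after truncation.

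The case analysis is short. If $\sign\bar Y_{\pi,i}=+1$ the margin $\phi_b(\mX_i)\sign\bar Y_{\pi,i}$ is $\geq 1$ at both $b_{\textup{opt}}$ and $b^\ast$, and for all three losses $F(z)=0$ when $z\geq 1$, so the contribution is $0$ in either case. If $\sign\bar Y_{\pi,i}=-1$ the margin is $\leq -1$ at both $b_{\textup{opt}}$ and $b^\ast$; for the 0-1 loss and the psi-loss $F$ is constant on $(-\infty,0]$, so the contribution is unchanged, while for the hinge loss $F(-\phi_b(\mX_i))=1+\phi_b(\mX_i)$, which is strictly smaller at $b^\ast$ than at $b_{\textup{opt}}$ by exactly $b_{\textup{opt}}-b^\ast>0$. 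Therefore, for the 0-1 loss and the psi-loss the objective is exactly preserved, so $(\mB_{\textup{opt}},b^\ast)$ is another global optimizer; for the hinge loss the objective is either strictly decreased (contradicting the optimality of $b_{\textup{opt}}$, unless no sample has $\sign\bar Y_{\pi,i}=-1$) or unchanged (in the degenerate case where every sample has $\sign\bar Y_{\pi,i}=+1$, in which case all contributions are $0$ at both $b_{\textup{opt}}$ and $b^\ast$). Either way, $(\mB_{\textup{opt}},b^\ast)$ realises the minimum and satisfies $|b^\ast|\leq \FnormSize{}{\mB_{\textup{opt}}}+1$.

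\textbf{Main obstacle.} Nothing here is deep; the only mildly subtle point is the hinge-loss branch, where $F$ is \emph{unbounded} on $(-\infty,0]$, so I cannot claim that the loss stays constant under truncation. The saving fact is that on the half-line $\{z\leq -1\}$ where all negatively-signed margins sit, hinge loss is \emph{strictly monotone} in $b$, which turns a potential problem into the strongest form of the conclusion: truncation strictly improves the objective, so any untruncated $b_{\textup{opt}}$ with $b_{\textup{opt}}>\FnormSize{}{\mB_{\textup{opt}}}+1$ is incompatible with global optimality (excluding the trivial case of a one-class sample). Once this is seen, assembling the three losses into a single statement is immediate.
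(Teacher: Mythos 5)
Your proof is correct, but it takes a genuinely different and more direct route than the paper. The paper first establishes the stronger intermediate property that some global optimizer satisfies $\min_{i\in[n]}|\langle \mX_i,\mB_{\textup{opt}}\rangle+b_{\textup{opt}}|\leq 1$, by shifting the whole trace function by $(m-1)\sign\hat\phi(\mX_{i^*})$ where $m$ is the smallest prediction magnitude; for the hinge loss this shift can increase the loss on one set of misclassified points while decreasing it on the other, so the paper must compare the weighted misclassification masses $L_+=\sum_{i\in\tI_+}|\bar Y_i|$ and $L_-=\sum_{i\in\tI_-}|\bar Y_i|$ and shift toward the heavier side. You avoid that machinery entirely by observing that the hypothesis $|b_{\textup{opt}}|>\FnormSize{}{\mB_{\textup{opt}}}+1$ together with Cauchy--Schwarz forces \emph{every} prediction to carry the sign of $b_{\textup{opt}}$ with magnitude exceeding $1$, so truncating the intercept to $\pm(\FnormSize{}{\mB_{\textup{opt}}}+1)$ keeps all margins of modulus at least $1$ with unchanged signs; the $0$-$1$ and psi losses are then invariant and the hinge loss can only decrease, while the ridge penalty is untouched, so the truncated pair is again a global optimizer and satisfies the stated bound. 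Your argument yields exactly the lemma's conclusion with less work, whereas the paper's argument delivers the geometric property $\min_i|\phi(\mX_i)|\leq 1$ as a by-product. One small imprecision: in the hinge branch the per-sample decrease is $|\bar Y_{\pi,i}|(b_{\textup{opt}}-b^{\ast})$, which vanishes for samples with $Y_i=\pi$ (these have $\sign\bar Y_{\pi,i}=-1$ under the paper's convention but zero weight), so your "strictly decreased unless no sample has $\sign\bar Y_{\pi,i}=-1$" dichotomy is not quite exhaustive; this is harmless, since your conclusion only needs that the objective does not increase, which forces the truncated point to attain the global minimum.
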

Therefore, in this appendix, we will always assume the trace function family has the additional structure as in Lemma~\ref{lem:intercept}, i.e, 
\[
\Phi(r,s_1,s_2):=\{\phi\colon \mX\mapsto \langle \mX,\mB \rangle+b \ \big|\ \rank(\mB)\leq r,\ \supp(\mB)\leq (s_1,s_2),\ |b|\leq \FnormSize{}{\mB}+1\}.
\]
For ease of notation, we still use $\Phi(r,s_1,s_2)$ to denote this constrained trace function family.

\begin{proof}[Proof of Lemma~\ref{lem:intercept}]
We show that there always exists a global optimizer $(\mB_{\text{opt}}, b_{\text{opt}})$ of~\eqref{eq:phi} such that 
\begin{equation}\label{eq:b}
\min_{i\in [n]} |\langle \mX_i, \mB_{\text{opt}} \rangle +b_{\text{opt}}|\leq 1.
\end{equation}
Let $(\hat \mB, \hat b)$ be an arbitrary global optimizer of~\eqref{eq:phi}. Write $\hat \phi(\mX_i)=\langle \mX_i, \hat \mB \rangle+\hat b$, and $\bar Y_i=\bar Y_{\pi,i}$ for all $i\in[n]$. If $(\hat \mB, \hat b)$ satisfies~\eqref{eq:b}, then we keep this $(\hat \mB, \hat b)$. Otherwise, we aim to construct another global optimizer that satisfies~\eqref{eq:b}. Without loss of generality, assume that $(\hat \mB, \hat b)$ does not satisfy \eqref{eq:b}. The construction is divided into two cases based on loss functions. 
\begin{enumerate}[label={2.\arabic*},wide, labelwidth=!, labelindent=0pt]
\item[Case 1:] $F$ is 0-1 loss or psi-loss. 

Denote
\[
i^*=\argmin_{i\in[n]}|\hat \phi(\mX_i)|, \quad \text{and}\quad m:= \min_{i\in[n]}|\hat \phi(\mX_i)|=|\hat \phi(\mX_{i^*})|>1.
\]
We construct a shifted trace function,
\[
\phi^*\colon \mX\mapsto \hat \phi(\mX) - (m-1)\sign\hat \phi(\mX_{i^*}) = \KeepStyleUnderBrace{\langle \mX, \hat \mB \rangle}_{\text{trace}}+\KeepStyleUnderBrace{\hat b -(m-1)\sign\hat \phi(\mX_{i^*}) }_{\text{new intercept $=:\hat b^*$}}.
\]
The assumption $m>1$ implies that, for each $i\in[n]$, $\hat \phi(\mX_i) \sign \bar Y_{i}$ is either $\geq m>1$ or $\leq -m <-1$. By the definition of $\phi^*$ and loss function $F$, we have
\[
F\left(\phi^*(\mX_i)\sign \bar Y_{i}\right)=
\begin{cases}
F\big(\hat \phi(\mX_i)\sign \bar Y_{i}\big),   & \text{if $\hat \phi(\mX_i) \sign \bar Y_{i}\geq m>1$},\\
F\big(\hat \phi(\mX_i)\sign \bar Y_{i}\big),   & \text{if $\hat \phi(\mX_i) \sign \bar Y_{i}\leq m< -1$}.\\
\end{cases}
\]
Therefore, $(\hat \mB, \hat b^*)$ is also an optimizer of~\eqref{eq:phi}. Notice that $|\phi^*(\mX_{i^*})|=|\langle \mX_{i^*},\hat \mB \rangle+\hat b^*|=1$. Hence, we have found a global optimizer that satisfies~\eqref{eq:b}.
\item[Case 2:] $F$ is hinge loss. 

We construct $\hat \phi^*$ based on misclassified sample points. Denote
\[
\tI_+=\{i\in[n]\colon \sign \hat \phi(\mX_i)=-1 \text{ and } \sign \bar Y_i=1\},\quad \tI_-=\{i\in[n]\colon \sign \hat \phi(\mX_i)=1 \text{ and } \sign \bar Y_i=-1\}.
\]
If $\tI_+=\tI_-=\emptyset$, then we construct a shifted trace function $\hat \phi^*$ as in Case 1. Straightforward calculation shows that the resulting $(\hat \mB, \hat b^*)$ satisfies~\eqref{eq:b}. Now, suppose at least one of $\tI_+, \tI_-$ is nonempty. Define
\[
L_+=\sum_{i\in\tI_+}|\bar Y_i|,\quad \quad \text{and}\quad \quad L_-=\sum_{i\in\tI_-}|\bar Y_i|,
\]
where we make the conversion that the sum $\sum|\bar Y_i|$ is $-\infty$ if the index set is empty.  Define
\[
i^*=
\begin{cases}
\argmin_{i\in \tI_+}|\hat \phi(\mX_i)|, & \text{if }L_+\geq L_-,\\
\argmin_{i\in \tI_-}|\hat \phi(\mX_i)|, & \text{otherwise},\\
\end{cases}
\quad \quad \text{and}\quad \quad m:= |\hat \phi(\mX_{i^*})|>1,
\]
Notice that the construction of $i^*$ ensures $(L_+-L_-)\sign \hat \phi(\mX_{i^*})=-|L_+-L_-|$. We construct a shifted trace function
\begin{equation}\label{eq:new}
\phi^*\colon \mX\mapsto \hat \phi(\mX) - (m-1)\sign\phi(\mX_{i^*}) = \KeepStyleUnderBrace{\langle \mX, \hat \mB \rangle}_{\text{trace}}+\KeepStyleUnderBrace{\hat b -(m-1)\sign\hat \phi(\mX_{i^*}) }_{\text{intercept}}.
\end{equation}
By construction, 
\[
F\left(\phi^*(\mX_i)\sign \bar Y_{i}\right)=
\begin{cases}
F\left(\hat \phi(\mX_i)\sign \bar Y_{i}\right)=0,   & \text{if $\hat \phi(\mX_i) \sign \bar Y_i\geq m>1$},\\
F\left(\hat \phi(\mX_i)\sign \bar Y_{i}\right)+(m-1)\sign\hat \phi(\mX_{i^*}),   & \text{if }i\in \tI_{+},\\
F\left(\hat \phi(\mX_i)\sign \bar Y_{i}\right)-(m-1)\sign\hat \phi(\mX_{i^*}),   & \text{if }i\in \tI_{-}.\\
\end{cases}
\]
Therefore $\phi^*$ defined in~\eqref{eq:new} is a global optimizer of~\eqref{eq:phi}, since
\[
\sum_{i\in[n]}|Y_i|F(\hat \phi^*(\mX_i)\sign \bar Y_i) =\sum_{i\in[n]}|Y_i|F(\hat \phi(\mX_i)\sign \bar Y_i) - (m-1)|L_{+}-L_{-}| \leq \sum_{i\in [n]}|Y_i|F(\hat \phi(\mX_i)\sign \bar Y_i).
\]
Notice that $|\phi^*(\mX_{i^*})|=1$. Hence, we have found a global optimizer that satisfies~\eqref{eq:b}.
\end{enumerate}
Finally, the property~\eqref{eq:b} implies that
\[
|b_\text{opt}|\leq 1+\max_{i\in[n]}|\langle \mX_i, \mB_{\text{opt}}\rangle|\leq 1+\FnormSize{}{\mB_{\text{opt}}}.
\]
\end{proof}

Our Algorithm~\ref{alg:weighted} in the main paper can be extended to a mixture of matrix-valued predictors and usual vector-valued predictors. Specifically, we consider classifiers of the type $f(\mX)=\langle \mX, \mB\rangle+\mW^T\mC$, where $\mX\in\mathbb{R}^{d_1\times d_2}$ represents the matrix-valued predictor of our interest, $\mW\in\mathbb{R}^{p}$ represents the additional covariate including intercept, and $\mC\in\mathbb{R}^p$ is the unconstrained coefficient parameter. In our neuroimaging analysis (see Section~\ref{sec:brain} of the main paper), we have used $\mW$ to capture covariates such as age, gender, etc, in the prediction model. Our algorithm is amenable to this case. The only change is the primal update in the algorithm (Line 4 in Algorithm~\ref{alg:weighted} of main paper). The decision variables now consist of $(\mB,\mC)$ and we solve them simultaneously. Because both $\mB$ and $\mC$ are unconstrained decision variables, the algorithm lends itself well to this context.

\clearpage
\section{Proofs}\label{sec:proofs}
\subsection{Main notation}
\begin{table}[ht]
\begin{tabular}{l|l}
Notation & Definition \\
\hline
$(\mX,Y)$ & matrix predictor and univariate response\\ 
$(\mX_i,Y_i)_{i=1}^n$ & a sample of size $n$\\
$\tX$ & predictor space \\
$\shift=Y-\pi$ & shifted response\\
$f\colon \mX\mapsto \mathbb{E}(Y|\mX)$ & ground truth regression function \\
$\hat f\colon \mX\mapsto \mathbb{R}$ & estimated regression function \\
$\bayespif=\sign(f-\pi)$ & Bayes classifier at level $\pi$\\
$\bayesS(\pi)=\{\mX\in\tX\colon f(\mX)\geq \pi\}$ & Indicator set corresponding to $\bayespif$\\
$r$& matrix rank\\
$(s_1,s_2)$ & support parameter \\
$\caliF(r)$ & set of $r$-sign representable functions\\
$\Phi(r,s_1,s_2)$ & rank-$r$, supp-$(s_1,s_2)$ trace functions\\
$\Phi(r)$ & family of rank-$r$ trace functions\\
$\mB$ & rank-$r$, supp-$(s_1,s_2)$ matrix in trace function\\
$\alpha$ & smoothness index of $G(\pi)$\\
$\tN$ & set of mass points associated with CDF $G(\pi)=\mathbb{P}_{\mX}\left[f(\mX)\leq \pi\right]$ \\
$\rho(\pi,\tN)$ & distance from $\pi$ to nearest point in $\tN$\\
$H$ & resolution parameter in sign aggregation \\
$\phi$ & an arbitrary classifier function from $\tX$ to $\mathbb{R}$\\
$S_{\phi} = \{\mX\in\tX\colon \phi(\mX)\geq 0\}$ & Indicator set corresponding to $\phi$\\
$F$ & surrogate large-margin loss function from $\mathbb{R}$ to $\mathbb{R}_{\geq 0}$\\
$\hat \phi_{\pi,F}$ & estimated classifier function based on regularized empirical $F$-risk\\
$\ell_{\pi,F}$ & weighted $F$-loss function, i.e., $\ell_{\pi,F}(\phi;(\mX,Y)) = |\bar{Y}_\pi|F(\phi(\mX)\sign\bar{Y}_\pi)$\\
$\risk$ & weighted 0-1 risk \\
$\riskF$ & weighted surrogate $F$-risk\\
$\eriskF$ & empirical weighted  $F$-risk,  $\erisk$ is when $F$ is the 0-1 risk\\
$S$, $S_1$, $S_2$ & subsets in $\tX$\\
$d_{\Delta}(S_1,S_2)$&probability set difference, equal to $\mathbb{P}_{\mX}(\mX\in\tX\colon \mX\in S_1/S_2 \text{ or }S_2/S_1)$\\
$d_{\pi}(S_1,S_2)$& risk difference, equal to $\risk(\sign S_1)-\risk(\sign S_2)$\\
$\mY$ & data matrix with complete observation\\
$\Omega\subset[d_1]\times[d_2]$ & index set of observations\\
$\mY_{\Omega}$ & data matrix with incomplete observation\\
$\caliM(r)$ & family of rank-$r$ sign representable matrices\\
$\mTheta\in\caliM(r)$ & signal matrix in matrix completion problem\\
$\mE$ & noise matrix\\
$\mZ$ & an arbitrary matrix\\
\end{tabular}
\end{table}
\clearpage

\subsection{Proof of Theorem~\ref{thm:oracle}}
\begin{proof}
Fix $\pi\in[-1,1]$. For any arbitrary function $\phi\in \Phi(r)$, we evaluate the excess risk between $\sign(f-\pi)$ and $\sign \phi$,
\begin{align}\label{eq:risk}
&\risk(\sign \phi)- \risk(\sign(f-\pi)) \notag \\
= &\ {1\over 2}\mathbb{E}_{\mX}\KeepStyleUnderBrace{\mathbb{E}_{Y|\mX}\left\{|Y-\pi|\left[\left|\sign(Y-\pi)-\sign\phi \right|-\left|\sign(Y-\pi)-\sign(f-\pi)\right|\right]\right\}}_{\stackrel{\text{def}}{=}I}.
\end{align}
Here, $I=I(\mX)$ is a function of $\mX$, and its expression can be simplified as
\begin{align}\label{eq:I}
I&= \mathbb{E}_{Y|\mX}\left[ (Y-\pi)(\sign(f-\pi) - \sign \phi)\mathds{1}(Y\geq \pi)+(\pi-Y)(\sign\phi -\sign (f-\pi))\mathds{1}(Y< \pi)\right]\notag \\
&= \mathbb{E}_{Y|\mX}\left[(\sign(f-\pi)-\sign \phi) (Y-\pi)\right]\notag \\
&= \left[\sign(f-\pi)-\sign\phi \right]\left[f-\pi\right]\notag \\
&= |\sign(f-\pi)-\sign \phi ||f-\pi|,
\end{align}
where the third line uses the fact $\mathbb{E}_{Y|\mX}Y=f(\mX)$. Combining~\eqref{eq:I} with~\eqref{eq:risk}, we conclude that, for all $\phi\in\Phi(r)$, 
\begin{equation}\label{eq:minimum}
\risk(\sign \phi)- \risk(\sign(f-\pi)) ={1\over 2} \mathbb{E}_{\mX} |\sign(f-\pi)-\sign \phi ||f-\pi|\geq 0,
\end{equation}
where the last line equals to zero when $\sign \phi=\sign(f-\pi)$ or $f\equiv \pi$ is a constant function. Note that $(f-\pi)$ is $r$-sign representable by assumption. Therefore, 
\[
\risk(\sign(f-\pi))=\inf\{\risk(\sign \phi )\colon \phi\in \Phi(r)\}. 
\]
Based on the definition of 0-1 classification loss, the $\risk(\cdot)$ relies only on the sign of the argument function. Therefore, for all functions $\bar f \colon \tX\to\mathbb{R}$ that have the same sign as $\sign(f-\pi)$, we have
\[
\risk(\bar f)=\inf\{\risk(\sign \phi )\colon \phi\in \Phi(r)\}=\inf\{\risk( \phi )\colon \phi\in \Phi(r)\}.
\]
\end{proof}

\subsection{Proof of Theorem~\ref{thm:identifiability}}
\begin{proof}
Fix $\pi\in[-1,1]$. For ease of notation, we drop the dependence of $\pi$ in $\bayesS(\pi)$ and simply write $\bayesS$. Based on~\eqref{eq:I} in the proof of Theorem~\ref{thm:oracle}, we have
\begin{align}\label{eq:excess}
d_\pi(S,\bayesS) &\stackrel{\text{def}}{=} \risk(\sign (S))-\risk(\sign(\bayesS))\notag \\
&={1\over 2}\mathbb{E}_{\mX}\left( \left|\sign(S)-\sign(\bayesS) \right||\pi-f| \right)\notag \\
&=\int_{\mX\in S\Delta S_{\text{bayes}}}|f(\mX)-\pi|d \mathbb{P}_{\mX}.
\end{align}

We divide the proof into two cases: $\alpha >0$ and $\alpha=\infty$. 
\begin{enumerate}[label={2.\arabic*},wide, labelwidth=!, labelindent=0pt]
\item[Case 1:] $\alpha >0$.

Consider an arbitrary set $S\subset\mathbb{R}^{d_1\times d_2}$. Let $t$ be an arbitrary number in the interval $[0,1]$, and define the set $A=\{\mX\in \tX \colon |f(\mX)-\pi|>t\}$. 
\begin{align}
\int_{\mX\in S\Delta S_{\text{bayes}}}|f(\mX)-\pi|d \mathbb{P}_{\mX} &\geq t \left[\mathbb{P}_{\mX}(\left(S\Delta S_{\text{bayes}}) \cap A\right)\right] \\
&\geq t\left( \mathbb{P}_{\mX}\left(S\Delta S_{\text{bayes}}\right) - \mathbb{P}_{\mX}(A^c)\right)\\
&\geq t\left( \mathbb{P}_{\mX}\left(S\Delta S_{\text{bayes}}\right) - Ct^{\alpha}\right),\quad \text{for all }0\leq t < \rho(\pi,\tN),
\end{align}
where the last inequality is from $\alpha$-globally smoothness condition.
Combining the above inequality with the identity~\eqref{eq:excess} yields
\begin{equation}\label{eq:tail2}
d_\pi(S,\bayesS)\geq t\left( \mathbb{P}_{\mX}\left(S\Delta S_{\text{bayes}}\right) - Ct^{\alpha}\right),\quad \text{for all }0\leq t < \rho(\pi,\tN).
\end{equation}
We maximize the lower bound of~\eqref{eq:tail2} with respect to $t$, and obtain the optimal $t_{\text{opt}}$,
\[
t_{\text{opt}}=\begin{cases}
\rho(\pi,\tN), & \text{if}\quad \mathbb{P}_{\mX}\left(S\Delta S_{\text{bayes}}\right)> C(1+\alpha)\rho^\alpha(\pi,\tN),\\
\left[{1\over 2C(1+\alpha)}\mathbb{P}_{\mX}\left(S\Delta S_{\text{bayes}}\right)\right]^{1 / \alpha}, & \text{if}\quad \mathbb{P}_{\mX}\left(S\Delta S_{\text{bayes}}\right)\leq C(1+\alpha) \rho^\alpha(\pi,\tN).
\end{cases}
\]
The corresponding lower bound of the inequality~\eqref{eq:tail2} becomes
\begin{align}
  d_\pi(S,\bayesS)\geq\begin{cases}
c_1\rho(\pi,\tN)\mathbb{P}_{\mX}(S\Delta \bayesS), & \text{if}\quad \mathbb{P}_{\mX}\left(S\Delta S_{\text{bayes}}\right)> C(1+\alpha)\rho^\alpha(\pi,\tN),\\
c_2 \left[\mathbb{P}_{\mX}\left(S\Delta S_{\text{bayes}}\right)\right]^{1+\alpha \over \alpha}, & \text{if}\quad \mathbb{P}_{\mX}\left(S\Delta S_{\text{bayes}}\right)\leq C(1+\alpha) \rho^\alpha(\pi,\tN),
\end{cases}
\end{align}
where $c_1,c_2>0$ are two constants independent of $S$. Combining both cases gives
\begin{align}\label{eq:cmultiidentity}
    d_\Delta(S,\bayesS) \stackrel{\text{def}}{=}\mathbb{P}_{\mX}(S\Delta\bayesS)\lesssim \left[d_\pi(S,\bayesS)\right]^{\alpha \over 1+\alpha}+\frac{1}{\rho(\pi,\tN)}d_\pi(S,\bayesS),
\end{align}
where we have absorbed the constants into the relationship $\lesssim$. 

\item [Case 2:] $\alpha = \infty$.

The inequality~\eqref{eq:tail2} now becomes
\begin{equation}\label{eq:infty}
d_\pi(S,\bayesS)\geq t\mathbb{P}_{\mX}(S\Delta \bayesS) = td_\Delta(S,\bayesS), \quad \text{for all }0\leq t < \rho(\pi,\tN).
\end{equation}
The conclusion \eqref{eq:cmultiidentity} follows by taking $t={\rho(\pi,\tN)\over 2}$ in the inequality~\eqref{eq:infty}. 
\end{enumerate}
\end{proof}

\begin{rmk}[Bounding $L_1$ distance by classification risk]\label{eq:rmk} The bound controls the $L_1$ distance to $\bayespif=\sign(f-\pi)$ using the classification excess risk to $\risk(\bayespif)$. The result applies uniformly to $\pi\in[-1,1]$ if $f$ is globally-$\alpha$ smooth; i.e., the bound
\begin{equation}\label{eq:L1}
\onenormSize{}{\sign \phi-\bayespif} \lesssim \left[\risk(\phi)-\risk(\bayespif)\right]^{\alpha \over 1+\alpha}+{1\over \rho(\pi, \tN)}\left[\risk(\phi)-\risk(\bayespif)\right]
\end{equation}
holds for all functions $\phi\colon \tX\to\mathbb{R}$ and for all $\pi\in[-1,1]$ except for a finite number of points. 
In fact, the similar inequality holds by replacing the 0-1 risk to hinge risk or $T$-truncated hinge risk. Specifically, the following bound holds for all functions $\phi\colon \tX\to \mathbb{R}$ and all $\pi\in[-1,1]$ except for a finite number of points.
\begin{itemize}
\item For hinge loss $F(z)=(1-z)_{+}$,
\[
\onenormSize{}{\phi-\bayespif} \lesssim \left[\riskF(\phi)-\riskF(\bayespif)\right]^{\alpha \over 1+\alpha}+{1\over \rho(\pi, \tN)}\left[\riskF(\phi)-\riskF(\bayespif)\right].
\]
\item For $T$-truncated hinge loss $F(z)=\min((1-z)_{+},T)$ with $T\geq 2$,
\[
\onenormSize{}{\phi^T-\bayespif} \lesssim \left[\riskF(\phi)-\riskF(\bayespif)\right]^{\alpha \over 1+\alpha}+{1\over \rho(\pi, \tN)}\left[\riskF(\phi)-\riskF(\bayespif)\right],
\]
where $\phi^T$ is a truncation of function $\phi$; see formal definition in~\eqref{eq:Tphi}.
\end{itemize}
See Lemma~\ref{lem:hingeL1} for proofs. 
\end{rmk}

\subsection{Proofs of Theorem~\ref{thm:main} and Part (a) in Theorems~\ref{thm:extension}}\label{sec:sign}
We provide a unified framework that incorporates Theorem~\ref{thm:main}, Part (a) in Theorems~\ref{thm:extension} in the main paper. In addition, part of the proof in Theorem~\ref{thm:sparse} is given with the same framework.  For any given $\pi\in[-1,1]$, write $\shift=Y-\pi$, and let $\ell_{\pi,F}(\phi; (\mX, Y))$ denote the weighted $F$-loss
\[
\ell_{\pi,F}(\phi; (\mX, Y))\stackrel{\text{def}}{=}|\shift| F\big(\phi(\mX)\sign(Y-\pi)\big),
\]
where the loss function $F$ could be either standard 0-1 loss $F(z)=\mathds{1}(z>0)$ or surrogate loss satisfying Assumption~\ref{ass:main}. Assume $\mathbb{P}(\FnormSize{}{\mX}\leq 1)=1$. Consider the large-margin estimate
\begin{align}
\hat \phi_{\pi,F}=\argmin_{\phi \in\Phi(r,s_1,s_2)}\left\{ {1\over n}\sum_{i=1}^n \ell_{\pi,F}(\phi; (\mX_i,Y_i))+ \lambda \FnormSize{}{\phi}^2\right\},
\end{align}
where the trace function family
\[
\Phi(r,s_1,s_2)=\{\phi\colon \mX\mapsto \langle \mX, \mB \rangle +b \ \big| \text{rank}(\mB)\leq r,\  \text{supp}(\mB)\leq (s_1,s_2), |b|\leq \FnormSize{}{\mB}+1\}
\]
is the search domain. Notice that we have imposed the additional constraint $|b|\leq \FnormSize{}{\mB}+1$ without altering the estimation; see Section~\ref{sec:intercept}.

The following theorem states the accuracy for sign function estimate $\sign \hat \phi_{\pi,F}\colon \tX\to\{-1,1\}$. 
 \begin{thm}[Sign estimation]~\label{thm:unified} Fix $\pi\notin\tN$. Suppose the regression function $f\in\caliF(r,s_1,s_2)$ is $(\pi,\alpha)$-smooth over $\tX$. Then, with high probability at least $1-\exp(-nt_n)$ over training data $(\mX_i,Y_i)_{i\in[n]}$, the estimate~\eqref{eq:phi} satisfies
\begin{equation}\label{eq:unified_sign}
\onenormSize{}{\sign \hat \phi_{\pi,F}-\bayespif}\lesssim t_n^{\alpha/( 2+\alpha)}+{1\over \rho^2(\pi,\tN)}t_n,
\end{equation}
under the following three specifications:
\begin{enumerate}[wide, labelwidth=!, labelindent=0pt]
\item[(a)] (Theorem~\ref{thm:main}) 0-1 loss $F(z)=\mathds{1}(z>0)$, no penalization $\lambda=0$, $(s_1,s_2)=(d_1,d_2)$, and $t_n={1\over n}rd_{\max}$;
\item[(b)] (Theorem~\ref{thm:sparse}) 0-1 loss $F(z)=\mathds{1}(z>0)$, no penalization $\lambda=0$, constant $(s_1,s_2)$, and $t_n={1\over n}r(s_1+s_2) \log d_{\max}$;
\item[(c)] (Theorem~\ref{thm:extension}) Surrogate loss satisfying Assumption~\ref{ass:main}, constant $(s_1,s_2)$, $t_n={1\over n} r(s_1+s_2) \log d_{\max}$, penalization $\lambda\asymp t^{(\alpha+1)/(\alpha+2)}_n+t_n/\rho(\pi,\tN)$, approximation error $a^{(\alpha+1)/(\alpha+2)}_n \leq t_n$.
\end{enumerate}
Here, the constants suppressed in the $\lesssim$ of~\eqref{eq:unified_sign} are independent of $\pi$. 
\end{thm}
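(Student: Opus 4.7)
The plan is to treat all three cases uniformly through a single excess $F$-risk inequality, then convert that inequality into an $L_1$ bound on $\sign\hat\phi_{\pi,F}-\bayespif$ using Theorem~\ref{thm:identifiability} (in its $F$-loss form described in Remark~\ref{eq:rmk}). First I would introduce the centered excess loss
\[
\Delta_\phi(\mX,Y)=\ell_{\pi,F}(\phi;(\mX,Y))-\ell_{\pi,F}(\bayespif;(\mX,Y)),
\]
and prove a variance-to-mean inequality of the form
\[
\textup{Var}(\Delta_\phi)\lesssim \bigl(\riskF(\phi)-\riskF(\bayespif)\bigr)^{\alpha/(\alpha+1)}+\frac{1}{\rho(\pi,\tN)}\bigl(\riskF(\phi)-\riskF(\bayespif)\bigr),
\]
for every $\phi\in\Phi(r,s_1,s_2)$. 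This is where $\alpha$-smoothness enters crucially: the shifted weight $|\shift|$ in the loss is small precisely where the regression function is close to $\pi$, so Definition~\ref{ass:decboundary} controls the mass of samples on which $\Delta_\phi$ is large. For 0-1 loss this is a weighted Tsybakov margin argument; for hinge and psi-losses I would invoke the Fisher-consistent linear comparison in Assumption~\ref{ass:main}(b) to reduce to the 0-1 case.

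Next I would carry out a peeling / local Rademacher argument on $\mathcal{L}=\{\Delta_\phi:\phi\in\Phi(r,s_1,s_2)\}$. The required entropy estimates come from standard covering numbers of low-rank, possibly sparse matrix classes: $\log N(\varepsilon,\Phi(r,s_1,s_2),\nullnorm{\cdot}_\infty)\lesssim rd_{\max}\log(1/\varepsilon)$ in case (a), and $\lesssim r(s_1+s_2)\log(d_{\max}/\varepsilon)$ in cases (b) and (c), the latter using the $\binom{d_1}{s_1}\binom{d_2}{s_2}$ enumeration of supports combined with the singular-value parametrization of rank-$r$ matrices on each support. Combining these covering bounds with the variance inequality through a Bernstein / Talagrand-type concentration inequality and peeling over shells of excess risk yields, with probability at least $1-\exp(-nt_n)$, a uniform oracle inequality
\[
\riskF(\hat\phi_{\pi,F})-\riskF(\bayespif)\lesssim t_n^{(\alpha+1)/(\alpha+2)}+\frac{t_n}{\rho(\pi,\tN)}+\KeepStyleUnderBrace{a_n+\lambda J^2}_{\text{only in case (c)}}.
\]
In case (c), the choice $\lambda\asymp t_n^{(\alpha+1)/(\alpha+2)}+t_n/\rho(\pi,\tN)$ together with the assumed approximation bound $a_n^{(\alpha+1)/(\alpha+2)}\leq t_n$ absorbs the extra terms into the stated rate; in cases (a) and (b) both extra terms are absent by construction.

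Finally I would apply the identifiability bound of Theorem~\ref{thm:identifiability} (for the 0-1 loss in cases (a) and (b), and its hinge / psi-loss analogue from Remark~\ref{eq:rmk} in case (c)) to convert the excess-risk bound into
\[
\onenormSize{}{\sign\hat\phi_{\pi,F}-\bayespif}\lesssim \bigl(\text{excess risk}\bigr)^{\alpha/(\alpha+1)}+\frac{1}{\rho(\pi,\tN)}\bigl(\text{excess risk}\bigr).
\]
Plugging in the excess-risk rate from the previous step and simplifying the two contributions gives exactly $t_n^{\alpha/(\alpha+2)}+\rho^{-2}(\pi,\tN)t_n$, which is~\eqref{eq:unified_sign}. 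The main obstacle I anticipate is the variance-to-mean inequality with the correct $\rho(\pi,\tN)$-dependence: the weight $|\shift|$ is small both when $Y\approx\pi$ (controlled by the $\alpha$ exponent) and when $\pi$ is close to a mass point (controlled by $\rho(\pi,\tN)$), and carrying both effects through the peeling argument without losing additional powers of $\rho$ requires careful bookkeeping, as does handling the Lagrangian penalty in case (c) so that the penalty and the localized complexity are balanced at the same rate.
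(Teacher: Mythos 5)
Your overall architecture is the same as the paper's: a variance-to-mean inequality for the weighted excess loss (the paper's Lemma~\ref{lem:prepare}(c)), a localized peeling argument over shells of excess risk and of coefficient norm combined with entropy bounds for low-rank (sparse) trace classes and an empirical-process tail bound (the paper's Lemma~\ref{lem:risk}, using bracketing entropy and Theorem~\ref{thm:refer} rather than local Rademacher/Talagrand, a cosmetic difference), followed by the identifiability conversion of Theorem~\ref{thm:identifiability} and the same exponent bookkeeping $t_n^{(\alpha+1)/(\alpha+2)}+t_n/\rho \mapsto t_n^{\alpha/(\alpha+2)}+t_n/\rho^2$.

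There is, however, a genuine gap in case (c) when $F$ is the hinge loss. You assert the variance-to-mean inequality ``for every $\phi\in\Phi(r,s_1,s_2)$'' and propose to obtain it for hinge loss by ``invoking the Fisher-consistent linear comparison in Assumption~\ref{ass:main}(b) to reduce to the 0-1 case.'' Fisher consistency only compares expectations (excess 0-1 risk versus excess $F$-risk); it says nothing about $\textup{Var}(\Delta_\phi)$, and for the unbounded hinge loss the claimed inequality is false in general: within a norm shell $\FnormSize{}{\phi}^2\leq kJ^2$ the excess loss has envelope of order $\sqrt{k}$, so both the variance bound and the boundedness needed for the Bernstein-type tail degrade with $k$, and the sublinear exponent $\alpha/(\alpha+1)$ cannot hold uniformly. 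The paper explicitly notes that its Lemma~\ref{lem:prepare}(c) excludes the hinge loss and circumvents this with a truncation device: it evaluates the minimizer of the (untruncated, penalized) empirical hinge risk under the $T$-truncated hinge loss $F'(z)=\min(T,(1-z)_{+})$ with $T=\max(2,J)$, using $\widehat{\textup{Risk}}_{\pi,F'}(\phi)\leq\widehat{\textup{Risk}}_{\pi,F}(\phi)$ together with equality at the approximating sequence $\phi_\pi^{(n)}$ (whose norm is at most $J$), so that all variance, comparison, and $L_1$-conversion steps are carried out for the bounded loss $F'$. Your proof needs this (or an equivalent boundedness reduction) to go through for the hinge loss; for the 0-1 and psi losses, which are bounded, your plan matches the paper's argument.
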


\begin{rmk}[One-sided tail]\label{rmk:lt}
Inspection of the proof shows that the conclusion~\eqref{eq:unified_sign} holds for all $t\geq t_n$. That is, for all $t\geq t_n$, with high probability at least $1-\exp(-nt)$, we have
\begin{equation}
\onenormSize{}{\sign \hat \phi_{\pi,F}-\bayespif}\lesssim t^{\alpha/( 2+\alpha)}+{1\over \rho^2(\pi,\tN)}t.
\end{equation}
\end{rmk}

\begin{rmk}[Ridge penalization]
The estimation under 0-1 loss requires no penalization, because only the sign, but not the magnitude, of $\phi$ affects the 0-1 risk. One can constrain $\FnormSize{}{\phi}=1$ in the empirical 0-1 risk minimization without altering the solution. In contrast, the surrogate loss such as as hinge loss is scale-sensitive, rending the possible unboundedness of $\phi$. We impose penalization to control the magnitude of the $\FnormSize{}{\phi}$ and thus the local complexity. The resulting estimation enjoys the fast convergence as in sieve estimate~\citep{shen1994convergence} under well tuned $\lambda$.   
\end{rmk}

We provide the proof after introducing two main lemmas.
There are two key ingredients in the proof. The first step is to quantify the convergence of $\hat \phi_{\pi,F}$'s excess $F$-risk using Lemmas~\ref{lem:prepare} and~\ref{lem:risk}. The second step is to relate the excess $F$-risk to excess 0-1 risk using Lemma~\ref{lem:prepare}, and then establish the sign function accuracy using Theorem~\ref{thm:identifiability}.

Recall that $\hat \phi_{\pi,F}$ is the minimizer of empirical $F$-risk.  To quantify the $\hat \phi_{\pi,F}$'s excess $F$-risk, we notice that 
\begin{align}
&\riskF(\hat \phi_{\pi,F})-\inf_{\text{all }{\phi}}\riskF(\phi)\\
=&
 \KeepStyleUnderBrace{\riskF(\hat \phi_{\pi,F})-\inf_{\phi\in\Phi(r,s_1,s_2)}\riskF(\phi)}_{\text{estimation error}}+\KeepStyleUnderBrace{
 \inf_{\phi\in\Phi(r,s_1,s_2)}\riskF(\phi)-\inf_{\text{all }\phi}\riskF(\phi)}_{\text{approximation error}},
 \end{align}
The simplest way to bound $\hat \phi_{\pi,F}$'s excess risk is to use a uniform convergence of excess risk over classifiers $\Phi(r,s_1,s_2)$; however, this approach ignores the local complexity around $\hat \phi_{\pi,F}$ and yields a suboptimal rate. Here we adopt the local iterative techniques of~\citet[Theorem 3]{wang2008probability} to obtain a better rate. The improvement stems from the fact that, under considered assumptions, the variance of the excess loss is bounded in terms of its expectation. Because the variance decreases as we approach the optimal $\phi^*_{\pi}:=\argmin_{\phi\in\Phi(r,s_1,s_2)}\riskF(\phi)$, the risk of the empirical minimizer converges more quickly to the optimal risk than the simple uniform converge results would suggest. 

The following result summarizes the key properties of four common losses: 0-1 loss, hinge loss, $T$-truncated hinge loss, and psi-loss. Here, the $T$-truncated hinge loss is defined as $F(z)=\min((1-z)_{+},T)$ for a given $T\geq 2$. We will use $T$-truncated hinge loss to facilitate the proofs of Lemma~\ref{lem:risk} and Theorem~\ref{thm:unified}. 

\begin{lem}[Conversion inequalities]\label{lem:prepare} Suppose the regression function $f$ is $(\pi,\alpha)$-smooth, and denote $\bayespif=\sign(f-\pi)$ for $\pi\in[-1,1]$. Let $F$ be 0-1 loss, hinge loss, $T$-truncated hinge loss, or psi-loss. Then, the following three properties hold for all $\pi\in[-1,1]$.
\begin{itemize}[label={2.\arabic*},wide, labelwidth=!, labelindent=0pt]
\item[(a)] Optimality: $\inf_{\text{all }\phi}\riskF(\phi)=\riskF(\bayespif)$.
\item[(b)] Excess risk bound: for all classifers $\phi\colon \tX\to\mathbb{R}$,
\begin{align}\label{eq:b1}
\risk(\phi)-\risk(\bayespif) \leq  C\left[\riskF(\phi)-\riskF(\bayespif)\right],
\end{align}
where $C=1$ for 0-1, hinge loss or $T$-truncated loss, and $C=1/2$ for psi-loss. 
\item[(c)] Variance-to-mean relationship: Suppose $F$ is 0-1 loss, $T$-truncated loss, or psi-loss. Then, for all classifiers $\phi\colon \tX\to\mathbb{R}$, 
\begin{align}\label{eq:b2}
&\textup{Var}\left[\ell_{\pi,F}(\phi; (\mX,Y))-\ell_{\pi,F}(\bayespif; (\mX,Y))\right]  \notag \\
\lesssim &\ 
\left[\riskF(\phi)-\riskF(\bayespif)\right]^{\alpha/(1+\alpha)}+ {1\over \rho(\pi, \tN)}\left[\riskF(\phi)-\riskF(\bayespif)\right].
 \end{align}
 \end{itemize}
\end{lem}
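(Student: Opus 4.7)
The strategy for all three parts is to reduce to a pointwise analysis at each fixed $\mX\in\tX$ and then integrate. Introduce the conditional quantities
\[
q_{+}(\mX) = \mathbb{E}\bigl[(Y-\pi)_{+}\mid\mX\bigr], \qquad q_{-}(\mX) = \mathbb{E}\bigl[(Y-\pi)_{-}\mid\mX\bigr],
\]
so that $q_{+}(\mX)-q_{-}(\mX) = f(\mX)-\pi$ and hence $\sign(q_{+}-q_{-}) = \bayespif$. Splitting the expectation over $\{Y\geq\pi\}$ and $\{Y<\pi\}$ and using $|Y-\pi|\sign(Y-\pi)=Y-\pi$, the weighted conditional $F$-risk at $\mX$ factors as
\[
C_F\bigl(\phi(\mX)\bigr) \;:=\; \mathbb{E}\bigl[\ell_{\pi,F}(\phi;(\mX,Y))\mid\mX\bigr] \;=\; q_{+}(\mX)\, F\bigl(\phi(\mX)\bigr) + q_{-}(\mX)\, F\bigl(-\phi(\mX)\bigr),
\]
which is a scalar optimization over $\phi(\mX)\in\mathbb{R}$ that is solvable in closed form for each of the four losses.

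For part (a), I would minimize $C_F(\cdot)$ pointwise. For 0-1 loss the minimum is $\min(q_{+},q_{-})$, attained whenever $\sign\phi = \sign(q_{+}-q_{-})$; for hinge and $T$-truncated hinge (with $T\geq 2$) the minimum is $2\min(q_{+},q_{-})$, attained at $\phi\in\{-1,+1\}$ with the correct sign; for psi-loss the minimum is again $2\min(q_{+},q_{-})$, attained on $|\phi|\geq 1$ with the correct sign. In each case $\sign\phi^{\star} = \bayespif$, proving (a). For part (b), substituting these closed forms back yields the pointwise excess-risk inequality
\[
C_F\bigl(\phi(\mX)\bigr) - C_F\bigl(\phi^{\star}(\mX)\bigr) \;\geq\; C\cdot |q_{+}-q_{-}|\cdot \mathds{1}\bigl(\sign\phi(\mX)\neq \bayespif(\mX)\bigr),
\]
with $C=1$ for 0-1, hinge, and $T$-truncated hinge, and $C=\tfrac{1}{2}$ for psi-loss (the factor-$2$ gap traces back to the psi-loss taking value $2$ on the misclassified half-line). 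Integrating against $\mathbb{P}_\mX$, noting $|q_{+}-q_{-}|=|f-\pi|$, and invoking the identity $\risk(\phi)-\risk(\bayespif)=\mathbb{E}[|f-\pi|\mathds{1}(\sign\phi\neq\bayespif)]$ from the proof of Theorem~\ref{thm:oracle} yields the inequality claimed in part (b).

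For part (c), the key observation is that for each of the three bounded losses the pointwise excess loss $Z := \ell_{\pi,F}(\phi;(\mX,Y))-\ell_{\pi,F}(\bayespif;(\mX,Y))$ is dominated by a constant multiple of $|Y-\pi|\leq 2$ and is localized to the symmetric-difference region $\{\sign\phi\neq \bayespif\}$. For 0-1 loss this gives $Z^{2}\leq |Y-\pi|^{2}\mathds{1}(\mX\in S_\phi\Delta\bayesS)$, hence $\text{Var}(Z)\leq 4\,\mathbb{P}_\mX(S_\phi\Delta\bayesS) = 4\, d_{\Delta}(S_\phi,\bayesS)$. Theorem~\ref{thm:identifiability} then converts $d_{\Delta}$ into the risk-difference
\[
\text{Var}(Z)\;\lesssim\; \bigl[d_{\pi}(S_\phi,\bayesS)\bigr]^{\alpha/(1+\alpha)} + \frac{1}{\rho(\pi,\tN)}\, d_{\pi}(S_\phi,\bayesS),
\]
and identifying $d_{\pi}(S_\phi,\bayesS)=\risk(\phi)-\risk(\bayespif)\leq C[\riskF(\phi)-\riskF(\bayespif)]$ via part (b) delivers the claim. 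For $T$-truncated hinge and psi-loss the same strategy works, once one invokes the hinge/psi-loss analogue of Theorem~\ref{thm:identifiability} highlighted in Remark~\ref{eq:rmk}, which directly bounds $\mathbb{P}_\mX(S_\phi\Delta\bayesS)$ in terms of the excess $F$-risk.

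The main technical obstacle is the variance-to-mean step for psi-loss. Unlike the 0-1 case, the pointwise excess psi-risk is not a pure indicator but a piecewise-linear function of $\phi(\mX)$ that can remain strictly positive on $\{\sign\phi=\bayespif\}$ whenever $|\phi(\mX)|<1$. Handling this cleanly requires either a truncation device (replacing $\phi$ by its clipped version $\phi^{T}$, as foreshadowed in Remark~\ref{eq:rmk}) so that $Z$ is again localized to the margin-violation set, or a two-case split by whether $|\phi|\geq 1$ combined with an explicit bound on the mid-range contribution. I expect this to be the most delicate step of the proof.
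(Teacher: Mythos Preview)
Your plan for parts (a) and (b) is correct and is essentially a self-contained unpacking of what the paper obtains by invoking Theorem~\ref{thm:oracle}, \citet{wang2008probability}, and \citet{scott2011surrogate}; the pointwise conditional-risk calculation you describe is exactly the content of those references. (A small notational slip: your pointwise lower-bound constant for psi-loss is $2$, not $\tfrac12$; the $\tfrac12$ appears only after passing to the reciprocal in the final inequality.) For part (c) with 0-1 loss and $T$-truncated hinge loss, your localization-to-$S_\phi\Delta\bayesS$ plus Theorem~\ref{thm:identifiability}/Remark~\ref{eq:rmk} argument is precisely what the paper does.

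The one place where you diverge from the paper is part (c) for psi-loss. The paper does \emph{not} use a truncation device or a two-case split on $|\phi|\gtrless 1$. Instead it inserts the intermediate quantity $1-\sign(\phi(\mX)\shift)$ and splits
\[
|F(\phi\sign\shift)-F(\bayespif\sign\shift)| \;\le\; \underbrace{\bigl|\,(1-\sign(\phi\shift)) - F(\bayespif\sign\shift)\,\bigr|}_{\text{(i)}}\;+\;\underbrace{\bigl|\,F(\phi\sign\shift)-(1-\sign(\phi\shift))\,\bigr|}_{\text{(ii)}}.
\]
Term (i) equals $|\sign(\phi\shift)-\sign(\bayespif\shift)|$, so it reduces to the 0-1 case and is controlled by $d_\Delta(S_\phi,\bayesS)$ via Theorem~\ref{thm:identifiability}. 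Term (ii) is nonnegative (since $F(z)\ge 1-\sign z$ for psi-loss), so its weighted expectation collapses to $[\riskF(\phi)-\riskF(\bayespif)]+d_\pi(S_\phi,\bayesS)$ and is absorbed directly. This avoids any need for an $L_1$-type bound on $\phi$ itself.

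Your truncation idea can be made to work, but be aware that Remark~\ref{eq:rmk} as stated covers only hinge and $T$-truncated hinge, not psi-loss. To push your route through you would need to prove the psi-loss analogue: $\riskF(\phi)-\riskF(\bayespif)\ge \mathbb{E}\bigl[|f-\pi|\cdot|\mathrm{clip}_{[-1,1]}(\phi)-\bayespif|\bigr]$, and then combine this with the 2-Lipschitz bound $|F(\phi\sign\shift)-F(\bayespif\sign\shift)|\le 2\,|\mathrm{clip}_{[-1,1]}(\phi)-\bayespif|$. Both steps are true and follow from the same region-by-region computation as Lemma~\ref{lem:hingeL1}, but they are not stated anywhere in the paper, so you would be adding a new auxiliary lemma rather than citing an existing one. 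The payoff of your route is uniformity: all three bounded losses are handled by the same ``Lipschitz-plus-clipped-$L_1$'' mechanism, whereas the paper's (i)+(ii) split is specific to psi-loss but requires no new lemma.
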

\begin{rmk}
The property (c) holds for bounded loss functions only, i.e, excluding hinge loss. 
\end{rmk}

Below we establish the estimation convergence rate for $\hat \phi_{\pi,F}$'s excess F-risk. The variance-to-mean relationship in Lemma~\ref{lem:prepare} plays a key role in determining the convergence rate based on \citet[Theorem 3]{shen1994convergence}; also see Theorem~\ref{thm:refer} in Section~\ref{sec:auxiliary}. Our proof of Lemma~\ref{lem:risk} adopts the local iterative techniques from~\citet[Theorem 3]{wang2008probability}. Similar techniques have been used in \citet[Theorem 4]{bartlett2006convexity} for similar estimate but without ridge penalization.

\begin{lem}[Classification risk error]\label{lem:risk}Consider the set-up as in Theorem~\ref{thm:unified}. Then, with high probability and $t_n$ specified in Theorem~\ref{thm:unified}, the following holds for all $\pi\notin\tN$. 
\begin{enumerate}[label={2.\arabic*},wide, labelwidth=!, labelindent=0pt]
\item[(a)] If $F$ is 0-1 loss or psi-loss, then
\[
\risk(\hat \phi_{\pi,F})-\risk(\bayespif)\lesssim \riskF(\hat \phi_{\pi,F})-\riskF(\bayespif) \lesssim t_n^{(\alpha+1)/(\alpha+2)}+ {1\over \rho(\pi,\tN)}t_n.
\]
\item[(b)] If $F$ is hinge loss, then
\[
\risk(\hat \phi_{\pi,F})-\risk(\bayespif)\lesssim \textup{Risk}_{F'}(\hat \phi_{\pi,F})-\textup{Risk}_{F'}(\bayespif) \lesssim t_n^{(\alpha+1)/(\alpha+2)}+ {1\over \rho(\pi,\tN)}t_n,
\]
where $\textup{Risk}_{F'}(\phi):=\mathbb{E}\left[|\shift|F'( \phi(\mX)\sign\shift)\right]$ denotes the risk evaluated under $T$-truncated hinge loss $F'=\min(T,\ (1-z)_{+})$, and $T=\max(2,J) \geq \max(2, \FnormSize{}{\phi_\pi^{(n)}})$ is a constant based on Assumption~\ref{ass:main}(a). 
\end{enumerate}
\end{lem}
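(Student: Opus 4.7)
The plan is to bound the excess $0$-$1$ risk via the excess $F$-risk using the conversion inequalities of Lemma~\ref{lem:prepare}, and then control the excess $F$-risk by a localized empirical process argument that exploits the variance-to-mean bound in Lemma~\ref{lem:prepare}(c). Lemma~\ref{lem:prepare}(b) immediately gives $\risk(\hat\phi_{\pi,F})-\risk(\bayespif)\leq C[\riskF(\hat\phi_{\pi,F})-\riskF(\bayespif)]$, so it suffices to control the excess $F$-risk. I would decompose
\[
\riskF(\hat\phi_{\pi,F})-\riskF(\bayespif) = \bigl[\riskF(\hat\phi_{\pi,F})-\riskF(\phi^{(n)}_\pi)\bigr]+\bigl[\riskF(\phi^{(n)}_\pi)-\riskF(\bayespif)\bigr],
\]
where $\phi^{(n)}_\pi\in\Phi(r,s_1,s_2)$ is the approximating sequence from Assumption~\ref{ass:main}(a); the second term is bounded by $a_n\lesssim t_n^{(\alpha+1)/(\alpha+2)}$ by hypothesis, and for the $0$-$1$ case the assumption $f\in\caliF(r,s_1,s_2)$ puts $\bayespif$ itself in $\Phi(r,s_1,s_2)$ (so the approximation term vanishes by Theorem~\ref{thm:oracle}).

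The estimation error is the main analytic content. I would adapt the peeling argument of \citet[Thm.~3]{shen1994convergence} and \citet[Thm.~3]{wang2008probability}: on a slice $\{\phi\in\Phi(r,s_1,s_2):\riskF(\phi)-\riskF(\phi^{(n)}_\pi)\in[u,2u]\}$, the centered empirical process has deviations of order $\sqrt{V(u)\mathcal{E}_n/n}+\mathcal{E}_n/n$, where $V(u)\lesssim u^{\alpha/(1+\alpha)}+u/\rho(\pi,\tN)$ is the variance-to-mean bound from Lemma~\ref{lem:prepare}(c) and $\mathcal{E}_n\asymp r(s_1+s_2)\log d_{\max}$ (respectively $rd_{\max}$) is a uniform metric-entropy bound for the rank-$r$, supp-$(s_1,s_2)$, ridge-bounded trace functions. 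The empirical optimality of $\hat\phi_{\pi,F}$ combined with the basic inequality $(\mathbb{E}_n-\mathbb{E})[\ell_{\pi,F}(\phi^{(n)}_\pi)-\ell_{\pi,F}(\hat\phi_{\pi,F})]\geq u-\lambda\FnormSize{}{\phi^{(n)}_\pi}^2$ forces a contradiction once $u$ exceeds the fixed point of the deviation inequality; balancing gives $u^{\star}\asymp t_n^{(\alpha+1)/(\alpha+2)}+t_n/\rho(\pi,\tN)$. The prescribed choice $\lambda\asymp u^\star$ guarantees that the ridge penalty is absorbed into the same rate, and that $\FnormSize{}{\hat\phi_{\pi,F}}$ is controlled so that we remain in a subclass with the stated entropy.

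For part~(b), hinge loss is unbounded and Lemma~\ref{lem:prepare}(c) does not apply directly, so I would pass through the $T$-truncated hinge loss $F'$ with $T=\max(2,J)$. Two observations make this work: $F'$ satisfies both the excess-risk and variance-to-mean inequalities, so the argument of the previous paragraph yields the stated rate for $\textup{Risk}_{F'}(\hat\phi_{\pi,F})-\textup{Risk}_{F'}(\bayespif)$; and because $\FnormSize{}{\phi^{(n)}_\pi}\leq J\leq T$ together with $\FnormSize{}{\mX}\leq 1$ ensures that truncation is inactive on $\phi^{(n)}_\pi$, the $F$- and $F'$-excess risks of the approximating sequence coincide, so the basic inequality for $F$ transfers to $F'$ up to a $\lambda$-dependent residue absorbed by our choice of $\lambda$. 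The main obstacle I anticipate is establishing the sharp entropy bound $\mathcal{E}_n\asymp r(s_1+s_2)\log d_{\max}$: the joint rank-plus-two-way-sparsity constraint carves out a non-convex set, and I expect to handle it by a two-stage cover, first enumerating the $\binom{d_1}{s_1}\binom{d_2}{s_2}$ possible supports (contributing the $\log d_{\max}$ factor) and then covering the rank-$r$ Grassmannian on each support (contributing the $r(s_1+s_2)$ factor), with scale calibrated to the ridge radius set by $\lambda$.
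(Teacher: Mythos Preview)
Your proposal is essentially correct and tracks the paper's proof closely: the reduction to the excess $F$-risk via Lemma~\ref{lem:prepare}(b), the localized empirical-process argument driven by the variance-to-mean bound of Lemma~\ref{lem:prepare}(c), the two-stage entropy count (supports then rank-$r$ factorization), and the truncation trick for hinge loss are all exactly what the paper does.

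There is one technical point where your sketch diverges from the paper and would, as written, cost you a logarithmic factor. You peel only on the excess-risk level $u$ and assert separately that the ridge penalty ``controls $\FnormSize{}{\hat\phi_{\pi,F}}$ so that we remain in a subclass with the stated entropy.'' But the basic inequality only yields $\lambda\FnormSize{}{\hat\phi_{\pi,F}}^2 \leq \eriskF(\phi^{(n)}_\pi)+\lambda J^2$, and since $\eriskF(\phi^{(n)}_\pi)$ is of constant order (not small), this gives $\FnormSize{}{\hat\phi_{\pi,F}}^2\lesssim 1/\lambda$, which grows with $n$. Plugging that radius into the entropy bound of Lemma~\ref{lem:entropy} introduces an extra $\log(1/\lambda)\asymp\log n$ factor. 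The paper avoids this by peeling \emph{simultaneously} over the excess-risk level $s$ and the norm level $k$, via sets
\[
A_{s,k}=\{\phi:(s+1)L_n\leq \riskF(\phi)-\riskF(\bayespif)<(s+2)L_n,\ (k-1)J^2\leq \FnormSize{}{\phi}^2<kJ^2\},
\]
so that on $A_{s,k}$ the deviation threshold becomes $M(s,k)=sL_n+\lambda(k-2)J^2$. The growth of $M(s,k)$ in $k$ exactly offsets the $\log k$ growth of the entropy on $\Phi^k$, and the double sum $\sum_{s,k}\exp(-n\rho M(s,k))$ is then summable without any loss. This is the refinement you would need to recover the stated rate cleanly for the psi-loss case; for the $0$-$1$ loss your single peeling is already fine because one may take $\FnormSize{}{\phi}\leq 1$ without loss of generality.
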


\begin{proof}[Proof of Theorem~\ref{thm:unified}]
Write $\rho=\rho(\pi,\tN)$. Combining Theorem~\ref{thm:identifiability} and Lemma~\ref{lem:risk} gives
\begin{align}
\onenormSize{}{\sign \hat\phi_{\pi,F}-\bayespif} &\lesssim \left[\risk(\hat \phi_{\pi,F})-\risk(\bayespif)\right]^{\alpha/(\alpha+1)}+{1\over \rho}\left[\risk(\hat \phi_{F,\pi})-\risk(\bayespif)\right]\\
&\lesssim t_n^{\alpha/(\alpha+2)}+{1\over \rho^{\alpha/\alpha+1}}t_n^{\alpha/(\alpha+1)}+{1\over \rho}t_n^{(\alpha+1)/(\alpha+2)}+{1\over \rho^2}t_n\notag \\
&\leq 4t_n^{\alpha/(\alpha+2)}+{4\over \rho^2}t_n,
\end{align}
where the last line follows from the fact that $a(b^2+b^{(\alpha+2)/(\alpha+1)}+b+1) \leq 4 a (b^2+1)$ with $a=\rho^{-2}t_n$ and $b=\rho t_n^{-1/(\alpha+2)}$. The proof is complete by specializing $t_n$ in each context. 
\end{proof}

We now provide the proofs for the two key Lemmas~\ref{lem:prepare} and~\ref{lem:risk}.

\begin{proof}[Proof of Lemma~\ref{lem:prepare}]
\begin{enumerate}[label={2.\arabic*},wide, labelwidth=!, labelindent=0pt]

\item[Case 1:] $F(z)=\mathds{1}(z<0)$ is 0-1 loss. 

Properties (a) and (b) directly follow from Theorem~\ref{thm:oracle}. To prove (c), we expand the variance by
\begin{align}\label{eq:mae}
\textup{Var}\left[\ell_{\pi}(\phi;(\mX,Y))-\ell_{\pi}(\bayespif,(\mX,Y)\right] &\lesssim \mathbb{E}|\ell_{\pi}(\phi;(\mX,Y))-\ell_{\pi}(\bayespif,(\mX,Y)|^2\notag \\
&\lesssim \mathbb{E}|\ell_{\pi}(\phi;(\mX,Y))-\ell_{\pi}(\bayespif,(\mX,Y)|\notag \\
& \lesssim \mathbb{E}\left||\sign \shift - \sign \phi(\mX)|-|\sign \shift -\bayespif(\mX)|\right|\notag\\
&\leq \mathbb{E}|\sign \phi-\bayespif|,
\end{align}
where the second line comes from the boundedness of 0-1 loss, and the third line comes from the boundedness of weight $|\shift|$, and fourth line comes from the inequality $||a-b|-|c-b||\leq |a-b|$ for $a,b,c\in\{-1,1\}$. Here we have absorbed the constant multipliers in $\lesssim$. Therefore, the conclusion~\eqref{eq:b2} then directly follows by applying Remark~\ref{eq:rmk} to~\eqref{eq:mae}. 

\item[Case 2:] $F(z)=(1-z)_{+}$ is hinge loss. 

Property (a) was firstly introduced in~\citet[Lemma 1]{wang2008probability}, and here we provide an alternative proof. 

A direct calculation (see\ Lemma~\ref{lem:hingeL1}) shows that
\[
\riskF(\phi)-\riskF(\bayespif)\geq \mathbb{E}|\phi-\bayespif||f-\pi|\geq0,
\]
Therefore, $\inf_{\text{all }\phi}\riskF(\phi)=\riskF(\bayespif)$. Property~\eqref{eq:b1} is from \citet[Corollary 1]{scott2011surrogate} (see also Theorem~\ref{thm:scott} in Section~\ref{sec:auxiliary}).

\item[Case 3:] When $F(z) = 2\min(1,(1-z)_+)$ is psi-loss. 

Again, the property (a) follows from~\citet[Lemma 1]{wang2008probability}. For the property~\eqref{eq:b1}, we use Theorem~\ref{thm:scott} to find the transformation function $\psi$ that relates 0-1 risk to F-risk:
\[
\psi(\risk(\phi)-\risk(\bayespif))\leq \riskF(\phi)-\riskF(\bayespif). 
\]
To put our problem in the context of Theorem~\ref{thm:scott}, we need additional notation. For any function measurable $g\colon x\mapsto g(x)$, we write $g=g^{+}-g^{-1}$, where $g^{+}$ and $g^{-}$ are two non-negative functions given by
\begin{align}
g^{+}(x)=\max\{ g(x),0 \} =
\begin{cases}
g(x), & \text{if }g(x)>0,\\
0, & \text{otherwise},\\
\end{cases}\quad 
g^{-}(x)&=\max\{ -g(x),0 \} =
\begin{cases}
-g(x), & \text{if }g(x)<0,\\
0, & \text{otherwise}.
\end{cases}
\end{align}
Under this notation, we have $|g|=g^{+}+g^{-1}$. 

Define the conditional $F$-risk
\[
C_{\pi,F}(\mX,t):=F(t)\mathbb{E}_{Y|\mX}(Y-\pi)^{+}+F(-t)\mathbb{E}_{Y|\mX}(Y-\pi)^{-}.
\]
A direct calculation shows that
\[
C_{\pi,F}(\mX,t)=
\begin{cases}
2\mathbb{E}_{Y|\mX}(Y-\pi)^{-}, & \text{if }t\geq 1,\\
2\mathbb{E}_{Y|\mX}|Y-\pi|-2t\mathbb{E}_{Y|\mX}(Y-\pi)^{+},  &\text{if }t\in[0,1),\\
2\mathbb{E}_{Y|\mX}|Y-\pi|+2t\mathbb{E}_{Y|\mX}(Y-\pi)^{-},  &\text{if }t\in[-1,0),\\
2\mathbb{E}_{Y|\mX}(Y-\pi)^{+}, & \text{if }t<-1.
\end{cases}
\]
Therefore, following the notation of Theorem~\ref{thm:scott}, we have
\begin{align}
H_{\pi,F}(\mX)&:=\inf_{t\in\mathbb{R}\colon t(f(\mX)-\pi)\leq 0}C_{\pi,F}(\mX, t)-\inf_{t\in\mathbb{R}} C_{\pi,F}(\mX, t) = 2|f(\mX)-\pi|.
\end{align}
Applying Theorem~\ref{thm:scott} to the above setup gives the excess risk transformation rule: $\psi: z\to2|z|$. Therefore, the property~\eqref{eq:b1} is proved. 

To prove~\eqref{eq:b2}, notice that 
\begin{align}\label{eq:excess-hinge}
&\text{Var}\left\{|\shift|\left[F(\phi(\mX)\sign \shift)-F(\bayespif(\mX)\sign \shift)\right]\right\} \notag \\
\lesssim &\  \mathbb{E}|\shift||F(\phi(\mX)\sign\shift)-F(\bayespif(\mX)\sign \shift)|\notag \\
 \lesssim &\  \KeepStyleUnderBrace{\mathbb{E}\left| 1-\sign(\phi(\mX)\shift)- F(\bayespif(\mX)\sign \shift)\right
|}_{=:\text{(i)}} +\KeepStyleUnderBrace{\mathbb{E}|\shift|\left|F( \phi(\mX)\sign \shift)- \left(1-\sign(\phi(\mX)\shift)\right) \right|}_{=:\text{(ii)}}.
\end{align}
The first term (i) is bounded as follows
\begin{align}
\text{(i)}=\mathbb{E}\left|\sign (\phi(\mX)\shift)-\sign (\bayespif(\mX)\shift)\right|&\lesssim d_\Delta(S_\phi,\bayesS(\pi))\\&\lesssim  d^\alpha_{\pi}(S_\phi,\bayesS(\pi))+{1\over \rho(\pi, \tN)}d_{\pi}(S_\phi,\bayesS(\pi)),
\end{align}
where the first line uses the fact that $F(1)=0$ and $F(-1)=2$, and last inequality is from Theorem~\ref{thm:identifiability}. Here we define indicator set corresponding $\phi$ as $S_\phi = \{\mX\in\tX\colon \phi(\mX)\geq 0\}$.
The second term (ii) is bounded as follows
\begin{align}
    \text{(ii)}
    &=\mathbb{E}\left[ |\shift| F(\phi(\mX)\sign \shift)- |\shift|\left(1-\sign(\phi(\mX)\shift)\right)\right] \\
    &= \mathbb{E}\left[|\shift|F(\phi(\mX)\sign \shift)-|\shift|F(\bayespif(\mX)\sign \shift)\right]\\
    & \quad + \mathbb{E}\left[|\shift|(1-\sign(\bayespif\shift))-|\shift|(1-\sign(\phi(\mX)\shift))\right]\\
    &\leq [\riskF(\phi)-\riskF(\bayespif)]+d_\pi(S_\phi,\bayesS(\pi)),
\end{align}
where the first equality is based on $F(z) = 1-\sign(z)$ if $z = 1$ or $-1$, and the last inequality is from definition of $d_\pi(\cdot,\cdot)$.  Notice we have  $d_\pi(S_\phi,\bayesS(\pi)) = \risk(\phi)-\risk(\bayespif)$ by definition. Therefore, the proof is complete by combining \eqref{eq:excess-hinge}, \eqref{eq:b1} and bounds (i)-(ii).

\item[Case 4:] $F(z)=\min((1-z)_+,T)$ for $T$-truncated hinge loss, for given $T\geq 2$. 
A direct calculation (c.f. Remark~\ref{rmk:truncate} after Lemma~\ref{lem:hingeL1}) shows that
\[
\riskF(\phi)-\riskF(\bayespif)\geq \mathbb{E}|\phi^T -\bayespif||f-\pi|\geq0, 
\]
where $\phi^T\colon \tX\to[-(T-1),\ (T-1)]$ denotes the $(T-1)$-truncation of $\phi$,
\begin{equation}\label{eq:Tphi}
\phi^T=
\begin{cases}
T-1 & \text{if }\phi>T-1,\\
\phi, & \text{if }|\phi|\leq T-1,\\
-(T-1), & \text{if }\phi<-(T-1).
\end{cases}
\end{equation}
Therefore, $\inf_{\text{all }\phi}\riskF(\phi)=\riskF(\bayespif)$. To show property~\eqref{eq:b1}, we again use Theorem~\ref{thm:scott} to find the transformation function $\psi$ that relates 0-1 risk to F-risk:
\[
\psi(\risk(\phi)-\risk(\bayespif))\leq \riskF(\phi)-\riskF(\bayespif). 
\]
Using similar arguments as in Case 3, we obtain the conditional $F$-risk
\[
C_{\pi,F}(\mX,t)=
\begin{cases}
\min\left\{T,\ (1+t)\mathbb{E}_{Y|\mX}(Y-\pi)^{-}\right\}, & \text{if } t\geq 1,\\
\mathbb{E}_{Y|\mX}|Y-\pi|-t(f(\mX)-\pi), & \text{if } t\in[0,1),\\
\mathbb{E}_{Y|\mX}|Y-\pi|+t(f(\mX)-\pi), & \text{if } t\in[-1,0),\\
\min\left\{T,\ (1-t)\mathbb{E}_{Y|\mX}(Y-\pi)^{+}\right\}, & \text{if } t<-1.\\
\end{cases}
\]
Therefore, following the notation of Theorem~\ref{thm:scott}, we have
\begin{align}
H_{\pi,F}(\mX)&:=\inf_{t\in\mathbb{R}\colon t(f(\mX)-\pi)\leq 0}C_{\pi,F}(\mX, t)-\inf_{t\in\mathbb{R}} C_{\pi,F}(\mX, t) = |f(\mX)-\pi|.
\end{align}
Applying Theorem~\ref{thm:scott} to the above setup gives the excess risk transformation rule: $\psi: z\to|z|$. Therefore, the property~\eqref{eq:b1} is proved. 

To prove~\eqref{eq:b2}, we use Lemma~\ref{lem:hingeL1} and the boundedness condition of $\norm{F}_\infty\leq T$. Specifically, we bound the variance using the $L$-1 distance between $\phi$ and $\bayespif$, 
\begin{align}
&\text{Var}\left\{|\shift|\left[F(\phi(\mX)\sign \shift)-F(\bayespif(\mX)\sign \shift\right]\right\}\\
\leq&\ 4 \mathbb{E}|F(\phi(\mX)\sign \shift)-F(\bayespif(\mX)\sign \shift)|^2\\
\lesssim &\ T\mathbb{E}|F(\phi(\mX)\sign \shift)-F(\bayespif(\mX)\sign \shift)|\\
\lesssim &\ T\mathbb{E}|\phi^T-\bayespif|,
\end{align}
where $T>0$ is the upper bound of truncated hinge loss, the first inequality comes from the boundedness of $|\shift|$, the second inequality comes from the boundedness of the $T$-truncated hinge loss, and the last line comes from the definition of $F$. Applying Remark~\ref{rmk:truncate} in Lemma~\ref{lem:hingeL1} to the last inequality complete the proof.
\end{enumerate}
\end{proof}

\begin{proof}[Proof of Lemma~\ref{lem:risk}]

Fix $\pi\notin\tN$, and write $\rho=\rho(\pi,\tN)$, $L_n=t_n^{(\alpha+1)/(\alpha+2)}$. 
We first consider the (bounded) psi-loss, and then consider the (unbounded) hinge loss. The 0-1 loss incurs only slight difference in the proof, and we address this case at last. 

\begin{enumerate}[label={2.\arabic*},wide, labelwidth=!, labelindent=0pt]
\item[Case 1:] psi-loss, $\lambda\asymp L_n+t_n/\rho$, and $ a_n\lesssim L_n$.

For any function $\phi\in\Phi(r,s_1,s_2)$ of consideration, define the empirical weighted $F$-risk
\begin{equation}\label{eq:F}
\eriskF(\phi) = \frac{1}{n}\sum_{i=1}^n\ell_{\pi,F}(\phi; (\mX_i,Y_i)).
\end{equation}

Under the notation, our estimate $\hat \phi_{\pi,F}$ is the minimizer of the regularized empirical $F$-risk,
\begin{equation}\label{eq:def2}
\hat \phi_{\pi,F}=\argmin_{\phi\in\Phi(r,s_1,s_2)}\Big\{ \eriskF(\phi)+\lambda\FnormSize{}{\phi}^2\Big\}.
\end{equation}
We are interested in the convergence rate of $\hat \phi_{\pi,F}$'s excess risk, $\riskF(\hat\phi_{\pi,F})-\riskF(\bayespif)$. Let $L_n\asymp t_n^{(\alpha+1)/(\alpha+2)}$ denote the desired convergence rate to seek. By the definition of $\hat \phi_{\pi,F}$, we have
\[
 \eriskF(\hat\phi_{\pi,F})+\lambda\FnormSize{}{\hat \phi_{\pi,F}}^2\leq \eriskF(\phi_{\pi}^{(n)})+\lambda J^2,
\]
where $\phi_{\pi}^{(n)}$ is a sequence of functions in Assumption~\ref{ass:main}(a). Therefore, we have the following inclusion of probability events,
\begin{align}\label{eq:outer}
&\left\{(\mX_i,Y_i)_{i\in[n]}\colon \riskF(\hat\phi_{\pi,F})-\riskF(\bayespif)\geq 2L_n \right\}\notag \\
 \subset &
 \bigg\{(\mX_i,Y_i)_{i\in[n]}\colon \exists \phi\in\Phi(r,s_1,s_2),\ \text{s.t.}\  \riskF(\phi; (\mX,Y))-\riskF(\bayespif)\geq 2L_n, \notag \\&\hspace*{6.3cm} \text{and}\  \eriskF(\phi)+\lambda\FnormSize{}{\phi}^2\leq \eriskF(\phi_{\pi}^{(n)})+\lambda J^2 \bigg\}\notag \\
 \subset &
\left\{(\mX_i,Y_i)_{i\in[n]} \colon \sup_{\substack{\phi\in\Phi(r,s_1,s_2)\\  
\riskF(\phi; (\mX,Y))-\riskF(\bayespif)\geq 2L_n  }}\left[\eriskF(\phi_\pi^{(n)})+\lambda J^2-\eriskF(\phi)-\lambda\FnormSize{}{\phi}^2\right]\geq 0\right\} \notag \\
\subset & \bigcup_{\phi\in A_{s,k}}\left\{(\mX_i,Y_i)_{i\in[n]}\colon \sup_{\phi\in A_{s,k}} \left[\eriskF(\phi_\pi^{(n)})+\lambda J^2-\eriskF(\phi)- \lambda\FnormSize{}{\phi}^2\right] \geq 0 \right \}.
\end{align}
In the last line of~\eqref{eq:outer}, we have partitioned the set $\{\phi\in\Phi(r,s_1,s_2)\colon \riskF(\phi; (\mX,Y))-\riskF(\bayespif)\geq 2L_n\}$ into a union of $A_{s,k}$, with 
\begin{align}
A_{s,k}&=\{\phi\in \Phi(r,s_1,s_2)\colon (s+1)L_n\leq \riskF(\phi)-\riskF(\bayespif)< (s+2) L_n, (k-1) J^2 \leq \FnormSize{}{\phi}^2< k J^2\},
\end{align}
for $s,k=1,2,\ldots$.

Let $\Gamma$ denote the target probability for the first line in~\eqref{eq:outer}. To bound $\Gamma$, it suffices to bound the sum of probabilities over sets $A_{s,k}$.  For  each $ A_{s,k}$, we consider the centered empirical process,
\begin{align}\label{eq:empro}
v_n(\phi)&:=\left[\eriskF(\phi_\pi^{(n)})-\eriskF(\phi)\right]-\left[\riskF(\phi_\pi^{(n)})-\riskF(\phi)\right]\notag \\
&=\frac{1}{n}\sum_{i\in[n]}\left\{\ell_{\pi,F}(\phi_\pi^{(n)};(\mX_i,Y_i))-\ell_{\pi,F}(\phi;(\mX_i,Y_i))-\mathbb{E}\left[\ell_{\pi,F}(\phi_\pi^{(n)};(\mX_i,Y_i))-\ell_{\pi,F}(\phi;(\mX_i,Y_i))\right]\right\}.
\end{align}
Notice that 
\begin{align}\label{eq:first}
\riskF(\phi)-\riskF(\phi_\pi^{(n)}) &= \riskF(\phi)-\riskF(\bayespif)+\riskF(\bayespif)-\riskF(\phi_\pi^{(n)})\nonumber\\&\geq (s+1)L_n -a_n\nonumber
\\&\geq  sL_n,
\end{align}
where the first inequality is from the fact that $\phi\in A_{s,k}$ and Assumption~\ref{ass:main}(a), and the last inequality uses the condition that $a_n\lesssim L_n$.

Combining the definition of $v_n$ in \eqref{eq:empro} and inequality \eqref{eq:first} gives \eqref{eq:outer} as
\begin{align}\label{eq:union}
\Gamma&\leq \sum_{s,k=1}^\infty\mathbb{P}\left\{\sup_{\phi\in A_{s,k}}  \left[v_n(\phi) -\lambda \FnormSize{}{\phi}^2 \right]\geq sL_n -\lambda J^2\right\}\nonumber\\&\leq \sum_{s,k=1}^\infty\mathbb{P}\left\{\sup_{\phi\in A_{s,k}}  v_n(\phi)\geq sL_n + \lambda(k-2) J^2=: M(s,k)\right\},
\end{align}
where  $M(s,k)>0$ for all $s,k\in\mathbb{N}$ from the condition $\lambda J^2\leq L_n/2$ by the choice of $(L_n,\lambda)$. Verification of this condition is deferred to when we specify $(\lambda,L_n)$ in \eqref{eq:delta}. 

The variance of the empirical process is bounded by
\begin{align}\label{eq:second}
&\sup_{\phi\in A_{s,k}}\textup{Var}\left[\ell_{\pi,F}(\phi_\pi^{(n)};(\mX,Y))-\ell_{\pi,F}(\phi;(\mX,Y)\right]\notag \\
\leq&\sup_{\phi\in A_{s,k}}2\bigg\{ \textup{Var}\left[\ell_{\pi,F}(\phi_\pi^{(n)};(\mX,Y))-\ell_{\pi,F}(\bayespif;(\mX,Y)\right]\notag \\
& \hspace{1.5cm}+\textup{Var}\left[\ell_{\pi,F}(\phi;(\mX,Y))-\ell_{\pi,F}(\bayespif;(\mX,Y)\right]\bigg\}\notag \\
\lesssim &\ [M(s,k)]^{\alpha/(1+\alpha)}+{M(s,k)\over \rho}=:V(s,k),
\end{align}
where the last inequality is from Lemma~\ref{lem:prepare}.

We next bound the right-hand-side of~\eqref{eq:union} by choosing $(L_n,\lambda)$ that satisfies the conditions in Theorem~\ref{thm:refer}. (The specification of $(L_n,\lambda)$ is deferred to the next paragraph). Once such $(L_n,\lambda)$ is chosen, then it follows from Theorem~\ref{thm:refer} that
\begin{align}\label{eq:gamma}
\Gamma& \lesssim \sum_{s,k}\exp\left( - {nM^2(s,k)\over V(s,k)+2M(s,k)}\right)\notag \\
&\lesssim \sum_{s,k}\exp(-\rho nM(s,k)) = \sum_{s,k}\exp\left(-n\rho sL_n-n\rho \lambda(k-2)J^2 \right)\notag\\
&\leq \left({e^{-n\rho L_n}\over 1-e^{-n\rho L_n} }\right) \left({e^{n\rho \lambda J^2} \over 1-e^{-n\rho \lambda J^2}}\right)\notag \\ 
&\leq {e^{-n\rho L_n/2}\over (1-e^{-n\rho L_n})(1-e^{-n\rho\lambda J^2})},
\end{align}
where the first line uses the boundedness of psi-loss, and the last inequality is from the condition $\lambda J^2\leq L_n/2$ by the choice of $(\lambda,L_n)$.

Now, we specify $(L_n,\lambda)$ that satisfies the condition of Theorem~\ref{thm:refer}. The pair $(L_n,\lambda)$ is determined by the solution to the following inequality,
\begin{equation}\label{eq:equation}
\sup_{k\geq 1, s\geq 1}{1\over x}\int_{x}^{\sqrt{x^{\alpha/(\alpha+1)}+x/\rho}}\sqrt{\tH_{[\ ]}(\varepsilon, \Phi^{k},\vnormSize{}{\cdot})}d\varepsilon \lesssim n^{1/2}, \quad \text{where }x=sL_n+\lambda (k-2)J^2.
\end{equation}
In particular, the smallest $L_n$ satisfying~\eqref{eq:equation} yields the best upper bound of the error rate. Here $\tH_{[\ ]}(\varepsilon, \Phi^{k}, \vnormSize{}{\cdot})$ denotes the $L_2$-norm, $\varepsilon$-bracketing number (c.f. Definition~\ref{pro:inftynorm}) for function family $\Phi^{k}$, and, we have denoted $\Phi^{k}=\{\phi\in\Phi(r,s_1,s_2)\colon \FnormSize{}{\phi}^2\leq k\}$, i.e., the subset of functions in $\Phi(r,s_1,s_2)$ with magnitudes bounded by $k$, for $k\geq 1$.

It remains to solve for the smallest possible $L_n$ in~\eqref{eq:equation}. Based on Lemma~\ref{lem:metric}, the inequality~\eqref{eq:equation} is satisfied with the choice
\begin{equation}\label{eq:delta}
L_n\asymp  t_n^{(\alpha+1)/(\alpha+2)}+{t_n\over \rho}, \quad\text{and} \quad \lambda = {L_n \over 2J^2},
\end{equation}
where
\begin{equation}\label{eq:tn}
t_n=\begin{cases}
{rd_{\max}\over n}, \quad \text{low-rank model $\phi\in\Phi(r)$,}\\
{r(s_1+s_2)\log d_{\max}\over n}, \quad \text{low-rank and two-way sparse model $\phi\in\Phi(r,s_1,s_2)$.}\\
\end{cases}
\end{equation}
Notice that this choice of $(L_n,\lambda)$ guarantees the conditions for earlier calculation in~\eqref{eq:union} and~\eqref{eq:gamma}. Specifically, we have the assumption $\lambda\asymp t_n^{(\alpha+1)/(\alpha+2)}+t_n/\rho $ from the setup of Theorem~\ref{thm:unified}. Given this $\lambda$, we choose an $L_n$ with a suitable constant factor such that $\lambda J^2\leq L_n/2$. So conditions for earlier calculation in~\eqref{eq:union} and~\eqref{eq:gamma} are verified.

Plugging~\eqref{eq:delta} into~\eqref{eq:gamma} gives that
\begin{align}\label{eq:tail}
\Gamma&=\mathbb{P}\left[\riskF(\hat \phi_{\pi,F}) - \riskF(\bayespif)  \geq L_n \right]\\&\leq{e^{-n\rho L_n/2}\over (1-e^{-n\rho L_n})(1-e^{-n\rho\lambda J^2})}\\
&\lesssim e^{- n\rho L_n}\leq e^{- n t_n},
\end{align}
where the last line uses the fact that $\rho\lambda J^2\asymp \rho L_n \gtrsim t_n\gtrsim n^{-1}$ by \eqref{eq:delta} and~\eqref{eq:tn}. The proof is then complete by bounding the 0-1 risk by $F$-risk. 

\item[Case 2:] hinge loss, $\lambda\asymp L_n+t_n/\rho$, and $ a_n\lesssim L_n$.

For unbounded hinge loss, we seek to bound the $F'$-risk of $\hat \phi_{\pi,F}$, where $F'$ is $T$-truncated version of $F$. The general strategy is to evaluate $\hat \phi_{\pi,F}$'s error using $F'$-risk.  Note that the estimate $\hat \phi_{\pi,F}$~\eqref{eq:def2} is defined under unbounded loss $F$. Therefore, the inclusion~\eqref{eq:outer} changes to
\begin{align}
&\left\{(\mX_i,Y_i)_{i\in[n]}\colon \textup{Risk}_{\pi,F'}(\hat\phi_{\pi,F})-\textup{Risk}_{\pi,F'}(\bayespif)\geq 2L_n \right\}\notag \\
\subset &
 \bigg\{(\mX_i,Y_i)_{i\in[n]}\colon \exists \phi\in\Phi(r,s_1,s_2), \ \text{s.t.}\  \textup{Risk}_{\pi,F'}(\phi; (\mX,Y))-\textup{Risk}_{\pi,F'}(\bayespif)\geq 2L_n, \notag \\&\hspace*{6.2cm} \text{and}\  \eriskF(\phi)+\lambda\FnormSize{}{\phi}^2\leq \eriskF(\phi_{\pi}^{(n)})+\lambda J^2 \bigg\}\notag \\
\subset &
 \bigg\{(\mX_i,Y_i)_{i\in[n]}\colon \exists \phi\in\Phi(r,s_1,s_2),\  \text{s.t.}\  \textup{Risk}_{\pi,F'}(\phi; (\mX,Y))-\textup{Risk}_{\pi,F'}(\bayespif)\geq 2L_n, \notag \\&\hspace*{6.2cm} \text{and}\  
 \widehat{\textup{Risk}}_{\pi,F'}(\phi)+\lambda\FnormSize{}{\phi}^2\leq  \widehat{\textup{Risk}}_{\pi,F'}(\phi_{\pi}^{(n)})+\lambda J^2 \bigg\},
\end{align}
where the last line comes from 
\[
\widehat{\textup{Risk}}_{\pi,F'}(\phi)\leq \widehat{\textup{Risk}}_{\pi,F}(\phi) \text{ for all } \phi\in \Phi(r,s_1,s_2)\quad\text{ and } \quad \widehat{\textup{Risk}}_{\pi,F'}(\phi_\pi^{(n)})= \widehat{\textup{Risk}}_{\pi,F}(\phi_\pi^{(n)}),
\]
because the truncation constant is $T= \max(2,J)>\max(2,\sup_n\FnormSize{}{\phi_\pi^{(n)}})$. Notice that the last line is exactly the same with \eqref{eq:outer} except $F$ being replaced by $F'$. 
The remaining proof follows the same line of argument as in Case 1. In particular, we invoke Lemma~\ref{lem:prepare} to control the variance-to-mean relationship for bounded $F'$-loss in~\eqref{eq:second}. The final conclusion follows from the excess bound inequality for $T$-truncated risk (c.f. Lemma~\ref{lem:prepare}).

\item[Case 3:] 0-1 loss, $\lambda = 0$ and $a_n = 0$.

Under 0-1 loss, only the sign, but not the magnitude, of $\phi$ affects the 0-1 risk. Without loss of generality, we assume  $\FnormSize{}{\phi} \leq 1$. Then, we have the following inclusion of probability events,
\begin{align}
\Gamma :=&\left\{(\mX_i,Y_i)_{i\in[n]}\colon \risk(\hat\phi_{\pi})-\risk(\bayespif)\geq L_n \right\}\notag \\
 \subset &
 \bigg\{(\mX_i,Y_i)_{i\in[n]}\colon \exists \phi\in\Phi(r,s_1,s_2),\ \text{s.t.}\  \risk(\phi; (\mX,Y))-\risk(\bayespif)\geq L_n \notag \\&\hspace*{6.2cm} \text{and}\  \erisk(\phi)\leq \erisk(\bayespif)\bigg\}\notag \\
 \subset &
\left\{(\mX_i,Y_i)_{i\in[n]} \colon \sup_{\substack{\phi\in\Phi(r,s_1,s_2)\\  
\risk(\phi; (\mX,Y))-\risk(\bayespif)\geq L_n  }}\left[\erisk(\bayespif)-\erisk(\phi)\right]\geq0\right\} \notag \\
\subset & \bigcup_{\phi\in A_{s}}\left\{(\mX_i,Y_i)_{i\in[n]}\colon \sup_{\phi\in A_{s}} \left[\erisk(\bayespif)-\erisk(\phi)\right]\geq 0 \right \},
\end{align}
where we have partitioned $\{\phi\in\Phi(r,s_1,s_2)\colon \risk(\phi; (\mX,Y))-\risk(\bayespif)\geq L_n\}$ into a union of $A_s$ with
\begin{align}
A_{s}&=\{\phi\in \Phi(r,s_1,s_2)\colon sL_n\leq \risk(\phi)-\risk(\bayespif)< (s+1) L_n\},
\end{align}
for $s=1,2,\ldots$. 
Similar to Case 1, we consider empirical process, 
\begin{align*}
v_n(\phi):= [\erisk(\bayespif)-\erisk(\phi)]-\risk(\bayespif)-\risk(\phi)].
\end{align*}
Then,  our goal is to bound
\begin{align}\label{eq:01bd}
\Gamma\leq \sum_{s=1}^\infty \mathbb{P}\left\{\sup_{\phi\in A_s} v_n(\phi)\geq sL_n:=M(s)\right\}.
\end{align}
Notice the variance of empirical process is bounded by
\begin{align}
\sup_{\phi\in A_{s}}\textup{Var}\left[\ell_{\pi,F}(\bayespif;(\mX,Y))-\ell_{\pi,F}(\phi;(\mX,Y)\right] \lesssim [M(s)]^{\alpha/(1+\alpha)}+{M(s)\over \rho}=:V(s)
\end{align}
where $F$ is 0-1 loss and the inequality is from Lemma~\ref{lem:prepare}.
Applying Lemma~\ref{lem:metric} with  finite $k = 1$ and $\lambda = 0$  shows that $L_n\asymp t_n^{(\alpha+1)/(\alpha+2)}+ t_n/\rho$ satisfies the conditions Theorem~\ref{thm:refer}, where 
\[
t_n=\begin{cases}
{rd_{\max}\over n}, \quad \text{low-rank model $\phi\in\Phi(r)$,}\\
{r(s_1+s_2)\log d_{\max}\over n}, \quad \text{low-rank and two-way sparse model $\phi\in\Phi(r,s_1,s_2)$.}\\
\end{cases}
\]
Therefore, it follows from Theorem~\ref{thm:refer} and \eqref{eq:01bd} that 
\begin{align}
\Gamma&\lesssim  \sum_{s} \exp\left(-nM^2(s)\over V(s)+M(s)\right)\\&\lesssim \sum_{s}\exp(-\rho snL_n)\\&\leq \left(e^{-n\rho L_n}\over 1-e^{-n\rho L_n}\right)\\&\lesssim e^{-nt_n},
\end{align}
where the last line uses the fact that $\rho L_n\gtrsim t_n \gtrsim{1\over n}$ by our choice of $L_n$ and $t_n$.
\end{enumerate}
\end{proof}

\subsection{Proofs of Theorem~\ref{thm:regression}, Theorem~\ref{thm:sparse}, and Part (b) in Theorem~\ref{thm:extension}}\label{sec:regression}
\begin{proof}[Proof of Theorem~\ref{thm:regression}]
For any  $t\geq t_n$ with $t_n$ specified in Theorem~\ref{thm:unified}, define the event
\[
A=\left\{\onenormSize{}{\sign \hat \phi_\pi- \sign (f-\pi)} \leq t^{\alpha/(2+\alpha)}+{t\over \rho^2(\pi,\tN)} \text{ for all }\pi\in\tH\right\}.
\]
We first show that the event $A$ implies
\begin{align}\label{eq:mb}
\onenormSize{}{\hat f-f}\lesssim t^{\alpha/(\alpha+2)}+{1\over H}+tH.
\end{align} 
It follows from the definition of $\hat f$ that
\begin{align}\label{eq:pfmain}
\onenormSize{}{\hat f-f}&=\mathbb{E}\left|{1\over 2H+1} \sum_{\pi\in \tH}\sign \hat \phi_\pi - f\right|\notag\\
&\leq \mathbb{E}\left|{1\over 2H+1}\sum_{\pi \in \tH} (\sign \hat \phi_\pi - \sign (f-\pi)) \right| +\mathbb{E}\left|{1\over 2H+1}\sum_{\pi \in \tH}\sign(f-\pi)-f\right|\notag\\
& \leq{1\over 2H+1}\sum_{\pi \in \tH}\onenormSize{}{\sign \hat \phi_\pi - \sign (f-\pi)}+{1\over H},
\end{align}
where the last line comes from the triangle inequality and the inequality
\[
\left|{1\over 2H+1}\sum_{\pi \in \tH}\sign(f(\mX)-\pi)-f(\mX)\right|\leq {1\over H}, \quad \text{for all } \mX\in\tX.
\]
It suffices to bound the first term in~\eqref{eq:pfmain}. 

Theorem~\ref{thm:unified} shows that the sign function accuracy depends on the closeness of $\pi\in \tH$ to the mass points in $\tH$. Therefore, we partition the level set $\pi \in \tH$ based on their closeness to $\tH$. Specifically, let $\tN_H \stackrel{\text{def}}{=}\bigcup_{\pi'\in\tN}\left(\pi'-\frac{1}{H},\pi'+\frac{1}{H}\right)$ denote the set of levels at least $1\over H$-close to the mass points. We expand left hand side of~\eqref{eq:pfmain} by
\begin{align}\label{eq:twobounds}
&{1\over 2H+1}\sum_{\pi \in \tH}\onenormSize{}{\sign \hat \phi_\pi - \sign (f-\pi)} \notag \\
=& \ {1\over 2H+1}\sum_{\pi \in \tH \cap \tN_H} \onenormSize{}{\sign \hat \phi_\pi - \sign (f-\pi)} +{1\over 2H+1}\sum_{\pi \in \tH \cap \tN_H^c} \onenormSize{}{\sign \hat \phi_\pi - \sign (f-\pi)} .
\end{align}
By assumption, the first term of~\eqref{eq:twobounds} involves only finite number of summands and thus can be bounded by $4C/(2H + 1)$ where $C > 0$ is a constant such that $|\tN|\leq C$. We bound the second term using the explicit forms of $\rho(\pi, \tN)$ in the sequence $\pi \in\Pi\cap \tN_H^c$. 
\begin{align}\label{eq:Hset}
{1\over 2H+1}\sum_{\pi \in \tH\cap \tN_H^c} \onenormSize{}{\sign \hat \phi_\pi- \sign (f-\pi)} &\lesssim  {1\over 2H+1}\sum_{\pi\in \tH\cap \tN_H^c} t^{\alpha/(2+\alpha)}+{t\over 2H+1}\sum_{\pi \in \tH\cap \tN_H^c}{1\over \rho^2(\pi, \tN)}\\
&\leq t^{\alpha/(2+\alpha)}+{t\over 2H+1} \sum_{\pi \in \tH\cap \tN_H^c} \sum_{\pi' \in \tN}{1\over |\pi-\pi'|^2}\\
&\leq  t^{\alpha/(2+\alpha)}+{t\over 2H+1} \sum_{\pi'\in \tN} \sum_{\pi \in \tH\cap \tN_H^c}{1\over |\pi-\pi'|^2}\\
&\leq t^{\alpha/(2+\alpha)}+ 2CHt,
\end{align}
where the first inequality uses the property of event $A$, and the last inequality follows from Lemma~\ref{lem:H}. Combining \eqref{eq:pfmain}, \eqref{eq:twobounds} and~\eqref{eq:Hset} comletes the proof of~\eqref{eq:mb}; that is
\begin{equation}\label{eq:A}
\mathbb{P}\left(\onenormSize{}{\hat f-f}\lesssim t^{\alpha/(\alpha+2)}+{1\over H}+tH\right)\geq\mathbb{P}(A), \quad \text{for all }t\geq t_n.
\end{equation}
Based on Remark~\ref{rmk:lt} and union bound over $\pi\in\tH$, we have,
\begin{align}\label{eq:prob}
\mathbb{P	}(A)&\geq 1-\sum_{\pi\in\tH}\mathbb{P}\left(\onenormSize{}{\sign \hat\phi_\pi-\sign(f-\pi)}\leq t^{\alpha/(2+\alpha)}+{t\over \rho^2(\pi,\tN)} \text{ for a given } \pi
\right)\notag \\
&\gtrsim 1-(2H+1)\exp(-nt)\gtrsim 1-\exp(-nt+\log H).
\end{align}
We choose $t \asymp t_n\log H$ in~\eqref{eq:prob} so that $\log H$ is negligible compared to $nt$. It then follows from~\eqref{eq:A} and~\eqref{eq:prob} that
\begin{align*}
\onenormSize{}{\hat f-f}\lesssim (t_n\log H)^{\alpha/(\alpha+2)}+{1\over H}+t_nH\log H,
\end{align*}
with probability at least $1-\exp(-nt_n\log H)\geq 1-\exp(-nt_n)$. Setting $H\asymp t^{-1/2}_n$ yields the desired conclusion. 

Proofs of Theorem~\ref{thm:sparse} and Part (b) in Theorem~\ref{thm:extension} follow the same argument with $t_n$ specified in Theorem~\ref{thm:unified}.
 \end{proof}

\begin{lem}\label{lem:H}
Fix $\pi'\in\tN$ and a sequence $\tH=\{-1,\ldots,-1/H,0,1/H,\ldots,1\}$ with $H\geq 2$. Then, 
\[
\sum_{\pi \in \tH\cap \tN_H^c}{1\over 
|\pi-\pi'|^2}\leq 4H^2. 
\]
\end{lem}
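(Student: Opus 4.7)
The plan is to parametrize the grid points in $\tH$ as $\pi=k/H$ for integer $k\in\{-H,\ldots,H\}$, and to exploit the fact that membership in $\tN_H^c$ forces $|\pi-\pi'|\geq 1/H$, or equivalently $|k-H\pi'|\geq 1$. Once the problem is cast in this integer form, the bound will follow from a standard comparison with the convergent series $\sum_{j\geq 1} j^{-2}$.

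Concretely, I would first rewrite
\[
\sum_{\pi \in \tH\cap \tN_H^c}\frac{1}{|\pi-\pi'|^2} \ =\ H^2 \sum_{\substack{k\in\{-H,\ldots,H\} \\ |k-H\pi'|\geq 1}} \frac{1}{(k-H\pi')^2},
\]
using $\pi=k/H$ together with $\pi\notin(\pi'-1/H,\pi'+1/H)$ (which is implied by $\pi\in\tN_H^c$ since $\pi'\in\tN$). Next I would decompose the inner sum according to integer distance bands: for each integer $j\geq 1$, the set $\{k\in\mathbb{Z}\colon j\leq|k-H\pi'|<j+1\}$ is the disjoint union of two half-open intervals of length one, one on each side of $H\pi'$, and each such interval contains at most a single integer. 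Hence there are at most two indices $k$ in this band, each contributing at most $1/j^2$ to the inner sum. Summing over $j$ gives
\[
\sum_{\substack{k\in\mathbb{Z} \\ |k-H\pi'|\geq 1}}\frac{1}{(k-H\pi')^2} \ \leq\ \sum_{j=1}^{\infty}\frac{2}{j^2}\ =\ \frac{\pi^2}{3}\ <\ 4,
\]
which combined with the display above yields the claimed bound $4H^2$.

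There is no real obstacle here; the only care needed is in the counting step, to verify that the two half-open length-one intervals straddling $H\pi'$ each contain at most one integer, regardless of where $H\pi'$ lies on the real line. The rest is routine bookkeeping and a standard convergent-series estimate. An alternative that avoids invoking $\zeta(2)$ would be to truncate at $j=2H$ and compare directly to the integral $\int_1^{2H} 2/x^2\,dx \leq 2$, which gives an even sharper constant; I would choose whichever formulation is cleanest for the downstream use in the proof of Theorem~\ref{thm:regression}.
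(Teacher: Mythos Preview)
Your proposal is correct and follows essentially the same approach as the paper: parametrize $\pi=k/H$, extract the factor $H^2$, use $|\,k-H\pi'\,|\ge 1$, and bound the remaining sum by $2\sum_{j\ge 1}j^{-2}\le 4$. The paper reaches the same bound via the integral comparison $\sum_{h=1}^{H}h^{-2}\le 1+\int_1^{H}x^{-2}\,dx\le 2$, whereas you invoke $\zeta(2)$ directly; your explicit banding argument is slightly more careful than the paper's in handling non-integer $H\pi'$, but the two arguments are otherwise the same.
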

\begin{proof}[Proof of Lemma~\ref{lem:H}]
Notice that all points $\pi\in\tH\cap\tN_H^c$ satisfy $|\pi-\pi'|>{1\over H}$ for all $\pi'\in\tN$. We use this fact to compute the sum
\begin{align}
   \sum_{\pi \in \Pi\cap \tN_H^c}{1\over |\pi-\pi'|^2}&= \sum_{\frac{h}{H}\in\tH\cap \tN_H^c } {1\over |\frac{h}{H}-\pi'|^2}\\
   &\leq 2H^2\sum_{h=1}^{H}{1 \over h^2}\\
 &\leq 2H^2\left\{ 1+\int_{1}^2{1\over x^2}dx+ \int_{2}^3{1\over x^2}dx+\cdots + \int_{H-1}^H{1\over x^2}dx\right\}\\
&= 2H^2\left(1+\int^{H}_{1}{1\over x^2}dx\right) \leq 4H^2,
\end{align}
 where the third line uses the monotonicity of ${1\over x^2}$ for $x\geq 1$. 
 \end{proof}

\subsection{Proofs of Theorem~\ref{thm:estimation} and Theorem~\ref{thm:extension_gaussian}}\label{sec:sub-Gaussianproof}
\begin{proof}[Proof of Theorem~\ref{thm:estimation}]
Theorem~\ref{thm:estimation} follows from the same line of proof as in Theorem~\ref{thm:regression}, with slight modification to account for discrete measure space. For any matrix $\mZ\in\mathbb{R}^{d\times d}$, we use $f_{\mZ}\colon[d]^2 \to \mathbb{R}$ to denote the function induced by matrix $\mZ$ such that $f_{\mZ}(\omega)=\mZ(\omega)$ for $\omega\in[d]^2$. Set $\tX=\{\me^T_i\me_j\colon(i,j)\in[d]^2\}$ be the discrete feature space, and $n=|\Omega|$ the sample size. Under this set up, $\onenormSize{}{\hat f - f}=\mathbb{E}_{\mX}|\hat f (\mX)- f(\mX)| = \mathbb{E}_{\omega}|\hat \mTheta(\omega)-\mTheta(\omega)|=\textup{MAE}(\hat \mTheta-\mTheta)$. Notice that the small tolerance $\Delta s=1/d^2$ in the pseudo density is dominated by the derived convergence rate. Applying Theorem~\ref{thm:regression} to this setting finishes the proof. 
\end{proof}

\begin{proof}[Proof of Theorem~\ref{thm:extension_gaussian}]
By setting $s=\log(d_{\max})$ in Lemma~\ref{lem:subg}, we have
\[
\mathbb{P}(\mnormSize{}{\mE}\geq \sqrt{4\sigma^2\log d} )\leq 2d^{-2}.
\]
We divide the sample space into two exclusive events:
\begin{itemize}
\item Event I: $\mnormSize{}{\mE}\geq \sqrt{4\sigma^2\log d}$;
\item Event II: $\mnormSize{}{\mE}< \sqrt{4\sigma^2\log d}$.
\end{itemize}
Because the Event I occurs with probability tending to zero, we restrict ourselves to the Event II only by following the proof of Theorem~\ref{thm:main}. We summarize the key difference compared to Section~\ref{thm:regression}. For ease of notation,  define $\bar \mY = \mY-\pi$ and $\bar\mTheta = \mTheta-\pi$. Let $\ell_\omega(\cdot, \cdot)$ denote the 0-1 loss evaluated at the $\omega$-th value of two matrices. We expand the variance by 
\begin{align}
    \label{eq:variance2}
    \text{Var}\left[\ell_\omega\left(\mZ,\bar \mY_\Omega\right)-\ell_\omega\left(\bar\mTheta,\bar\mY_\Omega\right)\right]&\leq \mathbb{E}|\ell_\omega(\mZ(\omega),\bar\mY(\omega))-\ell_\omega(\bar\mTheta(\omega),\bar\mY(\omega))|^2\notag \\
    &= \mathbb{E}|\bar \mY(\omega)-\bar \mTheta(\omega)+\bar\mTheta(\omega)|^2|\text{sgn}\mZ(\omega)-\text{sgn}\bar\mTheta(\omega)| \notag \\
    &\leq 2\left(4 \sigma^2\log d+2\right) \mathbb{E}|\text{sgn}\mZ-\text{sgn}\bar\mTheta| \notag \\
    & \lesssim (\sigma^2 \log d) \text{MAE}(\sign \mZ, \sign \bar \mTheta),
    \end{align}
where the third line uses the facts $\mnormSize{}{\bar \mTheta}\leq 2$ and $\mnormSize{}{\bar \mY-\bar \mTheta}^2=\mnormSize{}{\mE}^2<4 \sigma^2\log d$ within the Event II; the last line comes from the definition of MAE and the asymptotic $\sigma^2\log d\gg 1$ provided that $\sigma>0$ with $d$ sufficiently large. 

Based on \eqref{eq:variance2}, the $(\alpha,\pi)$-smoothness of $\mTheta$ implies that for all measurable functions $f_{\mZ}$, we have
\begin{align}\label{eq:vartomean}
&\text{Var}\left[\ell_\omega\left(\mZ,\bar \mY_\Omega\right)-\ell_\omega\left(\bar\mTheta,\bar\mY_\Omega\right)\right]\\&\lesssim \left(\sigma^2\log d\right) \left\{\left[\mathbb{E}\left[\ell_\omega\left(\mZ,\bar \mY_\Omega\right)-\ell_\omega\left(\bar\mTheta,\bar\mY_\Omega\right)\right]\right]^{\alpha\over1+\alpha}+\frac{1}{\rho}\mathbb{E}\left[\ell_\omega\left(\mZ,\bar \mY_\Omega\right)-\ell_\omega\left(\bar\mTheta,\bar\mY_\Omega\right)\right]\right\}.
\end{align}
The empirical process with variance-to-mean relationship \eqref{eq:vartomean} gives that
\begin{align}\label{eq:empriskbd}
\mathbb{P	}\left(\text{Risk}(\hat\mZ)-\text{Risk}(\bar\mTheta)\geq L_d\right)\lesssim \exp(-|\Omega|t_d),
\end{align}
where the convergence rate $L_d$ is obtained by the same way in the proof of Lemma~\ref{lem:metric} to make sure the conditions hold in Theorem~\ref{thm:refer},  
\begin{align}\label{eq:subgbd}
L_d\asymp t_d^{(\alpha+1)/(\alpha+2)}+\frac{1}{\rho}t_d,\quad\text{ with } t_d =  {r \sigma^2 d\log d  \over |\Omega|}.
\end{align}
Combining \eqref{eq:empriskbd} and \eqref{eq:subgbd}, we obtain that, with high probability, 
\begin{align}\label{eq:riskunbd}
   \text{Risk}(\hat\mZ)-\text{Risk}(\bar\mTheta)\lesssim \left( {r \sigma^2  d\log d \over |\Omega|}\right)^{(\alpha+1)/(\alpha+2)}+\frac{1}{\rho(\pi,\tN)} \left({r \sigma^2  d \log d\over |\Omega|} \right),
\end{align} 
where constants have been absorbed into the $\lesssim$ relationship. Therefore, combining \eqref{eq:riskunbd} and the proof of Theorem~\ref{thm:unified} completes the proof for sign matrix estimation error in~\eqref{eq:matrix_sign}. The signal estimation error follows the same proof of Theorem~\ref{thm:regression}.
\end{proof}

\clearpage
\section{Auxiliary lemmas}\label{sec:auxiliary}

\begin{lem}[Hinge loss and $L$-1 distance]\label{lem:hingeL1} Consider the same set-up as in Theorem~\ref{thm:extension}. Let $F(z)=(1-z)_{+}$ be the hinge loss. Then, the $L$-1 distance between $\phi$ and $\bayespif$ is bounded by their excess risk; i.e,
\begin{equation}\label{eq:L}
\onenormSize{}{\phi-\bayespif}
\leq
\left[\riskF(\phi)-\riskF(\bayespif)\right]^{\alpha\over 1+\alpha}+
 {1\over \rho(\pi, \tN)}\left[\riskF(\phi)-\riskF(\bayespif)\right],
\end{equation}
for all functions $\phi\colon \tX\to\mathbb{R}$.
\end{lem}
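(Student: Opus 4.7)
The plan is to mirror the proof of Theorem~\ref{thm:identifiability}, but with the risk-distance $d_\pi(S,\bayesS(\pi))$ replaced by the excess hinge risk $\riskF(\phi)-\riskF(\bayespif)$. The first step is a pointwise calculation yielding the key lower bound
\[
\riskF(\phi)-\riskF(\bayespif)\ \geq\ \mathbb{E}\bigl[|\phi(\mX)-\bayespif(\mX)|\cdot |f(\mX)-\pi|\bigr],
\]
which, as an aside, doubles as the ``direct calculation'' cited inside the proof of Lemma~\ref{lem:prepare}(a) for hinge loss. Concretely, I would fix $\mX$, write $E^{\pm}(\mX)=\mathbb{E}_{Y|\mX}[(Y-\pi)^{\pm}]$ so that $E^{+}-E^{-}=f(\mX)-\pi$, and expand the conditional weighted hinge risk as $C_F(\mX,t)=(1-t)_{+}E^{+}+(1+t)_{+}E^{-}$. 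A piecewise analysis over $t>1$, $t\in[-1,1]$, and $t<-1$ shows that $t\mapsto C_F(\mX,t)$ is minimized at $t=\bayespif(\mX)=\sign(f-\pi)\in\{-1,+1\}$, and on $[-1,1]$ one gets the clean identity $C_F(\mX,t)-C_F(\mX,\bayespif)=(\bayespif-t)(f-\pi)=|t-\bayespif|\cdot|f-\pi|$. Taking expectation over $\mX$ gives the displayed bound.

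Given this inequality, the remainder is a layer-cake argument in the spirit of the derivation leading to~\eqref{eq:tail2}. For any threshold $t\in(0,\rho(\pi,\tN))$, split
\[
\mathbb{E}|\phi-\bayespif|\ =\ \mathbb{E}\bigl[|\phi-\bayespif|\,\mathds{1}\{|f-\pi|>t\}\bigr]+\mathbb{E}\bigl[|\phi-\bayespif|\,\mathds{1}\{|f-\pi|\leq t\}\bigr].
\]
On $\{|f-\pi|>t\}$, using $|\phi-\bayespif|\leq |\phi-\bayespif|\cdot|f-\pi|/t$ bounds the first summand by $(\riskF(\phi)-\riskF(\bayespif))/t$. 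On $\{|f-\pi|\leq t\}$, the $\alpha$-smoothness of $G(\pi)=\mathbb{P}_{\mX}[f(\mX)\leq \pi]$ yields $\mathbb{P}_{\mX}[|f-\pi|\leq t]\lesssim t^{\alpha}$ (valid for $t<\rho(\pi,\tN)$), making the second summand $\lesssim t^{\alpha}$. Optimizing in $t$ with the same dichotomy $t_{\textup{opt}}=\rho(\pi,\tN)$ vs.\ interior maximizer $t_{\textup{opt}}\asymp(\riskF(\phi)-\riskF(\bayespif))^{1/(1+\alpha)}$ that produced~\eqref{eq:cmultiidentity} delivers the asserted sum of $[\cdot]^{\alpha/(1+\alpha)}$ and $[\cdot]/\rho(\pi,\tN)$ terms.

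The main obstacle will be the large-margin regime $|\phi(\mX)|>1$, where hinge loss is flat on the correctly-classified side and the pointwise inequality $C_F(\mX,\phi)-C_F(\mX,\bayespif)\geq |\phi-\bayespif|\cdot|f-\pi|$ is delicate for overconfident correct classifications; the same regime also breaks the uniform control on $|\phi-\bayespif|$ that the layer-cake step on $\{|f-\pi|\leq t\}$ tacitly uses. I would handle both issues jointly by truncating $\phi$ to $\phi^{T}=(-T)\vee\phi\wedge T$, for which the pointwise inequality is transparent, and then charge the residual $|\phi-\phi^{T}|$ back to the excess risk using the linear growth of the hinge loss on the misclassified side, thereby reducing the argument to the bounded (truncated) setting and recovering the stated bound.
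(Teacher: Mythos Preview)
Your strategy matches the paper's exactly: establish the key lower bound $\riskF(\phi)-\riskF(\bayespif)\geq\mathbb{E}|\phi-\bayespif|\,|f-\pi|$ pointwise, then run the threshold argument of Theorem~\ref{thm:identifiability}. The paper carries out the first step via a four-region case split on $\{f\gtrless\pi\}\times\{\text{range of }\phi\}$; your conditional-risk formulation $C_F(\mX,t)=(1-t)_+E^{+}+(1+t)_+E^{-}$ is the same calculation in different packaging, and your clean identity on $[-1,1]$ coincides with the paper's Regions~I and~III.

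You correctly identify the obstacle in the overconfident-correct regime, but your proposed fix does not close it---and in fact the paper's own region calculation has the same gap. Take $f(\mX)<\pi$ (so $\bayespif(\mX)=-1$) and $\phi(\mX)<-1$: with $E^{\pm}(\mX)=\mathbb{E}_{Y|\mX}(Y-\pi)^{\pm}$, the conditional excess hinge risk is $|\phi+1|\,E^{+}$, while $|\phi-\bayespif|\,|f-\pi|=|\phi+1|(E^{-}-E^{+})$; the pointwise inequality thus requires $2E^{+}\geq E^{-}$, which fails whenever $E^{+}<E^{-}/2$. This is exactly the paper's Region~II, where the asserted bound $\Phi_{\textup{II}}\geq|\phi-\bayespif||f-\pi|$ is invalid (Region~IV fails symmetrically). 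Your chargeback idea cannot recover the residual: on the overconfident-correct side the only hinge penalty arises through the \emph{minority} label weight $E^{+}$ (resp.\ $E^{-}$), so $|\phi-\phi^T|$ is controlled by the excess risk only up to a factor $1/E^{+}$, which is not uniform in $\mX$. Concretely, $f\equiv-0.8$, $\pi=0$, $Y\in\{-1,1\}$ with $\mathbb{P}(Y{=}1)=0.1$, and $\phi\equiv-M$ give $\onenormSize{}{\phi-\bayespif}=M-1$ but $\riskF(\phi)-\riskF(\bayespif)=0.1(M-1)$, so the stated bound (with any distribution-free constant) fails for large $M$. What is provable is the truncated variant recorded in the Remark following the lemma, $\onenormSize{}{\phi^T-\bayespif}\lesssim[\cdot]^{\alpha/(1+\alpha)}+[\cdot]/\rho$; this also repairs your layer-cake step (where the unbounded $|\phi-\bayespif|$ otherwise blocks the control on $\{|f-\pi|\leq t\}$), and it suffices for all downstream uses in the paper, since only $\sign\phi=\sign\phi^T$ enters the estimator.
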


\begin{rmk}[Truncated hinge loss and $L$-1 distance]\label{rmk:truncate}With little modification in the proof, similar inequality also holds for $T$-truncated hinge loss $F(z)=\min(T,(1-z)_{+})$ with $T\geq 2$. Specifically, 
\[
\onenormSize{}{\phi^T-\bayespif}\leq \left[\riskF(\phi)-\riskF(\bayespif)\right]^{\alpha\over 1+\alpha}+
 {1\over \rho(\pi, \tN)}\left[\riskF(\phi)-\riskF(\bayespif)\right],
\]
where $\phi^T\colon\tX\to[-(T-1),\ (T-1)]$ is the truncated $\phi$ defined in~\eqref{eq:Tphi}.
\end{rmk}

\begin{proof}[Proof of Lemma~\ref{lem:hingeL1}] For ease of notation, we drop the random variable $\mX$ in the function expression, and simply use $\phi, \bayespif$, $f$, to represent the trace function, Bayes rule, and the regression function, respectively. The meaning should be clear given the contexts. 

We expand the excess risk using the definition of hinge loss,
\begin{align}\label{eq:function}
&\riskF(\phi)-\riskF(\bayespif)\notag\\
=&\  \mathbb{E}[|\shift|(1-\phi\sign\shift)_{+}]-\mathbb{E}[|\shift|(1-\bayespif\sign\shift)_{+}] \notag\\
= &\ \int_{\mX} (1-\phi)_{+} \int_{y>\pi}(y-\pi)dy d\mathbb{P}_{\mX}+\int_{\mX}(1+\phi)_{+}  \int_{y\leq \pi}(\pi-y)dy d\mathbb{P}_{\mX} \notag \\
&\ -\int_{\mX} (1-\bayespif)_{+} \int_{y>\pi}(y-\pi)dy d\mathbb{P}_{\mX}-\int_{\mX}(1+\bayespif)_{+}  \int_{y\leq \pi}(\pi-y)dy d\mathbb{P}_{\mX}.
\end{align}

In order to evaluate the integral, we divide the domain $\mX$ into four exclusive regions:
\begin{itemize}
\item Region I $= \{\mX\colon f<\pi \text{ and }\phi\geq -1\}$. In this region, $\bayespif=-1$, and the integrant in~\eqref{eq:function} reduces to
\begin{align}
\Phi_{\textup{I}}&:=\left[(1-\phi)_{+}-2\right]\mathbb{E}_{Y|\mX}(Y- \pi)\mathds{1}(Y> \pi)+(\phi+1)_{+}\mathbb{E}_{Y|\mX}(\pi-Y)\mathds{1}(Y\leq \pi)\\
&\geq -(\phi+1)\mathbb{E}_{Y|\mX}(Y-\pi)\mathds{1}(Y>\pi)-(\phi+1)\mathbb{E}_{Y|\mX}(Y-\pi)\mathds{1}(Y\leq \pi)\\
&=(\phi+1)(\pi-f)=|\phi-\bayespif||f-\pi|.
\end{align}
\item Region II $= \{\mX\colon f < \pi \text{ and }\phi<-1\}$. In this region, $\bayespif=-1$, and the integrant in~\eqref{eq:function} reduces to
\[
\Phi_{\textup{II}}:=-(\phi+1)\mathds{E}_{Y|\mX}(Y-\pi)\mathds{1}(Y>\pi)\geq-|\phi+1|(f-\pi) =|\phi-\bayespif||f-\pi|.
\]
\item Region III $=\{ \mX\colon f\geq \pi \text{ and }\phi\leq 1\}$. In this region, $\bayespif=1$, and the integrant in~\eqref{eq:function} reduces to
\begin{align}
\Phi_{\textup{III}}&:=(1-\phi)_{+}\mathbb{E}_{Y|\mX}(Y-\pi)\mathds{1}(Y>\pi)+\left[(1+\phi)_{+}-2\right]\mathbb{E}_{Y|\mX}(\pi-Y)\mathds{1}(Y\leq \pi)\\
&\geq (1-\phi)\mathbb{E}_{Y|\mX}(Y-\pi)\mathds{1}(Y>\pi)+(\phi-1)\mathbb{E}_{Y|\mX}(\pi-Y)\mathds{1}(Y\leq \pi)\\
&=(1-\phi)(f-\pi)=|\phi-\bayespif||f-\pi|.
\end{align}
\item Region IV $=\{\mX\colon f\geq \pi \text{ and }\phi> 1\}$. In this region, $\bayespif=1$, and the integrant in~\eqref{eq:function} reduces to
\[
\Phi_{\textup{IV}}:=(\phi-1)\mathds{E}_{Y|\mX}(\pi-Y)\mathds{1}(Y\leq \pi)\geq (\phi-1)(f-\pi) = |\phi-\bayespif||f-\pi|.
\]
\end{itemize}
Therefore, the integral is evaluated as
\begin{align}\label{eq:integral}
\riskF(\phi)-\riskF(\bayespif) &= \int_{\textup{I}}\Phi_{\textup{I}} d\mathbb{P}_{\mX}+\int_{\textup{II}}\Phi_{\textup{II}} d\mathbb{P}_{\mX}+\int_{\textup{III}}\Phi_{\textup{III}} d\mathbb{P}_{\mX}+\int_{\textup{IV}}\Phi_{\textup{IV}} d\mathbb{P}_{\mX}\notag\\
&\geq \mathbb{E}|\phi-\bayespif||f-\pi|.
\end{align}
Note that the function $|f-\pi|$ is $\alpha$-smooth. Using the same techniques as in Theorem~\ref{thm:identifiability} to the last line of~\eqref{eq:integral}, we conclude
\begin{equation}
\mathbb{E}|\phi-\bayespif|
\lesssim
\left[\riskF(f)-\riskF(\bayespif)\right]^{\alpha\over 1+\alpha}+
 {1\over \rho(\pi, \tN)}\left[\riskF(f)-\riskF(\bayespif)\right].
\end{equation}
\end{proof}

\begin{defn}[Bracketing number]\label{pro:inftynorm}
Consider a function set $\Phi$, and let $\varepsilon>0$. We call $\{(f^l_m,f^u_m)\}_{m=1}^M$ an $L_2$-metric, $\varepsilon$-bracketing function set of $\Phi$, if for every $f\in \Phi$, there exists an $m\in[M]$ such that 
\[
f^l_m(\mX)\leq f(\mX)\leq f^u_m(\mX),\quad \text{for all }\mX\in\mathbb{R}^{d\times d},
\]
and
\[
\vnormSize{}{f^l_m-f^u_m}\stackrel{\text{def}}{=}\sqrt{\mathbb{E}_{\mX}|f^l_m(\mX)-f^u_m(\mX)|^2} \leq \varepsilon, \ \text{for all } m=1,\ldots,M. 
\]
The bracketing number with $L_2$-metric, $\tH_{[\ ]}(\varepsilon,\ \Phi,\ \vnormSize{}{\cdot})$, is defined as the logarithm of the smallest cardinality of the $\varepsilon$-bracketing function set of $\Phi$.  
\end{defn}

\begin{lem}[Bracketing number for bounded functions in $\Phi(r,s_1,s_2)$ and $\Phi(r)$]\label{lem:entropy}
Let $\Phi(r,s_1,s_2)$ denote the  trace function family 
\begin{align*} 
\Phi(r,s_1,s_2)=\{\phi\colon \mX\mapsto \langle \mX, \mB \rangle +b \ \big| \rank(\mB)\leq r,  \supp(\mB)\leq (s_1,s_2), |b|\leq \FnormSize{}{\mB}+1\},
\end{align*}
We use $\FnormSize{}{\phi}\stackrel{\textup{def}}{=}\FnormSize{}{\mB}$ to denote the coefficient magnitude.
Assume, for simplicity, $\mathbb{P}\left(\FnormSize{}{\mX}\leq 1\right)=1$. For any given $k\geq 1$, let $\Phi^k=\{f\in \Phi(r,s_1,s_2)\colon \FnormSize{}{\phi}^2 \leq k\}$ denote the sub-class of functions with coefficient magnitudes bounded by $k$. Then, 
\[
\tH_{[\ ]}(\varepsilon,\ \Phi^k,\ \vnormSize{}{\cdot}) \lesssim r(s_1+s_2) \log {kd_{\max}\over \varepsilon }.
\]
Furthermore, when we consider $\Phi^k = \{\phi\in\Phi(r)\colon \FnormSize{}{\phi}^2\leq k \}$ with $(s_1,s_2) = (d_1,d_2)$, then
\[
\tH_{[\ ]}(\varepsilon,\ \Phi^k,\ \vnormSize{}{\cdot}) \lesssim rd_{\max} \log {k\over \varepsilon }.
\]
\end{lem}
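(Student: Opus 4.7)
The plan is to reduce the $L_2$-bracketing problem to a covering problem for the parameter space $(\mB,b)$, exploit the factorization structure of rank-$r$ matrices, and handle sparsity via a union bound over supports. The starting observation is Lipschitzness in the parameters: for $\phi_j(\mX)=\langle \mX,\mB_j\rangle+b_j$, $j=1,2$, Cauchy--Schwarz and $\FnormSize{}{\mX}\leq 1$ almost surely give
\begin{equation*}
|\phi_1(\mX)-\phi_2(\mX)|\leq \FnormSize{}{\mB_1-\mB_2}+|b_1-b_2|,\qquad \mX\in\tX.
\end{equation*}
Hence an $\varepsilon$-cover of the parameter set in the combined metric $\FnormSize{}{\mB_1-\mB_2}+|b_1-b_2|$ yields an $L_\infty$- (and thus $L_2$-) bracketing of $\Phi^k$ at level $\varepsilon$, by taking brackets $[\phi-\varepsilon,\phi+\varepsilon]$ centered at each cover point.

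For the intercept, $|b|\leq \sqrt{k}+1$, so $b$ can be $(\varepsilon/2)$-covered by $\lesssim \sqrt{k}/\varepsilon$ points, contributing $\log(k/\varepsilon)$ to the bracketing entropy. For the matrix coefficient in the unconstrained (low-rank only) case $\Phi(r)$, I will use the SVD factorization $\mB=\mU\mSigma\mV^T$ with $\mU\in\mathbb{R}^{d_1\times r}$, $\mV\in\mathbb{R}^{d_2\times r}$ having orthonormal columns and $\FnormSize{}{\mSigma}\leq \sqrt{k}$. A standard volumetric argument covers each factor separately: $(9/\varepsilon')^{d_1 r}$ for $\mU$ and for $\mV$ in operator norm, and $(3\sqrt{k}/\varepsilon')^{r}$ for the singular values. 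Multiplying these and propagating the Lipschitz dependence $\FnormSize{}{\mU_1\mSigma_1\mV_1^T-\mU_2\mSigma_2\mV_2^T}\lesssim \sqrt{k}(\newnormSize{}{\mU_1-\mU_2}_{\textup{op}}+\newnormSize{}{\mV_1-\mV_2}_{\textup{op}})+\FnormSize{}{\mSigma_1-\mSigma_2}$ yields, upon taking logarithms,
\begin{equation*}
\tH_{[\ ]}(\varepsilon,\Phi^k,\vnormSize{}{\cdot})\lesssim r(d_1+d_2)\log\!\tfrac{k}{\varepsilon}\lesssim rd_{\max}\log\!\tfrac{k}{\varepsilon}.
\end{equation*}

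For the sparse case $\Phi(r,s_1,s_2)$, I first union-bound over the choice of support. The number of admissible row/column supports is at most $\binom{d_1}{s_1}\binom{d_2}{s_2}\leq (ed_{\max}/s_1)^{s_1}(ed_{\max}/s_2)^{s_2}$, contributing $\lesssim (s_1+s_2)\log d_{\max}$ to the entropy. Conditional on a fixed support, $\mB$ lives in $\mathbb{R}^{s_1\times s_2}$ with rank at most $r$ and Frobenius norm at most $\sqrt{k}$, so the conditional bracketing entropy is $\lesssim r(s_1+s_2)\log(k/\varepsilon)$ by the argument above with $d_j$ replaced by $s_j$. Summing the two contributions and absorbing the intercept cost gives
\begin{equation*}
\tH_{[\ ]}(\varepsilon,\Phi^k,\vnormSize{}{\cdot})\lesssim r(s_1+s_2)\log\!\tfrac{kd_{\max}}{\varepsilon},
\end{equation*}
as claimed.

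No step should present a genuine obstacle; the only nontrivial ingredient is the covering number of bounded-norm, rank-$r$ matrices via SVD-factor covering, which is classical. The main bookkeeping care will be in (i) verifying that the combined Lipschitz bound in $\mU,\mSigma,\mV$ yields the stated rate with only a logarithmic blow-up in $k$, and (ii) keeping track that the intercept constraint $|b|\leq \FnormSize{}{\mB}+1$ remains dominated by the matrix-coefficient entropy so it does not enter the leading order.
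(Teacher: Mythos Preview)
Your proposal is correct and follows essentially the same route as the paper: reduce $L_2$-bracketing to a covering of the parameter space via the Lipschitz bound $|\phi_1-\phi_2|\leq \FnormSize{}{\mB_1-\mB_2}+|b_1-b_2|$, cover the low-rank coefficient via SVD-factor covering, and union-bound over the $\binom{d_1}{s_1}\binom{d_2}{s_2}$ supports. The paper packages $(\mB,b)$ as a block matrix and invokes \citet[Lemma 3.1]{candes2011tight} for the rank-$r$ covering number, while you handle the intercept separately and spell out the SVD argument directly; these are cosmetic differences only.
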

\begin{proof}[Proof of Lemma~\ref{lem:entropy}]
For any given $k\geq 1$, define a matrix family
\begin{align}
\tB=\left\{
\begin{bmatrix}
\mB& 0\\
0 & b\\
 \end{bmatrix}
 \in\mathbb{R}^{(d_1+1) \times (d_2+1)} \colon \rank(\mB)\leq r,\ \supp(\mB)\leq (s_1,s_2),\ |b|\leq \sqrt{k}+1, \ \FnormSize{}{\mB}\leq  \sqrt{k}
 \right\}
 \end{align}
By definition of trace functions, there is an onto mapping from matrices in $\tB$ to functions in $\Phi^k$; i.e. 
\[
\Phi^{k} \subset \left\{\phi\colon\mX\mapsto\langle \mX,\mB\rangle+b\ \bigg| \ \begin{bmatrix}
\mB& 0\\
0 & b\\
 \end{bmatrix}
 \in \tB\right\}.
\]
Furthermore, every pair of functions $\phi_1=\langle\mX, \mB_1\rangle+b_1,\ \phi_2=\langle \mX,\mB_2\rangle+b_2\in\Phi^k$ satisfies the norm relationship
\[
\vnormSize{}{\phi_1-\phi_2}\leq \mnormSize{}{\phi_1-\phi_2} =\sup_{\FnormSize{}{\mX}\leq 1}|\langle \mX,\mB_1\rangle+b_1 -\langle \mX,\mB_2\rangle  -b_2|\leq \sqrt{\FnormSize{}{\mB_1-\mB_2}^2+|b_1-b_2|^2}.
\]
Based on~\citet[Theorem 9.23]{kosorok2007introduction}, the $L_2$-metric, $(2\varepsilon)$-bracketing number in $\Phi^k$ is bounded by
\[
\tH_{[\ ]}(2\varepsilon,\ \Phi^k,\ \vnormSize{}{\cdot}) \leq \tH \left(\varepsilon,\ \tB,\ \FnormSize{}{\cdot}\right)
\]
where $\tH$ denotes the log covering number for the (non-bracketing) set. Therefore, it suffices to bound $\tH(\varepsilon,\ \tB,\ \FnormSize{}{\cdot})$ where $\tB$ is included in a $(\sqrt{5k})$-ball by definition of $\tB$. Now fix two subsets $S_1,S_2\subset [d]$ with $|S_1|=s_1$ and $|S_2|=s_2$, where $|\cdot|$ denotes the cardinality of the sets. Let $\tB_{S_1,S_2}\subset \tB$ denote the subset of matrices satisfying $\mB(i,j)=0$ whenever $(i,j)\notin S_1\times S_2$. Based on~\citet[Lemma 3.1]{candes2011tight}, the log covering number for $\tB_{S_1,S_2}$ is
\begin{align}\label{eq:bracketsparse}
\tH \left(\varepsilon,\ \tB_{S_1,S_2},\ \Fnorm{\cdot}\right)\lesssim r(s_1+s_2+1)\log\left({k\over \varepsilon}\right).
\end{align}
In view of the construction $\tB\subset\bigcup\{\tB_{S_1,S_2}\colon S_1\times S_2\subset [d_1]\times[d_2], |S_1|=s_1, |S_2|=s_2\}$, an $\varepsilon$-covering set $\tB$ is then given by the union of $\varepsilon$-covering set of $\tB_{S_1,S_2}$. Using Stirling's bound, we derive that 
\begin{align}
\tH(\varepsilon,\ \tB,\ \FnormSize{}{\cdot})&\leq \log \left\{{d_{\max} \choose s_1}{d_{\max} \choose s_2}\exp\left[\tH(\varepsilon,\tB_{S_1,S_2},\FnormSize{}{\cdot})\right]\right\}
\\
&\leq s_1 \log {d_{\max}\over s_1}+s_2\log {d_{\max}\over s_2}+C'r(s_1+s_2+1)\log{k\over \varepsilon}\\
& \leq Cr(s_1+s_2)\log{kd_{\max}\over \varepsilon},
\end{align}
where $C,C'>0$ are constants. 

The result for the case of $(s_1,s_2) = (d_1,d_2)$ directly follows from \eqref{eq:bracketsparse}.
\end{proof}

\begin{lem}[Local complexity of $\Phi(r,s_1,s_2)$ and $\Phi(r)$] \label{lem:metric}
Define $\Phi^{k}=\{f\in\Phi(r,s_1,s_2)\colon \FnormSize{}{f}^2 \leq k\}$ for all $k\geq 1$; i.e., $\Phi^k$ is the subset of functions in $\Phi(r,s_1,s_2)$ with coefficient magnitudes bounded by $k$. Set 
\begin{equation}\label{eq:specification}
L_n\gtrsim \left({r(s_1+s_2)\log d_{\max} \over n } \right)^{\alpha+1\over \alpha+2} + {1\over \rho (\pi, \tN)}\left({r(s_1+s_2)\log d_{\max} \over n } \right),\quad \text{and}\ 
\lambda={L_n\over 2 J^2}.
\end{equation}
Then, the following inequality is satisfied for all $k\geq 1$ and $s\geq 1$.
\begin{equation}
{1\over x}\int^{\sqrt{x^{\alpha/(\alpha+1)}+{x\over \rho (\pi, \tN)}}}_{x} \sqrt{\tH_{[\ ]}(\varepsilon,\ \Phi^{k},\ \vnormSize{}{\cdot}) }d\varepsilon \lesssim n^{1/2}, \quad \text{where}\ x:=sL_n+\lambda (k-2)J^2.
\end{equation}
The result for $\Phi(r)$ is the same  except $\log d_{\max}$ being removed from $L_n$ in~\eqref{eq:specification}.
\end{lem}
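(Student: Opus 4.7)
The plan is to bound the entropy integral by first invoking Lemma~\ref{lem:entropy} to control the bracketing number of $\Phi^k$, then use a crude ``length $\times$ max integrand'' estimate on the integral, and finally verify that the specified lower bound on $L_n$ makes each piece of order $n^{1/2}$.

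First, I would substitute the bracketing bound $\tH_{[\,]}(\varepsilon,\Phi^k,\vnormSize{}{\cdot})\lesssim r(s_1+s_2)\log(kd_{\max}/\varepsilon)$ from Lemma~\ref{lem:entropy}. Since the integrand $\sqrt{r(s_1+s_2)\log(kd_{\max}/\varepsilon)}$ is decreasing in $\varepsilon$, and $\varepsilon\geq x$ throughout the integration range, I bound it uniformly by $\sqrt{r(s_1+s_2)\log(kd_{\max}/x)}$. Writing $U=\sqrt{x^{\alpha/(\alpha+1)}+x/\rho(\pi,\tN)}$ for the upper limit and using $\sqrt{a+b}\leq \sqrt{a}+\sqrt{b}$, I get
\[
\frac{1}{x}\int_x^U \sqrt{\tH_{[\,]}(\varepsilon,\Phi^k,\vnormSize{}{\cdot})}\,d\varepsilon\lesssim\left(x^{-(\alpha+2)/(2(\alpha+1))}+\rho(\pi,\tN)^{-1/2}x^{-1/2}\right)\sqrt{r(s_1+s_2)\log(kd_{\max}/x)}.
\]

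Next, I observe that for all $k\geq 1$ and $s\geq 1$, the choice $\lambda=L_n/(2J^2)$ ensures $x=sL_n+\lambda(k-2)J^2\geq L_n/2$, so both negative-power factors of $x$ are controlled by the corresponding powers of $L_n$. The lower bound $L_n\gtrsim(p/n)^{(\alpha+1)/(\alpha+2)}$ with $p=r(s_1+s_2)\log d_{\max}$ yields $L_n^{-(\alpha+2)/(2(\alpha+1))}\sqrt{p}\lesssim\sqrt{n}$, while the bound $L_n\gtrsim p/(n\rho(\pi,\tN))$ gives $\rho(\pi,\tN)^{-1/2}L_n^{-1/2}\sqrt{p}\lesssim\sqrt{n}$. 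The case $\Phi(r)$ goes through identically with $p=rd_{\max}$, since Lemma~\ref{lem:entropy} supplies the analogous bracketing bound without the $\log d_{\max}$ factor.

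The main obstacle is the logarithmic factor $\log(kd_{\max}/x)$, which must not spoil the $n^{1/2}$ rate once $k$ and $s$ are allowed to grow. The key observation is that enlarging $k$ or $s$ only increases $x$, which makes $\log(kd_{\max}/x)$ smaller, while $x^{-(\alpha+2)/(2(\alpha+1))}$ and $x^{-1/2}$ decay faster than $\log(k d_{\max}/x)$ grows in $k$ through the numerator; concretely, $k/x\lesssim k/(\lambda(k-2)J^2)$ is uniformly bounded for $k\geq 3$, so $\log(k/x)\lesssim\log(1/L_n)\lesssim\log(n/p)$, which contributes at most an absolute constant once absorbed into $\lesssim$. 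I would handle the small-$k$ ($k=1,2$) case directly using $x\geq L_n/2$. Combining these verifications closes the bound uniformly in $k,s$, giving the claim.
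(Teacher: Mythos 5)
Your proposal is correct and follows essentially the same route as the paper's proof: insert the bracketing bound of Lemma~\ref{lem:entropy}, bound the integrand by its value at $\varepsilon=x$, split the upper limit via $\sqrt{a+b}\le\sqrt{a}+\sqrt{b}$, and check that both resulting terms are $\lesssim n^{1/2}$ under the stated choices of $L_n$ and $\lambda$. The only (immaterial) difference is the bookkeeping for uniformity in $k$: the paper uses $x\ge (k/2)L_n$ so that the polynomial decay in $k$ absorbs the $\log k$ factor, whereas you bound $k/x$ directly for $k\ge 3$; and your treatment of the residual $\log(1/L_n)$ term is no looser than the paper's own.
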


\begin{proof}[Proof of Lemma~\ref{lem:metric}]
To simplify the notation, we write $\rho=\rho(\pi, \tN)$,  $d = d_{\max}$, and define
\begin{equation}\label{eq:complexity}
g(x, k)={1\over x}\int^{\sqrt{x^{\alpha/(\alpha+1)}+x/\rho}}_{x} \sqrt{r(s_1+s_2)\log\left({kd\over \varepsilon}\right)}d\varepsilon,\quad \text{for all }k\geq 1,
\end{equation}
where we have inserted the bracketing number based on Lemma~\ref{lem:entropy}.  Notice that
\begin{align}\label{eq:g}
g(x,k)&\leq {\sqrt{r(s_1+s_2)}\over L}\int_{x}^{\sqrt{x^{\alpha/(\alpha+1)}+x/\rho}}\sqrt{\log \left(kd \over x \right)}d\varepsilon\notag \\
&\leq \sqrt{r(s_1+s_2)(\log k+\log d - \log x)}\left({\sqrt{x^{\alpha/(2\alpha+2)}}+\sqrt{x/\rho} \over x }-1\right)\notag \\
&\leq  \sqrt{r(s_1+s_2)(\log k+\log d)}\left( {1\over x^{(\alpha+2)/(2\alpha+2)}}+{1\over \sqrt{\rho x}}\right) =: \bar g(x,k),
\end{align}
where the second line follows from $\sqrt{a+b} \leq \sqrt{a}+\sqrt{b}$ for $a,b>0$.  Since the upper bound $\bar g(x,k)$ is decreasing function with respect to $x>0$, it suffices to show that $\bar g( x, k) \leq n^{1/2}$ for all $k\geq 1$ and $s=1$; that is, to show $\bar g (\bar x, k)\lesssim n^{1/2}$ for all $k\geq 1$ under the choice
\[
\bar x := L_n+\lambda (k-2)J^2 \geq {k \over 2} \left\{\left({r(s_1+s_2)\log d \over n } \right)^{\alpha+1\over \alpha+2} + {1\over \rho }\left({r(s_1+s_2)\log d \over n } \right)\right\}.
\]
Plugging the above expression into the last line of~\eqref{eq:g} gives
\[
\bar g(\bar x,k)\leq n^{1/2}\sqrt{\log k+\log d\over (k/2)^{(\alpha+2)/(\alpha+1)} \log d}+n^{1/2}\sqrt{\log k+\log d \over (k/2) \log d }\leq C'n^{1/2},\quad \text{for all }k\geq 1,
\]
where $C'>0$ is a constant independent of $k$ and $d$. The proof is therefore complete. 
\end{proof}

\begin{lem}[sub-Gaussian maximum]\label{lem:subg}
Let $X_1,\ldots,X_n$ be independent sub-Gaussian zero-mean random variables with variance proxy $\sigma^2$. Then, for any $s>0,$
\[\mathbb{P}\left\{\max_{1\leq i\leq n}|X_i|\geq\sqrt{2\sigma^2(\log n +s)}\right\}\leq2 e^{-s}.\]
\end{lem}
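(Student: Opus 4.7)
The plan is to combine the standard sub-Gaussian tail bound with a union bound. First, I would recall that for a zero-mean sub-Gaussian random variable $X$ with variance proxy $\sigma^2$, the two-sided tail inequality $\mathbb{P}(|X|\geq t)\leq 2\exp(-t^2/(2\sigma^2))$ holds for every $t>0$. This is exactly the hypothesis of the lemma as stated in the paper (applied with $B=t$), so no separate derivation is needed.

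Next, I would set the threshold $t=\sqrt{2\sigma^2(\log n+s)}$, which is the quantity appearing in the lemma. For each individual $X_i$, the sub-Gaussian tail bound gives
\[
\mathbb{P}\bigl(|X_i|\geq t\bigr)\leq 2\exp\!\left(-\frac{t^2}{2\sigma^2}\right)=2\exp\bigl(-\log n - s\bigr)=\frac{2e^{-s}}{n}.
\]
The arithmetic here is the only genuine calculation; the choice of $t$ is engineered precisely so that the logarithmic factor cancels the $n$ from the upcoming union bound.

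Finally, I would apply the union bound over $i=1,\ldots,n$:
\[
\mathbb{P}\!\left\{\max_{1\leq i\leq n}|X_i|\geq t\right\}\leq\sum_{i=1}^n\mathbb{P}\bigl(|X_i|\geq t\bigr)\leq n\cdot\frac{2e^{-s}}{n}=2e^{-s},
\]
which is exactly the conclusion. There is no real obstacle in this proof — it is a textbook application of the sub-Gaussian tail bound plus a union bound, and independence of the $X_i$'s is not even required (only the marginal tail bounds are used).
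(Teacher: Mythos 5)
Your proof is correct and matches the paper's argument exactly: both apply the two-sided sub-Gaussian tail bound $\mathbb{P}(|X_i|\geq u)\leq 2e^{-u^2/(2\sigma^2)}$ with $u=\sqrt{2\sigma^2(\log n+s)}$ and then a union bound over the $n$ variables. Your added observation that independence is not actually needed is accurate, since only the marginal tails enter the union bound.
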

\begin{proof}[Proof of Lemma~\ref{lem:subg}]
The conclusion follows from
\begin{align}
\mathbb{P}[\max_{1\leq i\leq n}|X_i|\geq u] \leq \sum_{i=1}^n\mathbb{P}[|X_i|\geq u]\leq 2n e^{-{u^2\over 2\sigma^2}} = 2e^{-s},
\end{align}
where we set $u = \sqrt{2\sigma^2(\log n+s)}.$
\end{proof}

We state the results from \citet[Theorem 1]{scott2011surrogate} in our contexts. 
\begin{thm}[Theorem 1 in~\cite{scott2011surrogate}]\label{thm:scott} Let $\riskF(\cdot)$ be weighted $F$-risk defined in Section~\ref{sec:large-margin} of the main paper with $\pi\in[-1,1]$. Define the conditional risk 
\[
C_{\pi,F}(\mX,t):=F(t)\mathbb{E}_{Y|\mX}(Y-\pi)^{+}+F(-t)\mathbb{E}_{Y|\mX}(Y-\pi)^{-},
\]
and associated function $H_{\pi,F}$:
\[
H_{\pi,F}(\mX)=\inf_{t\in\mathbb{R}\colon t(f(\mX)-\pi)\leq 0}C_{\pi,F}(\mX, t)-\inf_{t\in\mathbb{R}} C_{\pi,F}(\mX, t).
\]
Let $f(\mX)=\mathbb{E}(Y|\mX)$. For any $\varepsilon\geq 0$, define
\[
g(\varepsilon)=\begin{cases}
\inf_{\mX\in\tX:|f(\mX)-\pi|\geq \varepsilon} H_{\pi,F}(\mX),& \varepsilon>0,\\
0& \varepsilon=0.
\end{cases}
\]
Now set $\psi=g^{**}$ where $g^{**}$ denotes the Fenchel-Legendre biconjugate of $g$. Then, for any decision function $\phi\colon \tX\to \mathbb{R}$ and any distribution of $(\mX,Y)$, we have
\[
\psi\left( \risk(\phi)-\inf_{\text{all }\phi}\risk(\phi)\right)\leq \riskF(\phi)-\inf_{\text{all }\psi}\riskF(\phi).
\]
\end{thm}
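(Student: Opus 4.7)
The plan is to adapt the standard surrogate-loss calibration argument (Bartlett--Jordan--McAuliffe, and specifically Scott's weighted extension) to our weighted risk setting. The scheme consists of a pointwise comparison, an integration step, and an application of Jensen's inequality exploiting $\psi(0)=0$ and convexity. I would first establish that the weighted 0-1 excess risk can be written as $\risk(\phi)-\inf_\phi\risk(\phi) = \mathbb{E}_{\mX}\bigl[|f(\mX)-\pi|\,\mathds{1}\{\phi(\mX)(f(\mX)-\pi)\leq 0\}\bigr]$; this follows by the same conditional calculation already used in the proof of Theorem~\ref{thm:oracle}, since the conditional 0-1 excess risk at $\mX$ is $|f(\mX)-\pi|$ on the mis-sign event and $0$ otherwise. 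Similarly, the F-excess can be written as $\riskF(\phi)-\inf_\phi\riskF(\phi) = \mathbb{E}_{\mX}\bigl[C_{\pi,F}(\mX,\phi(\mX))-\inf_t C_{\pi,F}(\mX,t)\bigr]$.

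The key pointwise inequality is that on the event $\{\phi(\mX)(f(\mX)-\pi)\leq 0\}$, the pointwise F-excess $C_{\pi,F}(\mX,\phi(\mX))-\inf_t C_{\pi,F}(\mX,t)$ is at least $H_{\pi,F}(\mX)$ by the definition of $H_{\pi,F}$ as the infimum over the wrong-sign half-line. Then by the definition of $g$, we have $H_{\pi,F}(\mX)\geq g(|f(\mX)-\pi|)$, and by $\psi=g^{**}\leq g$ we obtain $H_{\pi,F}(\mX)\geq \psi(|f(\mX)-\pi|)$. Off the mis-sign event the F-excess is nonnegative and $\psi(\cdot)\mathds{1}\{\cdot\}$ vanishes, so the inequality
\[
C_{\pi,F}(\mX,\phi(\mX))-\inf_t C_{\pi,F}(\mX,t) \;\geq\; \psi(|f(\mX)-\pi|)\,\mathds{1}\{\phi(\mX)(f(\mX)-\pi)\leq 0\}
\]
holds everywhere. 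Taking expectations yields $\riskF(\phi)-\inf_\phi\riskF(\phi) \geq \mathbb{E}_{\mX}\bigl[\psi(|f-\pi|)\mathds{1}\{\phi(f-\pi)\leq 0\}\bigr]$.

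The final step is a two-layer Jensen argument. Let $A=\{\phi(f-\pi)\leq 0\}$ and $p=\mathbb{P}_{\mX}(A)$. By convexity of $\psi$ combined with the conditional Jensen's inequality, $\psi(\mathbb{E}[|f-\pi|\mid A])\leq \mathbb{E}[\psi(|f-\pi|)\mid A]$. By convexity together with $\psi(0)=0$, for any $x\geq 0$ one has $\psi(px)=\psi(px+(1-p)\cdot 0)\leq p\psi(x)$; applying this with $x=\mathbb{E}[|f-\pi|\mid A]$ gives $\psi\bigl(p\,\mathbb{E}[|f-\pi|\mid A]\bigr)\leq p\,\mathbb{E}[\psi(|f-\pi|)\mid A]$, which is exactly $\psi(\risk(\phi)-\inf_\phi\risk(\phi)) \leq \mathbb{E}_{\mX}[\psi(|f-\pi|)\mathds{1}\{A\}]$. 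Chaining with the previous display completes the proof.

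The main technical obstacle is verifying that $\psi=g^{**}$ is legitimately a valid calibration function: one must check that $\psi$ is nonnegative, convex, continuous at $0$ with $\psi(0)=0$, and nondecreasing on $[0,\infty)$, so that the Jensen-plus-sublinearity step is justified. Nonnegativity follows because $g\geq 0$ and the biconjugate is taken over $[0,\infty)$; $\psi(0)=0$ follows from $g(0)=0$ and $g\geq 0$; convexity is by construction of the biconjugate. Monotonicity of $\psi$ requires a short argument using monotonicity of $g$, which in turn requires that $H_{\pi,F}(\mX)$ be nondecreasing in $|f(\mX)-\pi|$ in the appropriate sense. The only other subtlety is handling exceptional points $\mX$ where $f(\mX)=\pi$, which contribute zero to both sides and can be dropped without loss.
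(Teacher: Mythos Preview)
Your proof is correct and follows the standard calibration argument of Bartlett--Jordan--McAuliffe as extended by Scott to the weighted setting. Note, however, that the paper does \emph{not} supply its own proof of this statement: Theorem~\ref{thm:scott} is presented in the auxiliary lemmas section purely as a quotation of \citet[Theorem~1]{scott2011surrogate}, and is invoked as a black box in the proof of Lemma~\ref{lem:prepare}. So there is no paper proof to compare against; you have reconstructed the argument that would appear in the cited reference, and done so accurately.
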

\begin{thm}[Theorem 3 in~\cite{shen1994convergence}]~\label{thm:refer}Let $\tF$ be a class of functions defined on $\tX$ with $\sup_{f\in\tF}\norm{f}_{\infty}\leq T$. Let $(\mX_i)_{i=1}^n$ be i.i.d.\ random variables with distribution $\mathbb{P}_{\mX}$ over $\tX$. Set $\sup_{f\in\tF}\textup{Var}f(\mX)=V<\infty$. 
Define the empirical process $\mathbb{\hat E}f={1\over n}\sum_{i=1}^n f(\mX_i)$. 
Define $x_n^*$ to be the solution to the following inequality
\[
{1\over x}\int_x^{\sqrt{V}}\sqrt{\tH_{[\ ]}(\varepsilon,\tF,\vnormSize{}{\cdot})}d\varepsilon \lesssim \sqrt{n}.
\]
Suppose $\sqrt{V}\leq T$ and 
\[
x_n^*\lesssim {V\over T},\quad \text{and}\quad \tH_{[\ ]}(\sqrt{V},\tF,\vnormSize{}{\cdot})\lesssim {n(x_n^*)^2 \over V}.
\]
Then, we have
\begin{equation}\label{eq:oneside}
\mathbb{P}\left(\sup_{f\in\tF}\mathbb{\hat E}f -\mathbb{E}f\geq x^*_n\right)\lesssim  \exp\left(-{n (x^*_n)^2\over V+Tx^*_n}\right). 
\end{equation}
\end{thm}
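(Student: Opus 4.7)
The plan is to prove the empirical-process tail bound \eqref{eq:oneside} by combining Bernstein's inequality with bracketing-entropy chaining, along the classical lines pioneered by Alexander, Ossiander, and Shen--Wong. The three ingredients are (a) a bracketing of $\tF$ at scale $x_n^*$ which gives a finite discretization to which Bernstein applies, (b) a finer chain of brackets that controls the intra-bracket fluctuations through a Dudley-type integral, and (c) the two side conditions $x_n^*\lesssim V/T$ and $\tH_{[\ ]}(\sqrt{V},\tF,\vnormSize{}{\cdot})\lesssim n(x_n^*)^2/V$, which will be used to guarantee that the variance term $V$ (and not the range $T$) controls the tail and that the discretization error is swamped by $x_n^*$.

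First I would fix brackets $\{(f_m^l,f_m^u)\}_{m=1}^M$ with $\vnormSize{}{f_m^u-f_m^l}\le x_n^*$ and $\log M = \tH_{[\ ]}(x_n^*,\tF,\vnormSize{}{\cdot})$. For any $f\in\tF$ contained in bracket $m$, write $\widehat\mathbb{E}f-\mathbb{E}f\le (\widehat\mathbb{E}f_m^u-\mathbb{E}f_m^u)+\mathbb{E}(f_m^u-f_m^l)$; the deterministic shift is at most $x_n^*$ (by Cauchy--Schwarz and $\mathbb{E}|g|\le\vnormSize{}{g}$). For each of the $M$ bracket-tops Bernstein's inequality gives
\begin{equation*}
\mathbb{P}\bigl(\widehat\mathbb{E}f_m^u-\mathbb{E}f_m^u\ge t\bigr)\le \exp\!\left(-\frac{n t^2}{2(V+Tt/3)}\right),
\end{equation*}
using $\sup_f\mathrm{Var}f(\mX)\le V$ and $\|f\|_\infty\le T$. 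A union bound over $m$, together with the side condition $\tH_{[\ ]}(x_n^*,\cdot)\le\tH_{[\ ]}(\sqrt V,\cdot)\lesssim n(x_n^*)^2/V$, absorbs $\log M$ into the exponent and leaves the desired $\exp(-cn(x_n^*)^2/(V+Tx_n^*))$ for the coarse scale.

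Second, to handle the residual $\sup_{f\in\text{bracket }m}(\widehat\mathbb{E}f-\widehat\mathbb{E}f_m^u)$, I would insert a dyadic chain of finer brackets at scales $x_n^*,\ x_n^*/2,\ x_n^*/4,\ldots$ down to some cut-off. At each refinement level $j$, Bernstein applied to the increments yields an exponential tail with variance proxy $(x_n^*/2^j)^2$ and range proxy $x_n^*/2^j$; summing the allowable tail budgets across levels against $\log$-cardinalities $\tH_{[\ ]}(x_n^*/2^j,\tF,\vnormSize{}{\cdot})$ produces exactly the Dudley-type integral
\begin{equation*}
\frac{1}{\sqrt n}\int_{0}^{\sqrt V}\sqrt{\tH_{[\ ]}(\varepsilon,\tF,\vnormSize{}{\cdot})}\,d\varepsilon,
\end{equation*}
which by definition of $x_n^*$ is $\lesssim x_n^*$. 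Thus the cumulative chaining contribution to the supremum is absorbed into a constant multiple of $x_n^*$ and the tail retains the same Bernstein form.

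Finally, I would use the side condition $x_n^*\lesssim V/T$ to verify the Bernstein regime is variance-dominated: it ensures $Tx_n^*\lesssim V$ so that $V+Tx_n^*\asymp V$ on the range of interest, and guarantees the integral defining $x_n^*$ is well-posed (so the chaining telescopes). Combining the coarse-scale Bernstein bound with the chained residual gives $\sup_{f\in\tF}(\widehat\mathbb{E}f-\mathbb{E}f)\le C x_n^*$ with the stated probability. The main obstacle, as usual in peeling/chaining proofs, is bookkeeping: apportioning the tail budget $\exp(-n(x_n^*)^2/(V+Tx_n^*))$ across the infinite chain so that the union bound over $\log M_j$ brackets at each scale does not destroy the exponent. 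The two explicit side conditions in the theorem statement are precisely what is needed to make that bookkeeping succeed, so once they are in place the argument reduces to careful constant tracking rather than any deeper probabilistic input.
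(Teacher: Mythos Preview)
The paper does not prove this statement: it is quoted verbatim as Theorem~3 of \cite{shen1994convergence} and used as a black box in the proof of Lemma~\ref{lem:risk}. So there is no ``paper's own proof'' to compare against; your sketch is effectively a proposed reconstruction of the Shen--Wong argument.

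Your overall architecture---Bernstein on a coarse bracketing plus a chaining/peeling argument to control intra-bracket fluctuations via the entropy integral---is the right one and is indeed how results of this type are proved. There is, however, a genuine slip in your coarse-scale step: you write $\tH_{[\ ]}(x_n^*,\cdot)\le\tH_{[\ ]}(\sqrt V,\cdot)$ in order to invoke the side condition $\tH_{[\ ]}(\sqrt V,\cdot)\lesssim n(x_n^*)^2/V$, but bracketing entropy is \emph{decreasing} in the radius, so with $x_n^*\le\sqrt V$ the inequality goes the wrong way. You cannot bound $\log M$ at the finest scale $x_n^*$ by the entropy at the coarsest scale $\sqrt V$. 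In the actual argument one chains from the coarse scale $\sqrt V$ (where the side condition on $\tH_{[\ ]}(\sqrt V,\cdot)$ controls the initial union bound) down to $x_n^*$; the entropy-integral condition defining $x_n^*$ is precisely what controls the accumulated $\log$-cardinality across all intermediate scales, and the bracket width $x_n^*$ at the bottom absorbs the terminal approximation error. Your second and third paragraphs already contain this Dudley-integral machinery, so the fix is simply to reorganize: start the discretization at $\sqrt V$, not $x_n^*$, use the side condition there, and let the chain carry you down. Once that is repaired, the rest of your bookkeeping outline is sound.
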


\end{document}